\documentclass{article}

\usepackage[preprint]{neurips_2026}

\usepackage[utf8]{inputenc} 
\usepackage[T1]{fontenc}    
\usepackage{url}            
\usepackage{booktabs}       
\usepackage{amsfonts}       
\usepackage{nicefrac}       
\usepackage{microtype}      
\usepackage{xcolor}         

\usepackage{hyperref}       
\hypersetup{
	colorlinks   = true, 
	urlcolor     = blue, 
	linkcolor    = blue, 
	citecolor   = blue 
}
\usepackage{multirow}
\usepackage{csquotes}

\usepackage[noend]{algpseudocode}
\usepackage{amsthm}
\usepackage{chngcntr}
\usepackage{apptools}
\newtheorem{theorem}{Theorem}
\newtheorem{lemma}[theorem]{Lemma}

\newtheorem{Corollary}{Corollary}[theorem]
\newtheorem*{remark}{Remark}
\newtheorem*{lemma*}{Lemma}

\newtheorem*{Proposition*}{Proposition}
\newtheorem{assumption}{Assumption}
\newtheorem{definition}{Definition}
\usepackage{graphicx}
\usepackage{graphics}
\usepackage{parskip}
\usepackage{amsmath}
\usepackage[ruled,vlined, linesnumbered, algo2e]{algorithm2e}
\usepackage{algorithm}

\usepackage{subfig}

\usepackage{dsfont}
\usepackage{natbib}
\usepackage[leftcaption]{sidecap}

\usepackage[flushleft]{threeparttable}
\usepackage{array,booktabs,makecell}

\usepackage{amsmath,amssymb}
\usepackage[flushleft]{threeparttable}
\usepackage{array,booktabs,makecell}
\usepackage[font=small]{caption}

\usepackage{wrapfig}
\usepackage[normalem]{ulem}
\usepackage{soul}
\usepackage[normalem]{ulem}               
\newcommand\redout{\bgroup\markoverwith
{\textcolor{red}{\rule[0.5ex]{2pt}{0.8pt}}}\ULon}

\makeatletter
\renewcommand{\Indentp}[1]{%
  \advance\leftskip by #1
  \advance\skiptext by -#1
  \advance\skiprule by #1}%
\renewcommand{\Indp}{\algocf@adjustskipindent\Indentp{\algoskipindent}}
\renewcommand{\Indm}{\algocf@adjustskipindent\Indentp{-\algoskipindent}}
\makeatother

\title{Bandit Learning in General Open Multi-agent Systems}

\author{%
  Mengfan Xu\\
  Mechanical and Industrial Engineering \\
  University of Massachusetts Amherst \\
  \texttt{mengfanxu@umass.edu} \\
}

\begin{document}

\maketitle

\begin{abstract}
Recent developments in digital platforms have highlighted the prevalence of
open systems, where agents can arrive and depart over time. While bandit
learning in open systems has recently received initial attention, existing
work imposes structural assumptions that are frequently violated in practice. A
learning paradigm for general open systems creates fresh challenges: newly
arriving agents induce endogenous non-stationarity; agent patterns determine
how quickly information accumulates; and new agents make regret scale further
with the time horizon. To this end, we formulate a unified open-system bandit
problem with general dynamics, including heterogeneous rewards and general
agent patterns. We introduce new concepts to capture the inherent complexities:
the \emph{pre-training degree} of new agents quantifies how much information an
agent carries upon entry, \emph{stability} measures the impact of new agents on
the system, and \emph{global dynamic regret} compares the cumulative expected
reward of all active agents with that of the varying optimal arms. We develop
certified global-UCB learning methodologies with provable guarantees. Our
regret bounds reveal that entry uncertainty enters
linearly via the pre-training degree, while in stable regimes, regret is
governed by the time needed to identify a persistent optimal
arm, as well as by the agent patterns. We further show that these dependencies
are tight via lower bounds in hard instances.

\end{abstract}

\section{Introduction}

Multi-armed bandits \citep{auer2002finite, auer2002nonstochastic} have been extensively studied due to their mathematical elegance and broad applicability. In the classical setting, a decision-maker sequentially selects one action from a finite set of actions and receives the reward (feedback) of the chosen action, with the goal of maximizing the cumulative reward over a fixed time horizon. This is equivalently formulated as minimizing regret, namely, the difference between the cumulative reward of the best action in hindsight and the actual cumulative reward obtained. The most fundamental challenge is the trade-off between exploration, which tries different arms to infer their reward distributions, and exploitation, which selects the currently best-known action to secure immediate reward. Seminal algorithms that balance this trade-off through statistical estimation and uncertainty quantification, such as Upper Confidence Bounds \citep{auer2002finite} and Thompson Sampling \citep{thompson1933likelihood}, have enabled applications in e-commerce \citep{xiang2022multi, jiang2021multi, barraza2020introduction, zhu2023scalable} and, more recently, large language models (LLMs) \citep{atalar2026neural, poon2026online, bouneffouf2026multi, bouneffouf2024tutorial}.

With the rapid and exciting development of the Internet and modern technology, these applications have evolved while also introducing new challenges. In particular, modern organizations increasingly rely on multiple interacting decision-makers, ranging from distributed services \citep{zhu2021multi, yang2024multi} and store branches \citep{mes2018multi, khatua2011multi} to teams of agents in agentic AI systems \citep{he2025llm, chen2025optima}. The classical bandit learning framework no longer directly applies in such settings, and multi-agent multi-armed bandits (MA-MABs) emerge as a natural and successful remedy \citep{landgren2016distributed,landgren2016distributed_2,landgren2021distributed,zhu2020distributed,martinez2019decentralized,agarwal2022multi,wangx2022achieving,wangp2020optimal,li2022privacy,sankararaman2019social,chawla2020gossiping,xu2023decentralized,xu2024decentralized,xu2025multi}. In this framework, decision-makers are distributed over a graph, where each node represents an agent. At each round, an agent selects an action, observes the local reward of the chosen arm, communicates its information to neighboring agents on the graph, and incorporates the received information into its statistical estimators. The objective is to maximize the cumulative global reward, which aggregates the local rewards of the same arm across agents. This interplay between local and global information introduces additional challenges in consensus and communication, and has motivated several seminal methods. However, it is commonly assumed in MA-MAB and related bandit-learning frameworks that the system is closed, in the sense that the set of decision-makers remains fixed throughout the learning process. 

This significantly limits the generality of these learning paradigms. In practice, data centers \citep{shehabi2018data} and their tenants \citep{shieh2011sharing} may expand or shut down depending on the availability of natural resources such as water and electricity; store branches may open or close according to the business needs of a wholesale company \citep{clapp2014expansions}; and LLM agents may be added to or removed from an agentic workflow by users \citep{liu2026mas, wang2025agentdropout}. What these examples share is that the decision-makers, or agents, evolve over time in an open system, often referred to as an open multi-agent system (OMAS) \citep{deplano2026optimization, liu2026algorithm}. This reveals an emerging gap between these two areas. Very recently, a line of work has begun to investigate bandit learning in open systems in an effort to bridge this gap. In these works, the agent set is allowed to change over time, and the objective is defined only with respect to the agents that are currently active \citep{rosenski2016multi, trinh2021high}. While valuable, most of these works assume that the active agents are drawn from a fixed pool of bounded size. Under such an assumption, the resulting regret notion does not fully address scalability, since the regret remains bounded by that of the underlying fixed pool.

Notably, \citep{xu2026open} has recently overcome this limitation by allowing
the system to evolve according to a stochastic process and to admit an unbounded
set of agents. However, two major assumptions remain. First, the reward
distribution of a given arm is assumed to be identical across agents, namely,
homogeneous. Second, the agent population is modeled through a specific
Poisson-type arrival and departure pattern. Emerging applications may readily
violate both assumptions. For example, store branches may exhibit different
reward distributions for the same product across locations, where the reward
may correspond to the product return rate and therefore vary by location, such
as a large city versus a small city.  In addition, agent arrivals and
departures may be seasonal, bursty, or driven by operational events, rather than
following a fixed Poisson pattern. These motivate a unified open-system framework that accommodates heterogeneous reward distributions  while separating the learning problem from any particular agent-arrival model. A thorough literature review is provided in Appendix~\ref{app:rrelaed_work}.
%


Removing these assumptions introduces both conceptual and technical challenges.
The difficulty does not come from reward heterogeneity or agent dynamics alone,
but from their interaction: reward heterogeneity determines how each agent
affects the global reward profile and how new agents learn from the system's
interaction history, while the arrival/departure pattern determines when and how
strongly such effects enter the system. Consequently, even if each 
agent's reward distribution is stationary, the step-wise globally optimal arm may 
change over time because the active population changes. This creates an 
endogenous form of non-stationarity and calls for new formulations that capture 
these complexities, including the entry information carried by newly arriving 
agents and the stability of the globally optimal arm, together with regret 
definitions that compare the algorithm against a time-varying global benchmark.

This benchmark non-stationarity also changes the algorithmic problem. Classical 
multi-agent bandit algorithms rely on pooling samples over time to identify a 
fixed optimal arm. In general open systems, however, newly arriving agents may 
alter the global optimum, and their impact depends jointly on their reward 
vectors, their entry information, and the agent pattern governing arrivals and 
departures. The degree of entry information is especially important under reward 
heterogeneity: information that is accurate for one agent or subgroup may be 
biased for another, so transferring estimates without quantifying their error can 
mislead the global decision. Agent patterns further determine how quickly 
information accumulates, how large the active population becomes, and whether the 
impact of future arrivals eventually becomes negligible.

These interactions motivate a stability perspective on open heterogeneous 
systems. In the worst case, each arrival may introduce enough uncertainty or shift 
the aggregate reward vector enough to create a new learning problem. In more 
favorable regimes, the population may become sufficiently large or 
compositionally stable so that new arrivals no longer change the globally optimal 
arm. Thus, system performance depends on whether the combined effect of reward 
heterogeneity, entry-information quality, and agent patterns produces persistent 
changes in the global benchmark or only transient perturbations. Characterizing 
this distinction is essential for understanding when open systems are 
intrinsically hard and how their regret can be controlled. Hence, we ask

\begin{center}
\emph{How to characterize the intrinsic complexities of general open systems; how to design learning methodologies that address these complexities; and how system performance varies with them?} 
\end{center}

\subsection{Main Contributions}

We answer this question firmly through the following contributions.

\textbf{Conceptual paradigm.}
We formulate a unified open MA-MAB problem with heterogeneous rewards
and dynamic populations. Agents may arrive and depart according to general
patterns, with Poisson dynamics as a special case, and the same arm may have
different reward means across agents. To capture the intrinsic complexity behind
this unified regime, we introduce new concepts. First, we introduce \emph{global
dynamic regret}, where the comparator at each round is the arm that
maximizes the aggregate, equivalently average, reward over the currently active
agents. Because the active population changes over time, this benchmark may
change even when individual rewards are stationary. Second, we introduce the \emph{pre-training error} \(P_m\) and \emph{pre-training degree}
\(D_m=1/P_m\) for newly arriving agents, which quantify the accuracy of the
reward information an agent carries upon entry from pre-training, side
information, neighboring agents, or model-based transfer. Finally, we introduce a \emph{stability} measure and identify a
\emph{stable regime} under an alternative form of dynamic regret that separates
the effect of entering agents on the system from the growth of the system. In
this regime, future entry errors may remain nonzero but are no longer
decision-relevant once the stable arm has been identified.

\textbf{Methodology.}
We develop a certified global-UCB framework for heterogeneous open systems. The
algorithm first initializes newly arriving agents through certified transfer:
an agent may inherit information from pre-training, model-based parameter
transfer, or cluster-level estimates only when the associated entry error can be
bounded. It then aggregates the certified local estimates into global arm-value
estimates and selects a common arm using a global-UCB index that separates
continuing-agent statistical uncertainty from arrival uncertainty. After each
pull, agents update their local estimates and certificates. This design adapts
to scenarios while preserving the
same global decision rule.

\textbf{Regret analyses.}
We prove an upper bound for \(R_T\) of order
\(O(M_0\log T+\sum_{t=1}^T\sum_{m\in\mathcal{A}_t}P_m)\), equivalently
\(O(M_0\log T+\sum_{t=1}^T|\mathcal{A}_t|/D_t)\), showing how regret depends on
initial learning, arrival patterns, and pre-training quality. For the stable-arm regime, we prove that \(\bar R_T\le \tau\) on the good event
and \(\bar R_T\le \tau+\delta(T-\tau)\) unconditionally, where \(\delta\) is
the failure probability of stability and identification, and \(\tau\) is the
time at which the system becomes stable. We also provide
hard-instance lower bounds showing that the entry-error term is unavoidable in
pivotal-arrival instances and that zero-knowledge arrivals can force linear
regret. Finally, linear, nonlinear, clustered, and zero-knowledge models
instantiate these results.

\section{Problem Formulation}
\label{sec:ma-mab}

We study an open cooperative multi-agent multi-armed bandit problem with \(K\)
arms, indexed by \([K]:=\{1,2,\ldots,K\}\), over a horizon of \(T\) rounds.
Rounds are indexed by \(t\in[T]\), and \(t=0\) is used only for the initial
system state. At the beginning of each round \(t\in[T]\), arrivals and
departures are realized, producing the active set \(\mathcal{M}_t\) and the
communication graph \(G_t\). Agents in \(\mathcal{M}_t\) then select arms,
observe rewards, and communicate over a complete \(G_t\). Here \(G_t=(\mathcal{V}_t,\mathcal{E}_t)\), where
\(\mathcal{V}_t=\mathcal{M}_t\). For each \(m\in\mathcal{M}_t\), let
\(\mathcal{N}_m(t)=\{n\in\mathcal{M}_t:(m,n)\in\mathcal{E}_t\}\) denote its
neighbor set. 


\textbf{Reward model.}
If agent \(m\in\mathcal{M}_t\) pulls arm \(i\in[K]\) at round \(t\), it receives
reward \(r_m^i(t)\) from a stationary distribution with mean
\(0 \leq \mu_m^i=\mathbb{E}[r_m^i(t)] \leq 1\). Different from the homogeneous setting \citep{xu2026open}, we allow
\(\mu_m^i\neq \mu_n^i\) for \(m\neq n\). The homogeneous model is recovered when
\(\mu_m^i=\mu^i\) for all agents \(m\) and arms \(i\). As is standard in the
bandit literature, we assume that the rewards are sub-Gaussian. The unnormalized
and normalized global reward values, computed across agents, are
\(V_t(i):=\sum_{m\in\mathcal{M}_t}\mu_m^i\) and
\(\bar V_t(i):=M_t^{-1}\sum_{m\in\mathcal{M}_t}\mu_m^i\), respectively.

\textbf{Agent patterns.}
We consider a general open system. A common special case is a Poisson
arrival-departure pattern \citep{xu2026open}, where \(A_t\sim\mathrm{Poisson}(\lambda_A)\) and
\(D_t\sim\mathrm{Poisson}(\lambda_D)\) denote the total arrivals and departures
at round \(t\). Then \(M_t=(M_{t-1}+A_t-D_t)_+\), with initial population size
\(M_0\). This reduces to the classical fixed-population setting when
\(\lambda_A=\lambda_D=0\).

\textbf{Arm selection and observations.}
Let \(a_m(t)\in[K]\) denote the arm selected by agent \(m\) at round \(t\). The
local pull count of arm \(i\) by agent \(m\) is
\(
n_{m,i}(t)=\sum_{s=1}^t
\mathbf{1}\{m\in\mathcal{M}_s,\ a_m(s)=i\},
\)
and the number of observed neighbor pulls of arm \(i\) is
\(
N_{m,i}(t)=
\sum_{s=1}^t
\sum_{n\in\mathcal{N}_m(s)}
\mathbf{1}\{n\in\mathcal{M}_s,\ a_n(s)=i\}.
\)

\paragraph{Pre-training degree/error.}
A key feature of general open systems is that newly arriving agents may not
enter with zero information. Instead, an arriving agent may already carry an
initial estimator that is informative about its own local reward means. To
capture this effect, we introduce the \emph{degree of pre-training}.

Let \(\mathcal{A}_t:=\mathcal{M}_t\setminus\mathcal{M}_{t-1}\) denote the set of
agents that arrive at round \(t\). For an agent \(m\in\mathcal{A}_t\), let
\(T_m^a=t\) be its arrival time, and let \(\hat{\mu}_i^m(T_m^a-1)\) denote the
arm-\(i\) estimator available to agent \(m\) upon entry. This estimator may come
from pre-training, side information, or inherited knowledge. We define the
\emph{pre-training degree} and \emph{pre-training error} of agent \(m\) by
\[
D_m=\nicefrac{1}{P_m},
\qquad
P_m:=\max_{i\in[K]}
\left|
\hat{\mu}_i^m(T_m^a-1)-\mu_m^i
\right|.
\]
We write \(D_m=\infty\) when \(P_m=0\). Thus, \(P_m\) measures the worst-case
initialization error of agent \(m\) across all arms. We also define the
instantaneous pre-training degree at round \(t\) as
\[
D_t=\nicefrac{1}{P_t},
\qquad
P_t:=\max_{m\in\mathcal{A}_t}P_m,
\]
with the convention that \(P_t=0\) and \(D_t=\infty\) if
\(\mathcal{A}_t=\emptyset\).

\begin{remark}[Pre-trained agents]
We introduce the concept of \emph{pre-trained agents} in MA-MAB and quantify it
through the accuracy of the information agents carry upon entry. This notion
captures information accuracy and information flow in open systems, and is
motivated by pre-trained models in the LLM community. It also provides a path
for extending the MA-MAB paradigm to LLM-agent systems.
\end{remark}

\begin{remark}
Our notion of pre-training error is closely related in spirit to the
information-theoretic view of nonstationary bandits in \citep{min2023information}.
In that work, the difficulty of learning in a changing environment is
characterized by the entropy rate of the optimal-action process and by the
information ratio, which measures the price paid by an algorithm to acquire
information. In our open multi-agent setting, the source of nonstationarity is
different: the global optimal arm may change because newly arriving agents alter
the aggregate reward vector. The pre-training error \(P_m\) measures the
residual uncertainty an arriving agent carries about its own reward means, while
the pre-training degree \(D_m=1/P_m\) measures the amount of useful information
it brings into the system. Thus, the cumulative entry-error term can be
interpreted as an entry-uncertainty rate: it measures how much unresolved
information is injected into the system by arrivals. This is analogous to the
role of entropy rate in \citep{min2023information}, but is tailored to open
systems where uncertainty enters through agent arrivals rather than exogenous
changes in a latent environment state.
\end{remark}

\paragraph{Stability.}
We first quantify how fast the global reward values change due to the entry of
new agents:
\(
   \max_{i\in[K]}
   |V_t(i)-V_{t-1}(i)|,
   \max_{i\in[K]}
   |\bar V_t(i)-\bar V_{t-1}(i)|.
\)
Here \(V_t(i):=\sum_{m\in\mathcal{M}_t}\mu_m^i\) and
\(\bar V_t(i):=M_t^{-1}V_t(i)\). Since \(M_t\) is common across arms at round
\(t\), the maximizer of \(\bar V_t(i)\) is the same as the maximizer of
\(V_t(i)\) at time $t$. We then quantify whether the globally optimal arm changes from
round \(t-1\) to round \(t\) through the \emph{stability}  indicator
\[
    \mathbf{1}\{
    \arg\max_{i\in[K]} \bar V_t(i)
    =
    \arg\max_{i\in[K]} \bar V_{t-1}(i)
   \}.
\]
Equivalently, one may replace \(\bar V_t(i)\) with \(V_t(i)\) in the above
indicator. This stability notion is \emph{unique} to our open systems with heterogeneous
rewards: in a closed system the active population is fixed, while in a
homogeneous open system all agents contribute the same arm-wise means, so
arrivals alone do not change the identity of the globally optimal arm.

\textbf{Objective.}
Under heterogeneous rewards, the best arm depends on the active population. We define the instantaneous globally optimal arm as
\(i_t^\star\in\arg\max_{i\in[K]}V_t(i)\), equivalently,
\(i_t^\star\in\arg\max_{i\in[K]}\bar{V}_t(i)\). Thus, \(i_t^\star\) maximizes both the unnormalized and normalized global expected reward over the currently active agents, and may vary over time even when each individual reward distribution is stationary. We define the unnormalized and normalized group regret as
\[
R_T
=
\mathbb{E}[
\sum_{t=1}^T
\sum_{m \in\mathcal{M}_t}(
V_t(i_t^\star)-V_t(a_m(t))
), \qquad 
\bar R_T
=
\mathbb{E}[
\sum_{t=1}^T
\sum_{m \in\mathcal{M}_t}(
\bar{V}_t(i_t^\star)-\bar{V}_t(a_m(t))
)
].
\]

Our goal is to design a cooperative learning policy that minimizes the group regret despite reward heterogeneity, population openness, and dynamic graph-constrained communication.

\section{Methodology}

We propose a new method, summarized in Algorithm~\ref{alg:general-open-mamab-global}. 
The method matches the global dynamic regret benchmark by estimating global arm
values rather than optimizing agents' local reward estimates separately. It has
three modules: \textsc{CertifiedArrivalTransfer},
\textsc{GlobalValueAggregation}, and \textsc{LocalUpdateAndBroadcast}.

\begin{algorithm2e}[h]
\caption{General Open MA-MAB with Certified Arrival Transfer and Global-UCB}
\label{alg:general-open-mamab-global}
\DontPrintSemicolon
\KwIn{Arms \(K\), horizon \(T\), constants \(C_1>0,\beta>0\), communication tolerance \(\epsilon_t^{\mathrm{comm}}\)}
\KwOut{Common arm choices \(\{a_t\}_{t\in[T]}\) and agent actions \(a_m(t)=a_t\)}

Initialize \(n_{m,i}(0)=0\), \(\hat{\mu}_{m,i}(0)=0\), and
\(\rho_{m,i}(0)=1\) for all \(m\in\mathcal{M}_0\), \(i\in[K]\)\;

\For{\(t=1,2,\ldots,T\)}{
Observe \(\mathcal{M}_t\) and \(G_t=(V_t,E_t)\)\;
Set
\(
\mathcal{A}_t=\mathcal{M}_t\setminus\mathcal{M}_{t-1},\qquad
\mathcal{R}_t=\mathcal{M}_t\cap\mathcal{M}_{t-1},\qquad
\mathcal{D}_t=\mathcal{M}_{t-1}\setminus\mathcal{M}_t .
\)

\textsc{CertifiedArrivalTransfer}\((\mathcal{A}_t,\mathcal{R}_t,G_t)\)\;

\((\widehat V_t(i),B_t^{\mathrm{stat}}(i),E_t^A)_{i\in[K]}
\leftarrow
\textsc{GlobalValueAggregation}(\mathcal{M}_t,\mathcal{A}_t,\mathcal{R}_t,G_t)\)\;

\(
a_t
\leftarrow
\arg\max_{i\in[K]}
\left\{
\widehat V_t(i)+B_t^{\mathrm{stat}}(i)+E_t^A
\right\}.
\)

\ForEach{\(m\in\mathcal{M}_t\)}{
Set \(a_m(t)\leftarrow a_t\); Pull arm \(a_t\) and observe \(r_m^{a_t}(t)\)\;
}

\textsc{LocalUpdateAndBroadcast}\((\mathcal{M}_t,G_t,a_t)\)\;
}
\end{algorithm2e}

\paragraph{Initialization.}
Each initial agent \(m\in\mathcal{M}_0\) sets \(n_{m,i}(0)=0\),
\(\hat{\mu}_{m,i}(0)=0\), and \(\rho_{m,i}(0)=1\) for every arm \(i\in[K]\).
These quantities are updated as the active population and graph
evolve.

\paragraph{Population decomposition.}
At round \(t\), the algorithm observes the active set \(\mathcal{M}_t\) and graph
\(G_t=(V_t,E_t)\), and decomposes the population into arrivals
\(\mathcal{A}_t=\mathcal{M}_t\setminus\mathcal{M}_{t-1}\), continuing agents
\(\mathcal{R}_t=\mathcal{M}_t\cap\mathcal{M}_{t-1}\), and departures
\(\mathcal{D}_t=\mathcal{M}_{t-1}\setminus\mathcal{M}_t\). The newly arriving
agents are initialized by \textsc{CertifiedArrivalTransfer}.

\paragraph{Certified arrival transfer.}
For each \(m\in\mathcal{A}_t\), the algorithm constructs entry estimates
\(\hat{\mu}_{m,i}(t-1)\) and a certified entry-error radius \(\bar P_m\). If
certified pretrained estimates \(\{\hat{\mu}_{m,i}^{\mathrm{entry}}\}_{i\in[K]}\)
are available, it sets \(\hat{\mu}_{m,i}(t-1)=\hat{\mu}_{m,i}^{\mathrm{entry}}\)
with certificate
\(\max_{i\in[K]}|\hat{\mu}_{m,i}^{\mathrm{entry}}-\mu_m^i|\le \bar P_m\). If
certified model-based transfer is available from
\(\mathcal{N}_m(t)\cap\mathcal{R}_t\), it sets
\(\hat{\mu}_{m,i}(t-1)\leftarrow
\textsc{CertifiedTransfer}_i(m,\mathcal{N}_m(t)\cap\mathcal{R}_t)\) and assigns
the corresponding \(\bar P_m\). This transfer can be implemented by transferring
parameter estimates in linear or nonlinear parametric models, or by inheriting
cluster-level empirical estimates in clustered stochastic models. If no reliable
transfer is available, the agent is treated as zero-knowledge:
\(\hat{\mu}_{m,i}(t-1)=0\) for all \(i\), with maximal bounded certificate
\(\bar P_m=1\) under normalized rewards. Finally, the arriving agent sets
\(n_{m,i}(t-1)=0\) and \(\rho_{m,i}(t-1)=\bar P_m\) for all \(i\).

\paragraph{Global-value aggregation.}
The algorithm estimates the global value \(V_t(i):=\sum_{m\in\mathcal{M}_t}
\mu_m^i\) of each arm. The module \textsc{GlobalValueAggregation} runs a
sum-consensus or aggregation routine over \(G_t\) to compute
\(\widehat V_t(i)\leftarrow\sum_{m\in\mathcal{M}_t}\hat{\mu}_{m,i}(t-1)\). It
also computes the statistical bonus
\(B_t^{\mathrm{stat}}(i)\leftarrow\sum_{m\in\mathcal{R}_t}\rho_{m,i}(t-1)+
\epsilon_t^{\mathrm{comm}}(i)\), where \(\epsilon_t^{\mathrm{comm}}(i)\) bounds
graph-aggregation error, and the arrival bonus
\(E_t^A\leftarrow\sum_{m\in\mathcal{A}_t}\bar P_m\). Thus, the global index
separates continuing-agent statistical uncertainty, communication error, and
certified arrival uncertainty.

\begin{algorithm2e}[h]
\caption{\textsc{CertifiedArrivalTransfer}}
\label{alg:certified-arrival-transfer}
\DontPrintSemicolon
\KwIn{Arrivals \(\mathcal{A}_t\), continuing agents \(\mathcal{R}_t\), graph \(G_t\)}
\KwOut{Entry estimates \(\hat{\mu}_{m,i}(t-1)\) and certificates \(\bar P_m\) for \(m\in\mathcal{A}_t\)}

\ForEach{\(m\in\mathcal{A}_t\)}{
\uIf{agent \(m\) has certified pretrained estimates \(\{\hat{\mu}_{m,i}^{\mathrm{entry}}\}_{i\in[K]}\)}{
\ForEach{\(i\in[K]\)}{
\(\hat{\mu}_{m,i}(t-1)\leftarrow \hat{\mu}_{m,i}^{\mathrm{entry}}\)\;
}
Set certified entry radius \(\bar P_m\) such that
\(
\max_{i\in[K]}
\left|
\hat{\mu}_{m,i}^{\mathrm{entry}}-\mu_m^i
\right|
\le \bar P_m
\)\;
}
\uElseIf{a certified model-based transfer is available from \(\mathcal{N}_m(t)\cap\mathcal{R}_t\)}{
\ForEach{\(i\in[K]\)}{
\(
\hat{\mu}_{m,i}(t-1)
\leftarrow
\textsc{CertifiedTransfer}_i
(m,\mathcal{N}_m(t)\cap\mathcal{R}_t).
\)
}
Set the corresponding certified radius \(\bar P_m\)\;
}
\Else{
\ForEach{\(i\in[K]\)}{
\(\hat{\mu}_{m,i}(t-1)\leftarrow 0\)\;
}
Set \(\bar P_m\leftarrow 1\)\tcp*{Zero-knowledge initialization for normalized rewards}
}

\ForEach{\(i\in[K]\)}{
\(n_{m,i}(t-1)\leftarrow 0\)\;
\(\rho_{m,i}(t-1)\leftarrow \bar P_m\)\;
}
}
\end{algorithm2e}

\begin{algorithm2e}[h]
\caption{\textsc{GlobalValueAggregation}}
\label{alg:global-value-aggregation}
\DontPrintSemicolon
\KwIn{Active set \(\mathcal{M}_t\), arrivals \(\mathcal{A}_t\), continuing agents \(\mathcal{R}_t\), graph \(G_t\)}
\KwOut{Global estimates \(\widehat V_t(i)\), statistical bonuses \(B_t^{\mathrm{stat}}(i)\), and arrival bonus \(E_t^A\)}

Run a sum-consensus or aggregation routine over \(G_t\) to compute, for every \(i\in[K]\),
\(
\widehat V_t(i)
\leftarrow
\sum_{m\in\mathcal{M}_t}\hat{\mu}_{m,i}(t-1).
\)

For every \(i\in[K]\), compute the continuing-agent statistical uncertainty
\(
B_t^{\mathrm{stat}}(i)
\leftarrow
\sum_{m\in\mathcal{R}_t}\rho_{m,i}(t-1)
+
\epsilon_t^{\mathrm{comm}}(i),
\)
where \(\epsilon_t^{\mathrm{comm}}(i)\) bounds the aggregation error\;

Compute the arrival uncertainty
\(
E_t^A
\leftarrow
\sum_{m\in\mathcal{A}_t}\bar P_m.
\)

\Return \((\widehat V_t(i),B_t^{\mathrm{stat}}(i),E_t^A)_{i\in[K]}\)\;
\end{algorithm2e}

\begin{algorithm2e}[h]
\caption{\textsc{LocalUpdateAndBroadcast}}
\label{alg:local-update-broadcast}
\DontPrintSemicolon
\KwIn{Active set \(\mathcal{M}_t\), graph \(G_t\), selected arm \(a_t\)}
\KwOut{Updated local estimates and uncertainty radii}

\ForEach{\(m\in\mathcal{M}_t\)}{
\ForEach{\(i\in[K]\)}{
\uIf{\(i=a_t\)}{
\(
n_{m,i}(t)
\leftarrow
n_{m,i}(t-1)+1; 
\hat{\mu}_{m,i}(t)
\leftarrow
\frac{
(n_{m,i}(t)-1)\hat{\mu}_{m,i}(t-1)+r_m^i(t)
}{
n_{m,i}(t)
}; 
\rho_{m,i}(t)
\leftarrow
\min\left\{
\rho_{m,i}(t-1),
(
\frac{C_1\log t}{\max\{1,n_{m,i}(t)\}}
)^\beta
\right\}.
\)
}
\Else{
\(
n_{m,i}(t)\leftarrow n_{m,i}(t-1),\qquad
\hat{\mu}_{m,i}(t)\leftarrow \hat{\mu}_{m,i}(t-1),\qquad
\rho_{m,i}(t)\leftarrow \rho_{m,i}(t-1).
\)
}
}
Broadcast \(\{n_{m,i}(t),\hat{\mu}_{m,i}(t),\rho_{m,i}(t)\}_{i\in[K]}\) to neighbors in \(\mathcal{N}_m(t)\)\;
}
\end{algorithm2e}

\paragraph{Common global-UCB selection.}
The algorithm selects a common arm
\(a_t\leftarrow\arg\max_{i\in[K]}\{\widehat V_t(i)+B_t^{\mathrm{stat}}(i)+E_t^A\}\).
Each active agent \(m\in\mathcal{M}_t\) sets \(a_m(t)=a_t\), pulls arm \(a_t\),
and observes \(r_m^{a_t}(t)\). This common-arm rule is aligned with the global
benchmark, whose optimal arm maximizes aggregate reward over the active
population.

\paragraph{Local update and broadcast.}
After observing rewards, \textsc{LocalUpdateAndBroadcast} updates each active
agent's local statistics. For the selected arm \(a_t\), agent \(m\) sets
\(n_{m,a_t}(t)\leftarrow n_{m,a_t}(t-1)+1\),
\(\hat{\mu}_{m,a_t}(t)\leftarrow
((n_{m,a_t}(t)-1)\hat{\mu}_{m,a_t}(t-1)+r_m^{a_t}(t))/n_{m,a_t}(t)\), and
\(\rho_{m,a_t}(t)\leftarrow\min\{\rho_{m,a_t}(t-1),
(C_1\log t/\max\{1,n_{m,a_t}(t)\})^\beta\}\). For every unpulled arm
\(i\neq a_t\), it keeps \(n_{m,i}(t)=n_{m,i}(t-1)\),
\(\hat{\mu}_{m,i}(t)=\hat{\mu}_{m,i}(t-1)\), and
\(\rho_{m,i}(t)=\rho_{m,i}(t-1)\). Each active agent then broadcasts
\(\{n_{m,i}(t),\hat{\mu}_{m,i}(t),\rho_{m,i}(t)\}_{i\in[K]}\) to its neighbors.

\begin{remark}[Novelty]
This method optimizes toward the global arm value \(V_t(i)\), the quantity
appearing in the global dynamic regret benchmark, rather than each agent's local
reward estimate. It also extends previous open-system MA-MAB methods, where new
agents directly fetch or inherit aggregated reward information under homogeneous
rewards and complete communication. In the heterogeneous setting considered
here, scalar reward estimates from different agents are not automatically
comparable, and direct averaging can be biased. Our algorithm therefore allows
information transfer only when it is certified by the model structure, such as
through parameter transfer, cluster-level estimates, or externally pretrained
estimators. Each transfer is accompanied by a certificate \(\bar P_m\), which
enters the global-UCB index through the arrival uncertainty term \(E_t^A\).
The method further separates continuing-agent statistical uncertainty
\(B_t^{\mathrm{stat}}(i)\), graph-aggregation error
\(\epsilon_t^{\mathrm{comm}}(i)\), and arrival uncertainty \(E_t^A\). This
modular design supports heterogeneous rewards, random communication graphs, and
different pre-training regimes, while enabling the regret analysis to decompose
the baseline learning cost from the cumulative entry uncertainty of newly
arriving agents.
\end{remark}

\section{Theoretical Analyses}


We present both regret upper and lower bounds. Full proof is deferred to Appendix. 

\subsection{Regret Upper Bounds on $R_T$}


We subsequently prove that, the regret is actually dominated by this pre-training degree through the following theorem.

\begin{theorem}[Regret upper bound with pre-trained arrivals]
\label{thm:pretrained_general}Suppose the algorithm uses the global-value UCB rule \(a_t \in \arg\max_{i\in[K]}\left\{\widehat V_t(i)+B_t^{\mathrm{stat}}(i)+E_t^A\right\}\). Let \(E_t^A:=\sum_{m\in\mathcal{A}_t}P_m\). Let \(A\) be the event on which, for all \(t\in[T]\) and \(i\in[K]\), \(\left|\widehat V_t(i)-V_t(i)\right|\le B_t^{\mathrm{stat}}(i)+E_t^A\), and \(\sum_{t=1}^T B_t^{\mathrm{stat}}(a_t)\le C_0M_0\log T\). Then, on event \(A\), \[R_T\le O(M_0\log T+\sum_{t=1}^T\sum_{m\in\mathcal{A}_t}P_m) \leq O(M_0\log T+\sum_{t=1}^T|\mathcal{A}_t|P_t) \leq O(M_0\log T+\sum_{t=1}^T\frac{|\mathcal{A}_t|}{D_t}).\]
\end{theorem}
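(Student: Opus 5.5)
The plan is the standard optimism argument for UCB, run on the aggregate value $V_t$ rather than on per-agent means, and carried out deterministically on the event $A$. Fix a round $t$ and write $U_t(i):=\widehat V_t(i)+B_t^{\mathrm{stat}}(i)+E_t^A$ for the index. On $A$ we have $V_t(i)\le U_t(i)$ for every arm $i$; in particular $V_t(i_t^\star)\le U_t(i_t^\star)$. Since $a_t$ maximizes the index, $U_t(i_t^\star)\le U_t(a_t)$. Applying the other side of the confidence bound on $A$ gives $\widehat V_t(a_t)\le V_t(a_t)+B_t^{\mathrm{stat}}(a_t)+E_t^A$. Chaining these three facts yields the per-round gap bound
\[
V_t(i_t^\star)-V_t(a_t)\;\le\;2B_t^{\mathrm{stat}}(a_t)+2E_t^A .
\]
The key point is that the arrival term $E_t^A$ is common to all arms. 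It therefore never affects which arm is selected, and it only enters the gap additively.

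Next I sum over $t$. Every active agent plays $a_m(t)=a_t$, so the inner sum in $R_T$ is a sum of identical copies of the common gap. Using the second defining property of $A$, $\sum_t B_t^{\mathrm{stat}}(a_t)\le C_0M_0\log T$, and substituting $E_t^A=\sum_{m\in\mathcal{A}_t}P_m$, we obtain $\sum_t\big(V_t(i_t^\star)-V_t(a_t)\big)\le 2C_0M_0\log T+2\sum_t\sum_{m\in\mathcal{A}_t}P_m$. This is the first claimed bound.

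The step I expect to be the main obstacle is reconciling this with the extra sum over $m\in\mathcal{M}_t$ in the definition of $R_T$. Literally, that sum multiplies each round's gap by $M_t$. To obtain the stated order I would read the decision as a single common global action, so that the gap of the common arm is counted once per round; $V_t$ already aggregates over all agents, so counting it once already reflects every agent's reward. Under this reading the multiplicity is absorbed into the $O(\cdot)$. If instead the literal sum is required, the same argument gives the bound multiplied by $\max_t M_t$. I would state this dependence explicitly rather than hide it. No expectation issue arises, because on $A$ the bound holds pathwise.

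The remaining inequalities are definitional. For $m\in\mathcal{A}_t$ we have $P_m\le\max_{m'\in\mathcal{A}_t}P_{m'}=P_t$, hence $\sum_{m\in\mathcal{A}_t}P_m\le|\mathcal{A}_t|P_t$. Since $P_t=1/D_t$, this equals $|\mathcal{A}_t|/D_t$, with the stated conventions covering $\mathcal{A}_t=\emptyset$ ($P_t=0$, $D_t=\infty$) and $P_t=0$.
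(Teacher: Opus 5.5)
Your proposal is correct and follows the paper's proof essentially line for line. Both use the same three-inequality optimism chain yielding \(V_t(i_t^\star)-V_t(a_t)\le 2B_t^{\mathrm{stat}}(a_t)+2E_t^A\), then sum using the two defining properties of \(A\) and the bound \(\sum_{m\in\mathcal{A}_t}P_m\le|\mathcal{A}_t|P_t=|\mathcal{A}_t|/D_t\). The paper handles the \(\sum_{m\in\mathcal{M}_t}\) multiplicity you flag exactly as in your first reading, writing \(R_T=\sum_{t=1}^T(V_t(i_t^\star)-V_t(a_t))\) without comment. Your explicit caveat, that the literal definition can cost a factor up to \(\max_t M_t\), is a legitimate point the paper leaves implicit.
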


The theorem highlights that, in the general open system, the extra regret beyond the usual logarithmic term is governed by the quality of the information carried by newly arriving agents. If the arriving agents are well pre-trained, then \(P_t\) is small and the additional regret remains controlled. If, on the other hand, arriving agents enter with poor or inaccurate initial information, then the resulting regret may be substantially larger. It additionally depends on the agent patterns, i.e. how $M_t$ changes over time. 

\subsection{Sharper Bounds on $\bar{R}_T$}

The pre-training-error bound is worst-case because it charges every arrival
according to the uncertainty it introduces. In many open systems, however,
arrivals need not change the globally optimal arm. To quantify this effect, let
\(i_t^\star\in\arg\max_{i\in[K]}\bar V_t(i)\), with a fixed tie-breaking rule if
the maximizer is not unique, and the stability indicator is 
\(
    S_t
    =
    \mathbf{1}\left\{
    i_t^\star=i_{t-1}^\star
    \right\}.
\)
Thus, \(S_t=1\) means that the globally optimal arm remains unchanged from round
\(t-1\) to round \(t\), while \(S_t=0\) means that the global benchmark changes.

This stability notion is specific to open systems with heterogeneous rewards.
When the active population changes, the average global reward profile
\(\bar V_t(i)\) may change even if each individual reward distribution is
stationary. However, if the active population grows sufficiently fast, the
marginal impact of a small arrival batch can vanish over time. Thus, even if an
arriving agent has nontrivial local uncertainty, this uncertainty may no longer
be decision-relevant once the globally optimal arm is stable. In this regime, \(\bar R_T\) is the
appropriate regret notion because it measures the effect of arrivals on the
average global reward profile, rather than mechanically scaling the regret by
the growing number of active agents. This motivates a
stability study for the normalized regret \(\bar R_T\), in which newly
arriving agents can inherit the existing global decision after a burn-in period
rather than triggering fresh exploration. 

More precisely, since \(M_t\) is common across arms at round \(t\), the maximizer
of \(\bar V_t(i)\) is the same as the maximizer of the aggregate value
\(V_t(i):=\sum_{m\in\mathcal{M}_t}\mu_m^i\). When the maximizer is unique, define
the global gap
\[
    \Delta_t
    :=
    \bar V_t(i_t^\star)
    -
    \max_{i\neq i_t^\star}\bar V_t(i).
\]
Assuming no departures, the effect of new arrivals on the average global value
satisfies
\(
    \max_{i\in[K]}
    |\bar V_t(i)-\bar V_{t-1}(i)|
    \le
    \frac{|\mathcal{A}_t|}{M_t}.
\)
Therefore, if
\(
    \frac{|\mathcal{A}_t|}{M_t}
    <
    \frac{\Delta_{t-1}}{2},
\)
then \(S_t=1\), i.e., the globally optimal arm remains unchanged from round
\(t-1\) to round \(t\). Hence openness does not necessarily induce persistent
nonstationarity: once the active population is sufficiently large relative to
the arrival batch, the marginal impact of new agents may be too small to alter
the global optimum.

This one-step stability condition motivates a favorable stable-arm regime. For
a burn-in time \(\tau\), define
\(
    \mathcal{S}(\tau,\Delta)
    :=
    \left\{
    \exists i^\dagger\in[K]\text{ such that }
    \bar V_t(i^\dagger)
    -
    \max_{i\neq i^\dagger}\bar V_t(i)
    \ge
    \Delta,
    \quad
    \forall t\in\{\tau,\tau+1,\ldots,T\}
    \right\}.
\)
On this event, the same arm \(i^\dagger\) is globally optimal from time
\(\tau\) onward with a uniform gap at least \(\Delta\). The remaining requirement
is statistical: the algorithm must estimate the global values accurately enough
to identify \(i^\dagger\). Under standard Hoeffding-type concentration, this
requires \(O(\log T/\Delta^2)\) effective samples per arm. Let
\(\widehat{\bar V}_{\tau}(i)\) be an estimator of \(\bar V_{\tau}(i)\)
constructed at the end of burn-in, and define the identification event
\[
    \mathcal{I}(\tau,\Delta)
    :=
    \left\{
    \max_{i\in[K]}
    \left|
    \widehat{\bar V}_{\tau}(i)-\bar V_{\tau}(i)
    \right|
    \le
    \frac{\Delta}{4}
    \right\}.
\]
\begin{theorem}[Stable globally optimal arm after burn-in]
\label{thm:stable-global-arm}
Assume \(\lambda_D=0\) and rewards are normalized in \([0,1]\). Consider an
algorithm that explores during rounds \(1,\ldots,\tau\), constructs estimates
\(\{\widehat{\bar V}_{\tau}(i)\}_{i\in[K]}\), and then commits to
\(\widehat i_\tau \in \arg\max_{i\in[K]} \widehat{\bar V}_{\tau}(i)\)
for all rounds \(t>\tau\). On the event
\(\mathcal{G}_{\tau,\Delta} = \mathcal{S}(\tau,\Delta)\cap \mathcal{I}(\tau,\Delta)\),
the regret satisfies \(\bar{R}_T \le \sum_{t=1}^{\tau} M_t\). As such, $\bar{R}_T \leq O(\lambda_A\tau^2)$ in the Poisson arrival/departure case. In particular, after the burn-in period, the algorithm incurs no additional
regret on this event.
\end{theorem}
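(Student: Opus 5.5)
The plan is to split the normalized regret at the burn-in time,
\[
\bar R_T=\mathbb{E}\Big[\sum_{t=1}^{\tau}\sum_{m\in\mathcal{M}_t}\big(\bar V_t(i_t^\star)-\bar V_t(a_m(t))\big)\Big]+\mathbb{E}\Big[\sum_{t=\tau+1}^{T}\sum_{m\in\mathcal{M}_t}\big(\bar V_t(i_t^\star)-\bar V_t(a_m(t))\big)\Big],
\]
and bound the two pieces on the event \(\mathcal{G}_{\tau,\Delta}\). Throughout I read the statement pathwise: every per-round quantity is bounded on \(\mathcal{G}_{\tau,\Delta}\), so the bound holds for the realized regret sum (equivalently, conditionally on the event).

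\textbf{Burn-in rounds.} Rewards are normalized in \([0,1]\), so \(0\le\mu_m^i\le1\) and hence \(\bar V_t(i)\in[0,1]\) for every arm. Each per-agent gap \(\bar V_t(i_t^\star)-\bar V_t(a_m(t))\) therefore lies in \([0,1]\). Summing over the \(M_t\) active agents and over \(t\le\tau\) gives at most \(\sum_{t=1}^\tau M_t\), whatever exploration rule is used.

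\textbf{Post-burn-in rounds.} This is the key step. On \(\mathcal{S}(\tau,\Delta)\) the same arm \(i^\dagger\) satisfies \(\bar V_t(i^\dagger)-\max_{i\neq i^\dagger}\bar V_t(i)\ge\Delta>0\) for all \(t\ge\tau\). Hence \(i^\dagger\) is the unique maximizer, and \(i_t^\star=i^\dagger\) for all \(t\in\{\tau,\dots,T\}\), independently of the tie-breaking rule.

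Next, on \(\mathcal{I}(\tau,\Delta)\), for any \(i\neq i^\dagger\),
\[
\widehat{\bar V}_\tau(i^\dagger)-\widehat{\bar V}_\tau(i)\ \ge\ \bar V_\tau(i^\dagger)-\bar V_\tau(i)-\tfrac{\Delta}{2}\ \ge\ \tfrac{\Delta}{2}>0.
\]
So \(\widehat i_\tau=i^\dagger\), and the committed arm equals \(i_t^\star\) for every \(t>\tau\). Consequently every per-agent gap after \(\tau\) is exactly zero, and the second sum vanishes. This yields \(\bar R_T\le\sum_{t=1}^\tau M_t\) together with the claim of no additional regret after burn-in.

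The main subtlety is that identification happens only at time \(\tau\) while the population keeps changing afterwards. This is exactly why the definition of \(\mathcal{S}\) must fix a single arm \(i^\dagger\) uniformly over \(t\ge\tau\), with the gap evaluated at \(\tau\) itself. I will note explicitly that the estimation accuracy at \(\tau\) transfers to all later rounds only through \(\mathcal{S}\); no further concentration is needed.

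\textbf{Poisson specialization.} With \(\lambda_D=0\) there are no departures, so \(M_t=M_0+\sum_{s\le t}A_s\) and \(\mathbb{E}[M_t]=M_0+\lambda_A t\). Taking expectations of \(\sum_{t\le\tau}M_t\) gives \(M_0\tau+\lambda_A\tau(\tau+1)/2=O(\lambda_A\tau^2)\), treating \(M_0\) as lower order, i.e. \(M_0\lesssim\lambda_A\tau\); otherwise the bound reads \(O(M_0\tau+\lambda_A\tau^2)\). The one delicate point is that conditioning on \(\mathcal{G}_{\tau,\Delta}\) can in principle bias the arrivals. I would handle this by stating the bound for the unconditional expectation of \(\sum_{t\le\tau}M_t\), which upper bounds the regret restricted to the event because the summand is nonnegative, or else by noting that the arrivals are exogenous.
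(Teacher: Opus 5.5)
Your proposal is correct and follows essentially the same route as the paper. Each burn-in round is bounded by $M_t$ using normalization, and the $\Delta/4$ accuracy at $\tau$ together with the uniform gap in $\mathcal{S}(\tau,\Delta)$ gives $\widehat i_\tau=i^\dagger=i_t^\star$ for all $t>\tau$, so post-burn-in regret vanishes. Your Poisson step is more careful than the paper, which never spells it out: you make the $M_0\tau$ term explicit and handle conditioning on $\mathcal{G}_{\tau,\Delta}$ by bounding the event-restricted expectation with the unconditional $\mathbb{E}[\sum_{t\le\tau}M_t]$.
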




This result uncovers a sharper insight: the stable-arm regime consider the regrer $\bar{R}(T)$ yields a genuinely smaller regret
mechanism than the worst-case pre-training-error bound. The general bound
charges the cumulative entry error of future arrivals, while the stable-arm
bound charges only the burn-in cost needed to reach \(\tau\). The characterization
\(\tau=\max\{\tau_{\mathrm{stab}},\tau_{\mathrm{id}}\}\) makes this distinction
explicit: \(\tau_{\mathrm{stab}}\) depends on how quickly the active population
average concentrates around the limiting population reward vector, whereas
\(\tau_{\mathrm{id}}\) depends on how quickly the algorithm collects enough
effective samples to identify the stable arm. For example, under Poisson
arrivals with rate \(\lambda_A\) and round-robin exploration over \(K\) arms, one
can obtain \(\tau_{\mathrm{stab}}=O(\lambda_A^{-1}\log(T/\delta)+(\lambda_A\Delta^2)^{-1}\log(KT/\delta))\), and, when the population-growth term dominates,
\(\tau_{\mathrm{id}}=O(\sqrt{K\log(K/\delta)/(\lambda_A\Delta^2)})\). Thus, the
regret is governed by the burn-in cost
$\sum_{t=1}^{\tau}M_t$, e.g., $O(\lambda_A\tau^2) = O(\log{T}^2)$ under Poisson agent patterns, rather than by the cumulative
entry error over all future arrivals. After time \(\tau\), arrivals may still
have nonzero pre-training error, but this uncertainty is no longer
decision-relevant because the globally optimal arm is already stable and
identified. We include a thorough discussion on this parameter $\tau$ in Appendix. 

\begin{Corollary}[Total regret]
\label{cor:stable-arm-unconditional-general}
Assume the conditions of Theorem~\ref{thm:stable-global-arm}. Let
\(\mathcal{G}_{\tau,\Delta}:=\mathcal{S}(\tau,\Delta)\cap\mathcal{I}(\tau,\Delta)\)
and suppose \(\mathbb{P}(\mathcal{G}_{\tau,\Delta}^c)\le \delta\). Then the
normalized average regret satisfies
\[
    \bar R_T
    \le
    \tau+\delta(T-\tau).
\]
In particular, if \(\delta\le \tau/T\), then \(\bar R_T=O(\tau)\).
\end{Corollary}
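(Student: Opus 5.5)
The plan is to split the expectation defining \(\bar R_T\) on the good event \(\mathcal{G}_{\tau,\Delta}\) and its complement, and within each piece to split the horizon into the burn-in rounds \(t\le\tau\) and the commitment rounds \(t>\tau\).

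The main obstacle is normalization. As written in the objective, \(\bar R_T\) is a sum over \(m\in\mathcal{M}_t\), which would give \(\sum_{t\le\tau}M_t\) rather than \(\tau\), matching the form of Theorem~\ref{thm:stable-global-arm}. To obtain the stated bound \(\tau+\delta(T-\tau)\), I will read ``normalized average regret'' as averaging over the active agents. Every agent pulls the common arm \(a_t\), so the per-round contribution is
\[
r_t:=\bar V_t(i_t^\star)-\bar V_t(a_t)\in[0,1],
\]
since rewards are normalized in \([0,1]\) and hence \(\bar V_t(i)\in[0,1]\). I will state this convention explicitly. I will also remark that without averaging, the identical argument yields \(\sum_{t\le\tau}M_t+\delta\sum_{t>\tau}M_t\).

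\textbf{Step 1: the good event.} I reproduce the content of Theorem~\ref{thm:stable-global-arm} in the averaged form.
\begin{itemize}
\item On \(\mathcal{S}(\tau,\Delta)\), the arm \(i^\dagger\) satisfies \(\bar V_\tau(i^\dagger)-\bar V_\tau(i)\ge\Delta\) for all \(i\neq i^\dagger\).
\item On \(\mathcal{I}(\tau,\Delta)\), every estimate is within \(\Delta/4\) of its target. Hence \(\widehat{\bar V}_\tau(i^\dagger)-\widehat{\bar V}_\tau(i)\ge\Delta-\Delta/2>0\) for all \(i\neq i^\dagger\), so \(\widehat i_\tau=i^\dagger\).
\item The stable-arm event also gives \(i_t^\star=i^\dagger\) for every \(t\ge\tau\), with a unique maximizer because the gap is \(\Delta>0\). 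Therefore \(r_t=0\) for all \(t>\tau\).
\end{itemize}
For \(t\le\tau\) I use only the crude bound \(r_t\le 1\). Thus \(\sum_{t}r_t\,\mathbf{1}_{\mathcal{G}_{\tau,\Delta}}\le\tau\,\mathbf{1}_{\mathcal{G}_{\tau,\Delta}}\).

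\textbf{Step 2: the bad event.} Here I use \(r_t\le1\) for every round, which gives \(\sum_t r_t\,\mathbf{1}_{\mathcal{G}_{\tau,\Delta}^c}\le T\,\mathbf{1}_{\mathcal{G}_{\tau,\Delta}^c}\).

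\textbf{Step 3: combine.} Taking expectations,
\[
\bar R_T\le \tau\,\mathbb{P}(\mathcal{G}_{\tau,\Delta})+T\,\mathbb{P}(\mathcal{G}_{\tau,\Delta}^c)
=\tau+(T-\tau)\,\mathbb{P}(\mathcal{G}_{\tau,\Delta}^c)\le\tau+\delta(T-\tau).
\]
The final claim is immediate: if \(\delta\le\tau/T\), then \(\delta(T-\tau)\le\tau\), so \(\bar R_T\le2\tau=O(\tau)\).
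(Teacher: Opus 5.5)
Your proposal is correct and follows the paper's proof. Both split on \(\mathcal{G}_{\tau,\Delta}\) and its complement: on the good event, post-burn-in regret is zero and burn-in per-round normalized regret is at most one; on the bad event, every per-round regret is at most one. This gives \(\tau\,\mathbb{P}(\mathcal{G}_{\tau,\Delta})+T\,\mathbb{P}(\mathcal{G}_{\tau,\Delta}^c)\le\tau+\delta(T-\tau)\). Your explicit per-round averaging convention is exactly what the paper implicitly uses when it asserts that the normalized per-round regret is at most one. Only your side remark on the unaveraged version needs extra care, since \(M_t\) may be random and correlated with \(\mathcal{G}_{\tau,\Delta}\).
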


In general, let \(N_{\min}(\tau):=\min_{i\in[K]}N_i(\tau)\),
where \(N_i(\tau)\) is the effective number of samples used to estimate
\(\bar V_\tau(i)\). Then 
\(\delta =
2K(T-\tau+1)\exp(-M_\tau(\Delta_0-\Delta)^2/2)
+
2K\exp(-N_{\min}(\tau)\Delta^2/8)\).

\begin{remark}[Tightness of the upper bounds]
We highlight that, in this general case, our regret upper bounds are tight in view
of the regret lower bounds in Section \ref{sec:lower_bounds}. Specifically, we construct
hard instances for which no algorithm can improve upon these upper bounds.
\end{remark}

This framework, together with the new notion of pre-training
error, unifies several modeling regimes in open systems. The pre-training-error
bound captures the worst-case effect of arrivals by charging each new agent
according to the uncertainty it introduces. The stable-arm perspective refines
this picture by showing that arrivals do not always create a new learning
problem: if their aggregate impact is too small to change the globally optimal
arm, then they can inherit the existing global decision quantified by $\bar R_T$.

\subsection{Representative models.} We present four case studies: a linear stochastic case, a nonlinear parametric
case, a clustered case, and a zero-knowledge case. These cases represent
different entry-error regimes; for convenience, we assume that agents follow Poisson patterns. Linear and nonlinear parametric models can yield
high pre-training degree and small pre-training error through accurate
parameter transfer. Clustered models yield an intermediate regime, where new
agents can inherit information from agents in the same cluster. Zero-knowledge
arrivals represent the least favorable regime, where agents enter without
useful prior information and have maximal bounded entry error under normalized
rewards. Full details of the four cases are presented in Appendix. 

Notably, the algorithmic modules must be adapted to the structure of the
pre-training error. In \textsc{CertifiedArrivalTransfer}, the initialization
\(\hat{\mu}_{m,i}(t-1)\) and the certificate \(\bar P_m\) should be constructed
according to the modeling regime. In linear and nonlinear parametric models, an
arriving agent \(m\in\mathcal{A}_t\) transfers arm-level parameter estimates
from continuing neighbors,
\(
    \hat{\theta}_{m,i}(t-1)
    =
    \sum_{j\in\mathcal{N}_m(t)\cap\mathcal{R}_t}
    w_{mj}^A(t)\hat{\theta}_{j,i}(t-1),
\)
and then forms its own arm-value estimate through
\(\hat{\mu}_{m,i}(t-1)=x_m^\top\hat{\theta}_{m,i}(t-1)\) in the linear case, or
\(\hat{\mu}_{m,i}(t-1)=f_i(x_m,\hat{\theta}_{m,i}(t-1))\) in the nonlinear
case. In clustered stochastic models, \(m\) instead inherits the cluster-level
empirical estimator,
\(
    \hat{\mu}_{m,i}(t-1)=\hat{\theta}_{c(m)}^i(t-1),
\)
whenever its cluster \(c(m)\) has already been observed. In the zero-knowledge
case, no reliable transfer is available, so the module sets
\(\hat{\mu}_{m,i}(t-1)=0\) and assigns a maximal bounded certificate
\(\bar P_m=O(1)\). These certificates enter
\textsc{GlobalValueAggregation} through the arrival uncertainty term
\(
    E_t^A=\sum_{m\in\mathcal{A}_t}\bar P_m,
\)
which is added to the global-UCB index
\(    \widehat V_t(i)+B_t^{\mathrm{stat}}(i)+E_t^A.
\)
Thus, the unified algorithm has the same global decision structure across
models, while the arrival-transfer and uncertainty-certification steps adapt to
the source of pre-training information for $R_T$, and also stopping time $\tau$ for $\bar{R}_T$.

\subsection{Regret Lower Bounds}\label{sec:lower_bounds}
The dependence in the upper bounds is worst-case tight. We establish this through
hard-instance lower bounds, showing that the pre-training-error term is
unavoidable: there exist instances in which each arriving agent is pivotal for
determining the globally optimal arm. In stable instances, by contrast, arrivals
may have large entry error but little decision impact; for this regime, we prove
a separate lower bound based on the burn-in time needed to identify the stable
arm.
\begin{theorem}[Pre-training-error lower bound]
\label{thm:pivotal-arrival-lower-bound}
Consider the general open MA-MAB problem with \(K=2\), normalized rewards in
\([0,1]\), and no departures, i.e., \(\lambda_D=0\). Fix a population path
\(\{\mathcal{A}_t\}_{t=1}^T\). There exists a class of heterogeneous open-system
instances with pre-training errors \(\{P_m\}_{m\in\mathcal{A}_t,t\in[T]}\) such
that, for any algorithm whose action at round \(t\) is measurable with respect
to the history before observing rewards from the newly arriving agents at round
\(t\), \(\mathbb{E}[R_T\mid A]\ge
\Omega\!\left(R_{\mathrm{base}}(T)+\sum_{t=1}^T \sum_{m\in\mathcal{A}_t} P_m\right)\).
In particular, if all agents arriving at round \(t\) have the same entry error
\(P_m=P_t\), then \(\mathbb{E}[R_T\mid A]\ge
\Omega\!\left(R_{\mathrm{base}}(T)+\sum_{t=1}^T |\mathcal{A}_t|P_t\right)\).
When the base closed-system problem satisfies
\(R_{\mathrm{base}}(T)=\Omega(M_0\log T)\), this yields
\(\mathbb{E}[R_T\mid A]\ge
\Omega\!\left(M_0\log T+\sum_{t=1}^T |\mathcal{A}_t|P_t\right)\).
\end{theorem}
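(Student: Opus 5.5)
The lower bound will be built from two ingredients: a closed-system part that gives $R_{\mathrm{base}}(T)$, and a ``pivotal arrival'' part that gives $\sum_t\sum_{m\in\mathcal{A}_t}P_m$. Since $\Omega(a+b)=\Omega(\max\{a,b\})$, it is enough to exhibit hard instances for each term separately and then combine them, either as a mixture over the instance class or by embedding both mechanisms in one instance on disjoint rounds. For the base term, every arrival is given $P_m=0$ with $\mu_m^i$ equal to a common vector. The problem then reduces to the closed $K=2$ problem with $M_0$ agents (or a growing population with known contributions), and $R_{\mathrm{base}}(T)$ holds by assumption; in particular, the standard Lai--Robbins argument with a two-instance change of measure gives $\Omega(M_0\log T)$ when the base problem satisfies it.

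\textbf{Pivotal-arrival construction.} Fix the population path. At each round $t$ with $\mathcal{A}_t\neq\emptyset$, the continuing population $\mathcal{M}_{t-1}$ is designed with balanced aggregate, $V_{t-1}(1)=V_{t-1}(2)$. This is achieved by pairing continuing agents with swapped means, or by using a tie-making auxiliary agent whose means are already known. Each arrival $m\in\mathcal{A}_t$ receives entry estimate $\hat\mu^m_1=\hat\mu^m_2=1/2$ and true means $(\mu_m^1,\mu_m^2)=(1/2+\sigma_m P_m,\,1/2-\sigma_m P_m)$, where all arrivals in the round share a common hidden sign $\sigma_t\in\{\pm1\}$ drawn uniformly and independently across rounds. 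This makes $P_m$ exactly the entry error, as required. The global gap at round $t$ is then $V_t(1)-V_t(2)=2\sigma_t\sum_{m\in\mathcal{A}_t}P_m$, so the arrivals are pivotal. In later rounds the arrivals become part of the continuing population, and the construction must keep the next batch pivotal. To do this, the round-$t$ arrivals are chosen to be neutralized afterwards, for example by letting the base/auxiliary agents' sign-flipped contributions be revealed in a way that restores the tie. Alternatively, one can argue round by round, noting that the regret at round $t$ is at least the pivotal term regardless of earlier imbalance, provided the instance uses a margin that decreases appropriately. I would try the simplest version first: the auxiliary agents from earlier batches stay in the population but are balanced by design.

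\textbf{Bounding per-round regret.} The measurability assumption is the key step. The action $a_t$ is a function of the history $H_{t-1}$ together with the entry estimates, and $\sigma_t$ is independent of both because the estimates are symmetric. Hence $\mathbb{P}(a_t\neq i_t^\star\mid H_{t-1})=1/2$. Every active agent pulls $a_t$, so the expected instantaneous regret is at least $\tfrac12\cdot 2\sum_{m\in\mathcal{A}_t}P_m$, counted per agent and further multiplied by $M_t\ge 1$. Summing over $t$ gives $\Omega(\sum_t\sum_{m\in\mathcal{A}_t}P_m)$. Conditioning on $A$ costs only a constant factor, since $A$ is defined in terms of the algorithm's certificates. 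With the certificates taken to equal the true $P_m$, $A$ holds with probability at least $1/2$ when the confidence levels are standard, so $\mathbb{E}[R_T\mid A]\ge \mathbb{E}[R_T\mathbf 1_A]$ up to constants; alternatively, $A$ can be made deterministic-valid on this construction. The special case $P_m=P_t$ follows immediately, and plugging in $R_{\mathrm{base}}=\Omega(M_0\log T)$ gives the final statement.

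\textbf{Main obstacle.} The hard step is keeping every batch pivotal while earlier arrivals remain in the population and have revealed their signs through observed rewards. That history breaks the exact tie at later rounds and could give the algorithm information. I would handle this by re-balancing with a dedicated known-mean agent pair per batch, chosen so that $V_{t-1}(1)=V_{t-1}(2)$ holds deterministically. If that is not allowed within the fixed path, I would fall back to a weaker per-round bound using a fresh independent sign each round, charging regret only on the rounds where the arrival term dominates the residual gap.
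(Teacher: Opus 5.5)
Your construction is the paper's: a closed-system baseline plus pivotal arrival batches whose two configurations share the uninformative entry estimate $\tfrac12$, so that measurability forces a wrong choice with probability $\tfrac12$. Your symmetric split $\tfrac12\pm\sigma_tP_m$ versus the paper's one-arm bump is immaterial. The gap is the step you flag yourself and leave open. The paper's Step~3 also only asserts it (``balancing the cumulative contribution of previous arrivals'').

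Write $S_t:=\sum_{m\in\mathcal{A}_t}P_m$ and $b_t:=V_{t-1}(1)-V_{t-1}(2)$. Averaging over $\sigma_t$, an algorithm that picks the arm favored by $b_t$ suffers only $\tfrac12(2S_t-|b_t|)_+$ at round $t$. So your per-round bound needs, say, $|b_t|\le S_t$ at every round with arrivals. With $\lambda_D=0$ and independent signs, $b_t=2\sum_{s<t}\sigma_sS_s$ is a random walk whose magnitude typically grows like $\sqrt{t}$ times the batch scale. The learner can also track its sign from the earlier arrivals' observed rewards. Later batches therefore stop being pivotal, and the sum $\sum_tS_t$ is lost.

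Your remedies do not repair this:
\begin{itemize}
\item The path is fixed, so there are no spare agents to insert as a ``dedicated pair per batch.''
\item Agents whose means do not depend on the realized signs (base agents, or a known-mean pair) cannot cancel an imbalance that does.
\item The claim that round-$t$ regret is at least the pivotal term ``regardless of earlier imbalance'' is false once $|b_t|\ge 2S_t$.
\item Charging only the rounds where the batch dominates the residual gap yields a partial sum, not $\sum_tS_t$.
\end{itemize}

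The missing idea is to let the round-$t$ arrivals' own entry estimates absorb the old imbalance. For $m\in\mathcal{A}_t$ set $\mu_m=\hat\mu_m+\sigma_tP_m(1,-1)$. Choose the entry vectors $\hat\mu_m=(\tfrac12-c_m,\tfrac12+c_m)$ as functions of the already realized signs so that $2\sum_{m\in\mathcal{A}_t}c_m=b_t$. Then:
\begin{itemize}
\item the entry error is exactly $P_m$;
\item $V_t(1)-V_t(2)=2\sigma_tS_t$;
\item $\sigma_t$ is still independent of everything the algorithm sees before acting at round $t$, because the entry estimates leak only past signs, which no longer affect the benchmark.
\end{itemize}
Staying in $[0,1]$ needs $|c_m|+P_m\le\tfrac12$. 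Starting from a balanced base (e.g., all base means $(\tfrac12,\tfrac12)$), this amounts to $S_{t'}+S_t\le|\mathcal{A}_t|/2$ for consecutive nonempty batches $t'<t$. When that fails, the uncancelled part must be carried forward. Averaging over i.i.d.\ uniform signs and then fixing a sign sequence gives an instance with $\mathbb{E}[R_T]\ge\sum_tS_t$.

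Two smaller points. First, if the $P_m$ are prescribed, your baseline subclass cannot set $P_m=0$; gap-neutral errors $\mu_m=\hat\mu_m+P_m(1,1)$ work instead. Second, $\mathbb{P}(A)\ge\tfrac12$ plus an unconditional lower bound does not bound $\mathbb{E}[R_T\mathbf{1}_A]$ from below. $A$ depends on the whole trajectory and may correlate with correct guesses, so the conditioning needs its own argument; the paper does not supply one either.
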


\begin{theorem}[Stable-arm lower bound]
\label{thm:stable-arm-consistent-lower}
Consider the stable-arm setting with normalized average regret
\(\bar R_T:=\mathbb{E}\left[\sum_{t=1}^T
\left(\bar V_t(i^\star)-\bar V_t(a_t)\right)\right]\), where \(a_t\) is the
common arm selected at round \(t\). Assume \(\lambda_D=0\) and rewards are
normalized in \([0,1]\). Fix a gap \(\Delta\in(0,1/4)\). There exists a two-arm
stable globally optimal-arm instance such that, for any consistent algorithm,
if the identification time satisfies
\(\tau=\Theta\left(\frac{\log T}{\Delta^2}\right)\), then
\(\bar R_T\ge \Omega(\Delta \tau) \ge \Omega(\tau)\).
\end{theorem}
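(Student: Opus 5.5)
We will prove Theorem~\ref{thm:stable-arm-consistent-lower} by a standard two-instance change-of-measure argument (Lai--Robbins style), adapted to the open system by making the population path deterministic and the reward profile time-invariant. Concretely, we take \(K=2\), fix a deterministic population path with no departures, and give every agent the same Bernoulli means \(\mu_m^1=1/2+\Delta\), \(\mu_m^2=1/2\). Then \(\bar V_t(1)-\bar V_t(2)=\Delta\) for all \(t\), so the instance lies in the stable-arm regime \(\mathcal{S}(1,\Delta)\) with \(i^\dagger=1\). The alternative instance changes only arm \(2\) to mean \(1/2+2\Delta\) for all agents, which makes arm \(2\) stably optimal with gap \(\Delta\). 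Since the common-arm rule makes every active agent pull \(a_t\), the per-round normalized regret is exactly \(\Delta\,\mathbf{1}\{a_t=2\}\). Hence \(\bar R_T=\Delta\,\mathbb{E}[N_2(T)]\), where \(N_2(T)\) counts the rounds in which arm \(2\) is chosen.

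The key step is to lower bound \(\mathbb{E}[N_2(T)]\). The observations from pulling arm \(2\) at round \(t\) are \(M_t\) i.i.d.\ Bernoulli samples, so the KL divergence between the two instances accumulated up to \(T\) is
\[
\mathbb{E}\Big[\sum_{t:a_t=2}M_t\Big]\,\mathrm{kl}(1/2,1/2+2\Delta)=O\Big(\Delta^2\,\mathbb{E}\Big[\sum_{t:a_t=2}M_t\Big]\Big).
\]
Consistency gives \(\mathbb{E}'[T-N_2(T)]=o(T^a)\) under the alternative. The Bretagnolle--Huber / data-processing inequality applied to the event \(\{N_2(T)\le T/2\}\) then yields \(\mathbb{E}[\sum_{t:a_t=2}M_t]\ge c\log T/\Delta^2\). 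This is the effective-sample version of the identification requirement, and it matches the definition \(\tau=\Theta(\log T/\Delta^2)\) of the identification time in the statement.

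The main obstacle is converting the bound on effective samples into a bound on the number of rounds \(N_2(T)\), because \(M_t\) grows. We will handle this through the hypothesis \(\tau=\Theta(\log T/\Delta^2)\). This hypothesis is only consistent with a population path in which the effective sample count per round is \(O(1)\) on the burn-in window, for instance a bounded population or \(M_0=O(1)\) with arrivals negligible before \(\tau\). We will choose such a path in the construction, which is allowed since we only need to exhibit one instance. With \(M_t\le \bar M=O(1)\) we get \(\mathbb{E}[N_2(T)]\ge c\log T/(\bar M\Delta^2)=\Omega(\tau)\), and therefore \(\bar R_T\ge \Omega(\Delta\tau)\).

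The final inequality \(\Omega(\Delta\tau)\ge\Omega(\tau)\) holds treating \(\Delta\) as a fixed constant in \((0,1/4)\), so it is absorbed into the \(\Omega\). We will state explicitly that the hidden constant depends on \(\Delta\). If we prefer an argument that avoids restricting the path, we would alternatively measure \(\tau\) in rounds directly, defining it as the first round by which the algorithm has gathered \(\Theta(\log T/\Delta^2)\) arm-\(2\) samples. We would then note that each pre-\(\tau\) round in which arm \(2\) is pulled costs \(\Delta\), and that a consistent algorithm must spend a constant fraction of the rounds before \(\tau\) on arm \(2\).
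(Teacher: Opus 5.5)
Your proof follows the paper's route. It uses the same two Bernoulli instances: all agents share means $1/2+\Delta$ and $1/2$, and the alternative raises arm $2$ to $1/2+2\Delta$. It uses the same identity $\mathrm{KL}(\mathbb{P}_\nu,\mathbb{P}_{\nu'})=\mathrm{kl}(1/2,1/2+2\Delta)\,\mathbb{E}_\nu[\sum_t M_t\mathbf{1}\{a_t=2\}]$, and the same consistency-plus-data-processing step on $\{N_2(T)\le T/2\}$.

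The only real difference is how you convert effective samples into rounds. The paper keeps a general nondecreasing path and uses $\sum_t M_t\mathbf{1}\{a_t=2\}\le M_T N_2(T)$ to get $\bar R_T\ge c\log T/(\Delta M_T)$. It then replaces $M_T$ by $M_\tau$ by appeal to monotonicity. Since $M_T\ge M_\tau$, that replacement strengthens the bound rather than following from it. Your choice of a path with $M_t\le\bar M=O(1)$ is what actually makes this step valid. The price is that the hard instance is essentially a closed system, so the result is a Lai--Robbins bound in disguise. Since the theorem only asserts existence of one instance, that is legitimate.

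Some cautions:

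\begin{itemize}
\item The bound $M_t\le\bar M$ must hold on all of $[1,T]$, not just on the burn-in window, because arm-$2$ information can be gathered cheaply after $\tau$. Suppose $M_t=1$ for $t\le\tau_0=\Theta(\log T/\Delta^2)$ and a batch of size $\gtrsim\log T/\Delta^2$ arrives at $\tau_0+1$. A consistent algorithm can play arm $1$ throughout burn-in, which costs only $O(\log T)$ under the alternative. It can then collect everything it needs about arm $2$ in $O(1)$ rounds. This meets the hypothesis on $\tau$, yet gives $\bar R_T=O(\Delta)\ll\Delta\tau$. So drop the ``arrivals negligible before $\tau$'' option.
\item For the same reason, drop the fallback in your last paragraph. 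The claim that a consistent algorithm spends a constant fraction of pre-$\tau$ rounds on arm $2$ is false under growing populations.
\item The data-processing step also needs consistency under $\nu$, namely $\mathbb{E}_\nu[N_2(T)]=o(T^a)$, which gives $\mathbb{P}_\nu(N_2(T)\le T/2)\to 1$. The bound under the alternative that you cite is not enough on its own.
\item $\mathrm{kl}(1/2,1/2+2\Delta)=-\tfrac12\log(1-16\Delta^2)$ is $O(\Delta^2)$ uniformly only for $\Delta$ bounded away from $1/4$. With $\Delta$ fixed this is harmless, as you note.
\end{itemize}
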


\begin{theorem}[Stable-arm lower bound]
\label{thm:post-burnin-failure-tight}
Consider the stable-arm setting with normalized average regret
\(\bar R_T=\mathbb{E}[\sum_{t=1}^T(\bar V_t(i_t^\star)-\bar V_t(a_t))]\).
Assume that, after burn-in, the algorithm commits to
\(\widehat i_\tau\) for all \(t>\tau\). Suppose that on
\(\mathcal{S}(\tau,\Delta)\), there exists a unique stable arm
\(i^\dagger\) satisfying
\(\bar V_t(i^\dagger)-\max_{i\neq i^\dagger}\bar V_t(i)\ge \Delta\) for all
\(t>\tau\). Define the post-burn-in misidentification probability
\(q_\tau:=\mathbb{P}(\mathcal{S}(\tau,\Delta),\widehat i_\tau\neq i^\dagger)\).
Then
\(\bar R_T\ge \Delta q_\tau (T-\tau)\).\end{theorem}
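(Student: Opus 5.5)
The bound follows by discarding everything except the post-burn-in rounds on the misidentification event, where each round costs at least \(\Delta\). All summands of \(\bar R_T\) are nonnegative: \(i_t^\star\) maximizes \(\bar V_t\), so \(\bar V_t(i_t^\star)-\bar V_t(a_t)\ge 0\) pointwise. Hence
\[
\bar R_T\;\ge\;\mathbb{E}\Big[\sum_{t=\tau+1}^{T}\big(\bar V_t(i_t^\star)-\bar V_t(a_t)\big)\mathbf{1}\{\mathcal{S}(\tau,\Delta),\ \widehat i_\tau\neq i^\dagger\}\Big].
\]

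Next I evaluate each summand on the event \(F:=\mathcal{S}(\tau,\Delta)\cap\{\widehat i_\tau\neq i^\dagger\}\). On \(\mathcal{S}(\tau,\Delta)\), for every \(t>\tau\), the arm \(i^\dagger\) beats every other arm by at least \(\Delta>0\). It is therefore the unique maximizer, so \(i_t^\star=i^\dagger\) regardless of the tie-breaking rule. Because the algorithm commits, \(a_t=\widehat i_\tau\) for \(t>\tau\). On \(F\) we have \(\widehat i_\tau\neq i^\dagger\), so
\[
\bar V_t(i_t^\star)-\bar V_t(a_t)=\bar V_t(i^\dagger)-\bar V_t(\widehat i_\tau)\ge \bar V_t(i^\dagger)-\max_{i\neq i^\dagger}\bar V_t(i)\ge\Delta .
\]
The integrand is then at least \(\Delta(T-\tau)\mathbf{1}_F\), and taking expectations gives \(\bar R_T\ge\Delta(T-\tau)\mathbb{P}(F)=\Delta q_\tau(T-\tau)\).

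I expect no real obstacle; the only care needed is bookkeeping. First, \(\bar V_t\), \(\mathcal{S}\) and \(\widehat i_\tau\) are random through the population path, so the pointwise bound must be applied inside the expectation before integrating. Second, the normalization must match the statement of Theorem~\ref{thm:post-burnin-failure-tight}, which uses the per-round common-arm form of \(\bar R_T\). If one instead uses the earlier definition that sums over \(m\in\mathcal{M}_t\), each term picks up an extra factor \(M_t\ge 1\). That factor only strengthens the inequality, provided \(M_t\ge1\), which holds once the population is nonempty because there are no departures.
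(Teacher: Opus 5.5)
Your proposal is correct and matches the paper's proof. Both arguments restrict \(\bar R_T\) to the post-burn-in rounds on the event \(\mathcal{S}(\tau,\Delta)\cap\{\widehat i_\tau\neq i^\dagger\}\), then use the commitment \(a_t=\widehat i_\tau\) and the gap \(\Delta\) to bound the integrand by \(\Delta(T-\tau)\) pointwise before taking expectations. Your explicit nonnegativity justification and your remarks on the uniqueness of \(i_t^\star\) and the \(M_t\) normalization only make explicit what the paper leaves implicit.
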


\newpage 

\newpage 

\bibliographystyle{plainnat}
\bibliography{neurips}

\newpage 
\appendix

\section{Technical appendices and supplementary material}

\subsection{Related Work}\label{app:rrelaed_work}

\paragraph{Cooperative multi-agent bandits with fixed populations.}
Cooperative multi-agent multi-armed bandits (MA-MABs) with a fixed set of agents have been studied extensively in recent years, motivated by distributed online learning over networked systems \citep{landgren2016distributed,landgren2016distributed_2,landgren2021distributed,zhu2020distributed,martinez2019decentralized,agarwal2022multi,wangx2022achieving,wangp2020optimal,li2022privacy,sankararaman2019social,chawla2020gossiping,xu2023decentralized,xu2024decentralized,xu2025multi}. Much of this literature focuses on the homogeneous setting, in which all agents face the same reward distribution for a given arm \citep{landgren2016distributed,landgren2016distributed_2,landgren2021distributed,zhu2020distributed,martinez2019decentralized}. Within this line of work, one typically distinguishes centralized architectures, where agents communicate through a coordinator, from decentralized ones, where communication is constrained by a graph topology. The latter introduces additional statistical and algorithmic challenges because information aggregation depends on network structure. Existing analyses cover a range of graph models, including complete graphs \citep{landgren2016distributed}, connected graphs \citep{wangx2022achieving,chawla2020gossiping,sankararaman2019social}, random graphs \citep{xu2023decentralized}, and stochastic block models \citep{xu2025heterogeneous}.

Beyond the homogeneous case, several works have already explored heterogeneous cooperative bandits under fixed populations, but in rather different ways. One line studies heterogeneity through unequal arm accessibility and asynchronous action rates across agents \citep{yang2021cooperative}. Another line allows the reward distribution of the same arm to vary across agents and investigates decentralized learning over random graphs \citep{xu2023decentralized}. More recently, cluster-structured heterogeneity has been studied through stochastic block models, where the latent cluster structure simultaneously affects both communication and reward statistics \citep{xu2025heterogeneous}. Nevertheless, these works all assume that the agent set is fixed throughout the learning horizon. Our setting differs fundamentally in that the agent population itself evolves over time, which changes both the benchmark and the information-sharing mechanism.

\paragraph{Open multi-agent bandits with dynamic populations.}
Work on bandit learning with a time-varying set of agents remains limited. Some studies have considered learning in an open system, but assume that the agent population is bounded \citep{rosenski2016multi, trinh2021high}, meaning that it is always a subset of a fixed, bounded agent set. To the best of our knowledge, the first closely related result is the adversarial open-system model in \citep{nakamura2023cooperative}, where agents may join and leave a distributed bandit network. That work is developed for adversarial rewards and under a substantially different modeling framework. In the stochastic setting, \citep{xu2026open} introduced an open MA-MAB formulation in which the population size may evolve stochastically and can be unbounded over time. However, two assumptions play a central simplifying role there: first, rewards are homogeneous across agents; second, communication takes place over a complete graph. These assumptions eliminate two major sources of difficulty. Homogeneous rewards allow all observations on the same arm to be pooled without correcting for agent identity, while the complete graph removes sparsity and connectivity effects from the communication layer. In contrast, our model allows heterogeneous rewards and nontrivial graph structure, thereby coupling statistical heterogeneity, network effects, and agent churn in a single formulation.


\paragraph{Open multi-agent systems} \citep{deplano2026optimization, liu2026algorithm} have investigated the optimization problem in open systems, where agent population is dynamic as well. The main differences are: 1) we target at the online sequential decision making problem, instead of an optimization problem and thus the formulation and objective functions are fundamentally different, 2) we additionally consider the potential random graph structure which itself is not necessarily connected, and 3) \citep{deplano2026optimization} under strong assumptions that new agents carry over sufficient information for the minimizer of the local optimization problem. \citep{hu2026distributed} has studied dynamic graph in an open multi-agent systems, but 1) assuming that the total agent population is a subset of a fixed agent pool, and 2) focusing on communication designs instead of regret minimization in an online decision learning problem. 

\paragraph{Stochastic/Queuing systems} Stochastic systems constitute a broad class of models in which uncertainty plays a central role in the system evolution, and they are typically studied through probabilistic analysis and stochastic-process theory \citep{cinlar2013introduction, cox2017theory, simaiakis2016queuing, avi1973approximate}. Representative examples include queueing systems \citep{simaiakis2016queuing, avi1973approximate} and Markov decision processes \citep{bellman1957markovian, wiesemann2013robust, even2009online}. Among them, queueing models are especially fundamental. They describe environments in which entities arrive randomly and receive service from one or more servers, with both the arrival and service dynamics governed by stochastic laws. Poisson and compound Poisson processes are among the most standard tools used in this context. Classical queueing theory has mainly focused on performance measures such as waiting times, congestion, and server utilization. However, the same modeling framework is flexible enough to support richer applications beyond traditional service systems. In particular, it has been much less studied in scenarios where the arriving entities are intelligent decision-makers rather than passive customers. This suggests a natural opportunity to use queueing-based stochastic models to describe agent participation in online learning environments. Motivated by this perspective, we adopt queueing-inspired population dynamics, and in particular Poisson processes \citep{samuels1974characterization, pasupathy2010generating, barbour1988stein, janson1987poisson}, to model agent arrivals and departures in the multi-agent multi-armed bandit setting. More broadly, this viewpoint may enable future work to incorporate more sophisticated queueing mechanisms into cooperative online learning systems.

\subsection{Characterization of $\tau$}

The stable-arm result depends on a burn-in time \(\tau\). This time is not an
arbitrary parameter: it consists of two conceptually distinct components. The
first is the time needed for the active population to become sufficiently large
so that future arrivals have limited impact on the global reward profile. The
second is the time needed for the algorithm to identify the stable globally
optimal arm with high probability. We formalize this by writing
\[
    \tau=\max\{\tau_{\mathrm{stab}},\tau_{\mathrm{id}}\}.
\]

\begin{assumption}[Stationary arrival composition]
\label{ass:stationary-arrival-composition}
Assume \(\lambda_D=0\). Each arriving agent \(m\) has a reward-mean vector
\(\mu_m=(\mu_m^1,\ldots,\mu_m^K)\in[0,1]^K\), drawn independently from a fixed
population distribution. Let
\[
    v(i):=\mathbb{E}[\mu_m^i]
\]
be the population-average value of arm \(i\). Assume there is a unique
population-optimal arm \(i^\dagger\in[K]\), with gap
\[
    \Delta
    :=
    v(i^\dagger)-\max_{i\neq i^\dagger}v(i)>0.
\]
\end{assumption}

\begin{definition}[Stability time]
\label{def:tau-stab}
For a confidence level \(\delta\in(0,1)\), define
\[
    N_{\mathrm{stab}}(\Delta,\delta,T)
    :=
    \frac{8}{\Delta^2}\log\frac{2KT}{\delta}.
\]
The population stability time is
\[
    \tau_{\mathrm{stab}}
    :=
    \inf\left\{
    t\in[T]:
    M_s\ge N_{\mathrm{stab}}(\Delta,\delta,T),
    \ \forall s\in\{t,t+1,\ldots,T\}
    \right\}.
\]
Because \(\lambda_D=0\), \(M_t\) is nondecreasing, so this reduces to
\[
    \tau_{\mathrm{stab}}
    =
    \inf\left\{
    t\in[T]:
    M_t\ge N_{\mathrm{stab}}(\Delta,\delta,T)
    \right\}.
\]
\end{definition}

\begin{theorem}[Population stability after \(\tau_{\mathrm{stab}}\)]
\label{thm:population-stability}
Under Assumption~\ref{ass:stationary-arrival-composition}, with probability at
least \(1-\delta\), for every \(t\ge\tau_{\mathrm{stab}}\),
\[
    \bar V_t(i^\dagger)
    -
    \max_{i\neq i^\dagger}\bar V_t(i)
    \ge
    \frac{\Delta}{2}.
\]
Consequently, the globally optimal arm is \(i^\dagger\) for all
\(t\ge\tau_{\mathrm{stab}}\).
\end{theorem}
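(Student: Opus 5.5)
The plan is to apply Hoeffding's inequality to the population averages $\bar V_t(i)$ at each round $t\ge\tau_{\mathrm{stab}}$ and then take a union bound over rounds and arms. Because $\lambda_D=0$, the active set at round $t$ consists of $M_t$ agents. Under Assumption~\ref{ass:stationary-arrival-composition}, their reward-mean vectors are independent draws with $\mathbb{E}[\mu_m^i]=v(i)$ and $\mu_m^i\in[0,1]$. (If the initial $M_0$ agents are not drawn from the same law, I would either assume they are, or absorb their bounded contribution $M_0/M_t$ into the deviation budget. I will assume they are i.i.d. as well.)

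Fix $t$ and $i$. Conditionally on the population path, $M_t$ is determined, and $\bar V_t(i)$ is an average of $M_t$ independent $[0,1]$-valued variables with mean $v(i)$. Hoeffding gives
\[
\mathbb{P}\bigl(|\bar V_t(i)-v(i)|>\Delta/4\bigr)\le 2\exp\bigl(-M_t\Delta^2/8\bigr).
\]
For $t\ge\tau_{\mathrm{stab}}$, monotonicity of $M_t$ gives $M_t\ge N_{\mathrm{stab}}(\Delta,\delta,T)=\frac{8}{\Delta^2}\log\frac{2KT}{\delta}$, so the right-hand side is at most $\delta/(KT)$. A union bound over at most $T$ rounds and $K$ arms then yields, with probability at least $1-\delta$, $|\bar V_t(i)-v(i)|\le\Delta/4$ simultaneously for all $t\ge\tau_{\mathrm{stab}}$ and all $i$.

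On this event, for every $i\neq i^\dagger$,
\[
\bar V_t(i^\dagger)-\bar V_t(i)\ge \bigl(v(i^\dagger)-\Delta/4\bigr)-\bigl(v(i)+\Delta/4\bigr)\ge \Delta-\Delta/2=\Delta/2.
\]
This gap is strictly positive, so $i^\dagger$ is the unique maximizer, which gives the consequence.

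The main obstacle is that $\tau_{\mathrm{stab}}$ and $M_t$ are random, and the population path could in principle be correlated with the reward-mean vectors. I would handle this by conditioning on the arrival process, which is independent of the drawn vectors under the assumption. Then each $M_t$ is fixed, and the bound holds for each fixed $t$ with $M_t\ge N_{\mathrm{stab}}$; the union over the deterministic index set $[T]\times[K]$ is legitimate, and averaging over the arrival process preserves the $1-\delta$ guarantee. Alternatively, since agents are indexed in arrival order, I would apply the bound to the prefix averages of the i.i.d. sequence $\mu_1,\mu_2,\ldots$ for every prefix length $n\ge N_{\mathrm{stab}}$. This removes the dependence on the timing entirely, at the cost of a union over prefix lengths up to $M_T$; that union is bounded by $T$ only if $M_T$ is at most of order $T$, so I would prefer the conditioning route.
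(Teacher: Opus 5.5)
Your proposal is correct and follows the paper's proof essentially line for line: Hoeffding at $\varepsilon=\Delta/4$, the bound $M_t\ge N_{\mathrm{stab}}(\Delta,\delta,T)$ that makes each failure probability at most $\delta/(KT)$, a union bound over $[T]\times[K]$, and the triangle-inequality gap argument. Your explicit conditioning on the whole arrival path, with the reward vectors (including those of the initial $M_0$ agents) i.i.d.\ and independent of that path, makes rigorous what the paper compresses into ``conditional on $M_t$''.
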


\begin{proof}
Fix a round \(t\) and an arm \(i\). Conditional on \(M_t\), the value
\(\bar V_t(i)\) is the empirical average of \(M_t\) bounded random variables in
\([0,1]\) with mean \(v(i)\). By Hoeffding's inequality,
\[
    \mathbb{P}
    \left(
    |\bar V_t(i)-v(i)|>\varepsilon
    \mid M_t
    \right)
    \le
    2\exp(-2M_t\varepsilon^2).
\]
Taking \(\varepsilon=\Delta/4\), we obtain
\[
    \mathbb{P}
    \left(
    |\bar V_t(i)-v(i)|>\frac{\Delta}{4}
    \mid M_t
    \right)
    \le
    2\exp\left(-\frac{M_t\Delta^2}{8}\right).
\]
On the event \(t\ge\tau_{\mathrm{stab}}\), we have
\(M_t\ge N_{\mathrm{stab}}(\Delta,\delta,T)\). Hence
\[
    2\exp\left(-\frac{M_t\Delta^2}{8}\right)
    \le
    \frac{\delta}{KT}.
\]
A union bound over all \(i\in[K]\) and all \(t\in[T]\) gives that, with
probability at least \(1-\delta\),
\[
    \max_{i\in[K]}|\bar V_t(i)-v(i)|
    \le
    \frac{\Delta}{4},
    \qquad \forall t\ge\tau_{\mathrm{stab}}.
\]
On this event, for any \(i\neq i^\dagger\),
\[
\begin{aligned}
    \bar V_t(i^\dagger)-\bar V_t(i)
    &\ge
    v(i^\dagger)-v(i)
    -
    |\bar V_t(i^\dagger)-v(i^\dagger)|
    -
    |\bar V_t(i)-v(i)| \\
    &\ge
    \Delta-\frac{\Delta}{4}-\frac{\Delta}{4}
    =
    \frac{\Delta}{2}.
\end{aligned}
\]
Thus \(i^\dagger\) is the unique globally optimal arm for all
\(t\ge\tau_{\mathrm{stab}}\).
\end{proof}

\begin{definition}[Identification time]
\label{def:tau-id}
Let \(N_i(t)\) denote the effective number of samples used by the algorithm to
estimate \(\bar V_t(i)\) during burn-in. Define
\[
    N_{\mathrm{id}}(\Delta,\delta)
    :=
    \frac{8}{\Delta^2}\log\frac{2K}{\delta}.
\]
The identification time is
\[
    \tau_{\mathrm{id}}
    :=
    \inf\left\{
    t\in[T]:
    \min_{i\in[K]}N_i(t)\ge N_{\mathrm{id}}(\Delta,\delta)
    \right\}.
\]
\end{definition}

\begin{theorem}[Identification after \(\tau_{\mathrm{id}}\)]
\label{thm:identification-time}
Suppose that, during burn-in, the estimator \(\widehat{\bar V}_t(i)\) satisfies
Hoeffding-type concentration:
\[
    \mathbb{P}
    \left(
    |\widehat{\bar V}_t(i)-\bar V_t(i)|>\varepsilon
    \right)
    \le
    2\exp(-2N_i(t)\varepsilon^2).
\]
Then, with probability at least \(1-\delta\),
\[
    \max_{i\in[K]}
    |\widehat{\bar V}_{\tau_{\mathrm{id}}}(i)-\bar V_{\tau_{\mathrm{id}}}(i)|
    \le
    \frac{\Delta}{4}.
\]
\end{theorem}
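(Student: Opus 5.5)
The argument mirrors the proof of Theorem~\ref{thm:population-stability}: apply the assumed Hoeffding-type tail bound arm by arm at the single time \(\tau_{\mathrm{id}}\), choose \(\varepsilon=\Delta/4\), and take a union bound over the \(K\) arms. The target is to show that each arm's failure probability is at most \(\delta/K\), so that the union bound yields \(1-\delta\).

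First, fix an arm \(i\in[K]\). By Definition~\ref{def:tau-id}, at time \(t=\tau_{\mathrm{id}}\) we have \(N_i(\tau_{\mathrm{id}})\ge\min_{j}N_j(\tau_{\mathrm{id}})\ge N_{\mathrm{id}}(\Delta,\delta)=\frac{8}{\Delta^2}\log\frac{2K}{\delta}\). Plugging \(\varepsilon=\Delta/4\) into the assumed concentration gives
\[
\mathbb{P}\bigl(|\widehat{\bar V}_{\tau_{\mathrm{id}}}(i)-\bar V_{\tau_{\mathrm{id}}}(i)|>\Delta/4\bigr)\le 2\exp\bigl(-N_i(\tau_{\mathrm{id}})\Delta^2/8\bigr)\le 2\exp\Bigl(-\log\tfrac{2K}{\delta}\Bigr)=\frac{\delta}{K}.
\]
Here \(2N_i\varepsilon^2=N_i\Delta^2/8\), so the constant \(8\) in \(N_{\mathrm{id}}\) matches exactly. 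Second, a union bound over the \(K\) arms bounds the probability that some arm deviates by more than \(\Delta/4\) by \(\delta\). The complement is the claimed event, so the statement holds with probability at least \(1-\delta\).

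The main obstacle is that \(\tau_{\mathrm{id}}\) is a random time, and \(N_i(t)\) is generally random as well. The hypothesis is stated at a fixed \(t\), so applying it directly at \(t=\tau_{\mathrm{id}}\) needs justification. I would handle this in one of two ways. The first is to read the hypothesis conditionally, in the same way that Theorem~\ref{thm:population-stability} conditions on \(M_t\). If the burn-in exploration schedule is deterministic, for example round-robin with a deterministic population-independent sample count, then \(\tau_{\mathrm{id}}\) and the \(N_i(t)\) are deterministic and the argument is immediate. The second, more robust, route is to note that on the event \(\{\tau_{\mathrm{id}}=t\}\) the sample counts satisfy \(N_i(t)\ge N_{\mathrm{id}}\). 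One can then apply a time-uniform (self-normalized or peeling) version of the Hoeffding bound, or union-bound over \(t\in[T]\). The union bound over \(t\) would replace \(\log(2K/\delta)\) with \(\log(2KT/\delta)\), costing only a logarithmic factor. In the proof itself I would take the first route, treating the sampling schedule as fixed or conditioning on it, since this is consistent with how the hypothesis is phrased.
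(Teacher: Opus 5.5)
Your proposal is correct and matches the paper's proof: you plug $\varepsilon=\Delta/4$ into the assumed bound at $t=\tau_{\mathrm{id}}$, use $N_i(\tau_{\mathrm{id}})\ge \frac{8}{\Delta^2}\log\frac{2K}{\delta}$ from Definition~\ref{def:tau-id} to get a per-arm failure probability of at most $\delta/K$, and take a union bound over the $K$ arms. Your point that $\tau_{\mathrm{id}}$ and $N_i(t)$ are random is fair, and the paper does not address it: it applies the fixed-$t$ hypothesis directly at $\tau_{\mathrm{id}}$, which is exactly your first route of treating the sampling schedule as fixed or conditioning on it.
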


\begin{proof}
By definition of \(\tau_{\mathrm{id}}\), for every arm \(i\),
\[
    N_i(\tau_{\mathrm{id}})
    \ge
    \frac{8}{\Delta^2}\log\frac{2K}{\delta}.
\]
Taking \(\varepsilon=\Delta/4\), Hoeffding's inequality gives
\[
    \mathbb{P}
    \left(
    |\widehat{\bar V}_{\tau_{\mathrm{id}}}(i)-\bar V_{\tau_{\mathrm{id}}}(i)|
    >
    \frac{\Delta}{4}
    \right)
    \le
    2\exp\left(
    -2N_i(\tau_{\mathrm{id}})\frac{\Delta^2}{16}
    \right)
    =
    2\exp\left(
    -\frac{N_i(\tau_{\mathrm{id}})\Delta^2}{8}
    \right)
    \le
    \frac{\delta}{K}.
\]
A union bound over arms gives
\[
    \mathbb{P}
    \left(
    \max_{i\in[K]}
    |\widehat{\bar V}_{\tau_{\mathrm{id}}}(i)-\bar V_{\tau_{\mathrm{id}}}(i)|
    >
    \frac{\Delta}{4}
    \right)
    \le
    \delta.
\]
\end{proof}

\begin{theorem}[Characterization of the burn-in time]
\label{thm:tau-characterization}
Suppose Assumption~\ref{ass:stationary-arrival-composition} holds. Let
\(\tau=\max\{\tau_{\mathrm{stab}},\tau_{\mathrm{id}}\}\). Then, with probability
at least \(1-2\delta\),

\[
    \widehat i_\tau
    :=
    \arg\max_{i\in[K]}\widehat{\bar V}_{\tau}(i)
    =
    i^\dagger,
\]

and \(i^\dagger\) remains globally optimal for every \(t\ge\tau\). Consequently,
a commit-after-burn-in policy that selects \(\widehat i_\tau\) for all
\(t>\tau\) incurs no post-burn-in regret. Its total regret satisfies

\[
    \bar{R}_T
    \le
    \sum_{t=1}^{\tau}M_t
\]

on this event.
\end{theorem}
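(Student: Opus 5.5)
The plan is to intersect the two high-probability events already established and then argue deterministically on their intersection.

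\textbf{Step 1: the good events.} Let \(\mathcal{E}_{\mathrm{stab}}\) be the event of Theorem~\ref{thm:population-stability}. On it, for every \(t\ge\tau_{\mathrm{stab}}\),
\[
\bar V_t(i^\dagger)-\max_{i\neq i^\dagger}\bar V_t(i)\ge \Delta/2 .
\]
Let \(\mathcal{E}_{\mathrm{id}}\) be the identification event: the estimates evaluated at time \(\tau\) are within \(\Delta/4\) of \(\bar V_\tau(i)\) for every arm.

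\textbf{Step 2: extending identification from \(\tau_{\mathrm{id}}\) to \(\tau\).} Theorem~\ref{thm:identification-time} is stated at \(\tau_{\mathrm{id}}\), but we need the estimates at \(\tau=\max\{\tau_{\mathrm{stab}},\tau_{\mathrm{id}}\}\ge\tau_{\mathrm{id}}\). I would rerun its argument at \(\tau\). Since the effective sample counts \(N_i(t)\) are nondecreasing during burn-in, \(N_i(\tau)\ge N_i(\tau_{\mathrm{id}})\ge N_{\mathrm{id}}(\Delta,\delta)\). The Hoeffding-type hypothesis then gives a per-arm failure probability of at most \(\delta/K\), and a union bound over arms gives \(\mathbb{P}(\mathcal{E}_{\mathrm{id}}^c)\le\delta\).

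This is the main obstacle. \(\tau\) is random and depends on the realized population path and the sampling, so the fixed-time concentration cannot be applied naively. I would handle it in one of two ways:
\begin{itemize}
\item condition on the arrival process, which makes \(\tau_{\mathrm{stab}}\) deterministic (\(\tau_{\mathrm{id}}\) is deterministic under round-robin exploration), and read the hypothesis of Theorem~\ref{thm:identification-time} as holding at this stopping time; or
\item take a union bound over all \(t\in[T]\), at the cost of replacing \(\log(2K/\delta)\) by \(\log(2KT/\delta)\). This only affects constants.
\end{itemize}
A second union bound, over \(\mathcal{E}_{\mathrm{stab}}^c\) and \(\mathcal{E}_{\mathrm{id}}^c\), gives probability at least \(1-2\delta\) for \(\mathcal{E}_{\mathrm{stab}}\cap\mathcal{E}_{\mathrm{id}}\).

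\textbf{Step 3: correct identification on the intersection.} Since \(\tau\ge\tau_{\mathrm{stab}}\), the gap at \(\tau\) is at least \(\Delta/2\). For any \(i\neq i^\dagger\),
\[
\widehat{\bar V}_\tau(i^\dagger)-\widehat{\bar V}_\tau(i)\ge \bigl(\bar V_\tau(i^\dagger)-\bar V_\tau(i)\bigr)-\tfrac{\Delta}{4}-\tfrac{\Delta}{4}\ge 0 .
\]
Equality is possible here. I would either use a strict inequality in the concentration event, or tighten the radius to \(\Delta/8\) by adjusting the constant in \(N_{\mathrm{id}}\). Either way this yields \(\widehat i_\tau=i^\dagger\). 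The same event \(\mathcal{E}_{\mathrm{stab}}\) gives that \(i^\dagger\) is the unique maximizer of \(\bar V_t\) for all \(t\ge\tau\).

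\textbf{Step 4: regret decomposition.} For \(t>\tau\), the committed arm is \(a_t=i^\dagger=i_t^\star\), so these rounds contribute zero. For \(t\le\tau\), each summand \(\bar V_t(i_t^\star)-\bar V_t(a_m(t))\) lies in \([0,1]\) because the rewards are normalized. Summing over \(m\in\mathcal{M}_t\) bounds round \(t\) by \(M_t\), so
\[
\bar R_T\le\sum_{t=1}^\tau M_t .
\]
This is the same accounting as in Theorem~\ref{thm:stable-global-arm}, which I would simply invoke, since \(\mathcal{E}_{\mathrm{stab}}\cap\mathcal{E}_{\mathrm{id}}\subseteq\mathcal{S}(\tau,\Delta/2)\cap\mathcal{I}(\tau,\Delta/2)\) up to the constant adjustment made in Step 3.
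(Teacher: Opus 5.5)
Your proposal is correct and follows the paper's proof essentially step for step: you union-bound the stability event of Theorem~\ref{thm:population-stability} with the identification event, compare $\widehat{\bar V}_\tau$ across arms using the $\Delta/2$ gap and the $\Delta/4$ radius (with the same tie-breaking or $\Delta/8$ fix the paper mentions), and bound the burn-in regret by $\sum_{t\le\tau}M_t$. Your Step 2 is more careful than the paper, which applies Theorem~\ref{thm:identification-time} directly at $\tau$ rather than at $\tau_{\mathrm{id}}$, without addressing the monotonicity of $N_i$ or the randomness of $\tau$; note, though, that your union-bound-over-$t$ alternative changes $\log(2K/\delta)$ to $\log(2KT/\delta)$ in $N_{\mathrm{id}}$, which is more than a change of constants.
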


\begin{proof}
By Theorem~\ref{thm:population-stability}, with probability at least
\(1-\delta\), for all \(t\ge\tau_{\mathrm{stab}}\),

\[
    \bar V_t(i^\dagger)
    -
    \max_{i\neq i^\dagger}\bar V_t(i)
    \ge
    \frac{\Delta}{2}.
\]

Since \(\tau\ge\tau_{\mathrm{stab}}\), this holds at time \(\tau\). By
Theorem~\ref{thm:identification-time}, with probability at least \(1-\delta\),

\[
    \max_{i\in[K]}
    |\widehat{\bar V}_{\tau}(i)-\bar V_{\tau}(i)|
    \le
    \frac{\Delta}{4}.
\]

Taking a union bound, both events hold with probability at least \(1-2\delta\).

On their intersection, for any \(i\neq i^\dagger\),

\[
\begin{aligned}
    \widehat{\bar V}_{\tau}(i^\dagger)-\widehat{\bar V}_{\tau}(i)
    &\ge
    \bar V_{\tau}(i^\dagger)-\frac{\Delta}{4}
    -
    \left(\bar V_{\tau}(i)+\frac{\Delta}{4}\right) \\
    &=
    \bar V_{\tau}(i^\dagger)-\bar V_{\tau}(i)
    -
    \frac{\Delta}{2} \\
    &\ge
    0.
\end{aligned}
\]

If ties are broken in favor of \(i^\dagger\), then
\(\widehat i_\tau=i^\dagger\). If one wants strict identification without a
tie-breaking convention, replace the constants above by \(\Delta/8\) in the
identification event.

Since \(i^\dagger\) remains globally optimal for all \(t\ge\tau\), committing
to \(\widehat i_\tau=i^\dagger\) incurs no regret after \(\tau\). During
burn-in, rewards are normalized in \([0,1]\), so the group regret at round \(t\)
is at most \(M_t\). Therefore,

\[
    \bar{R}_T
    \le
    \sum_{t=1}^{\tau}M_t.
\]
\end{proof}

\subsubsection{Instantiations}

\begin{Corollary}[Burn-in rate under Poisson arrivals]
\label{cor:tau-poisson}
Assume \(\lambda_D=0\) and \(A_t\sim\mathrm{Poisson}(\lambda_A)\) independently
over time. Then \(M_t=M_0+\sum_{s=1}^t A_s\). With probability at least
\(1-\delta\),

\[
    M_t\ge M_0+\frac{\lambda_A t}{2},
    \qquad
    \forall t\ge
    \frac{8}{\lambda_A}\log\frac{T}{\delta}.
\]

Consequently,

\[
    \tau_{\mathrm{stab}}
    =
    O\left(
    \frac{1}{\lambda_A}
    \log\frac{T}{\delta}
    +
    \frac{1}{\lambda_A\Delta^2}
    \log\frac{KT}{\delta}
    \right)
\]

with high probability.
\end{Corollary}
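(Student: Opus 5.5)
The plan has two parts: a lower-tail bound for the Poisson count \(M_t-M_0\), made uniform over \(t\in[T]\), and then a translation of that growth into a bound on \(\tau_{\mathrm{stab}}\) through Definition~\ref{def:tau-stab}.

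\textbf{Step 1: lower tail for a single round.} Since \(\lambda_D=0\) and the \(A_s\) are independent, \(M_t-M_0=\sum_{s=1}^t A_s\sim\mathrm{Poisson}(\lambda_A t)\). I will use the standard multiplicative Chernoff lower-tail bound for Poisson variables. For \(X\sim\mathrm{Poisson}(\mu)\) it gives
\[
\mathbb{P}(X\le \mu/2)\le \exp(-\mu/8).
\]
This follows from optimizing \(\mathbb{E}[e^{-\theta X}]=\exp(\mu(e^{-\theta}-1))\), with \(\theta=\log 2\), and the elementary inequality \((1-\tfrac12)\log\tfrac12+\tfrac12\ge \tfrac18\). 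With \(\mu=\lambda_A t\), we get \(\mathbb{P}(M_t< M_0+\lambda_A t/2)\le \exp(-\lambda_A t/8)\).

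\textbf{Step 2: uniformity over rounds.} Whenever \(t\ge (8/\lambda_A)\log(T/\delta)\), the single-round failure probability is at most \(\delta/T\). A union bound over the at most \(T\) such rounds \(t\in[T]\) then shows that, with probability at least \(1-\delta\),
\[
M_t\ge M_0+\lambda_A t/2 \quad\text{for all } t\in[T] \text{ with } t\ge (8/\lambda_A)\log(T/\delta).
\]
This is the first claim. Monotonicity of \(M_t\) is not needed here, because the union bound already covers every round.

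\textbf{Step 3: bound on \(\tau_{\mathrm{stab}}\).} On the event from Step 2, let
\[
t^\ast:=\max\Bigl\{\tfrac{8}{\lambda_A}\log\tfrac{T}{\delta},\ \tfrac{2N_{\mathrm{stab}}(\Delta,\delta,T)}{\lambda_A}\Bigr\}.
\]
At \(t=\lceil t^\ast\rceil\) we have \(M_t\ge \lambda_A t/2\ge N_{\mathrm{stab}}\). Since \(M_t\) is nondecreasing, the reduced form of Definition~\ref{def:tau-stab} gives \(\tau_{\mathrm{stab}}\le \lceil t^\ast\rceil\). Substituting \(N_{\mathrm{stab}}=(8/\Delta^2)\log(2KT/\delta)\) bounds this by a constant times
\[
\frac{1}{\lambda_A}\log\frac{T}{\delta}+\frac{1}{\lambda_A\Delta^2}\log\frac{KT}{\delta},
\]
using \(\log(2KT/\delta)\le 2\log(KT/\delta)\) when \(KT/\delta\ge 2\). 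This is the claimed rate. If \(t^\ast>T\), the bound holds vacuously, or one adopts the convention \(\tau_{\mathrm{stab}}=T\).

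The only delicate point is the Chernoff constant in Step 1, namely checking that the exponent is at least \(\mu/8\). This needs the inequality \(\mu(1-\tfrac12\cdot(1+\log 2))\ge \mu/8\), i.e. \(1-\tfrac12(1+\log2)\approx 0.153\ge 0.125\), which holds. Everything else is bookkeeping.
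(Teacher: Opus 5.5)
Your proposal is correct and follows the same route as the paper. It uses the Poisson lower-tail Chernoff bound $\mathbb{P}(\sum_{s\le t}A_s\le \lambda_A t/2)\le \exp(-\lambda_A t/8)$, takes a union bound over rounds $t\ge (8/\lambda_A)\log(T/\delta)$, and then solves $\lambda_A t/2\ge N_{\mathrm{stab}}(\Delta,\delta,T)$ for $t$. Beyond the paper, you spell out details it leaves implicit: the check of the constant $1/8$, dropping $M_0$, the ceiling, and the edge case where the threshold exceeds $T$.
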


\begin{proof}
Since \(\sum_{s=1}^t A_s\sim\mathrm{Poisson}(\lambda_A t)\), a standard Chernoff
bound gives

\[
    \mathbb{P}
    \left(
    \sum_{s=1}^t A_s
    \le
    \frac{\lambda_A t}{2}
    \right)
    \le
    \exp\left(-\frac{\lambda_A t}{8}\right).
\]

A union bound over \(t=1,\ldots,T\) gives that, with probability at least
\(1-\delta\),

\[
    \sum_{s=1}^t A_s
    \ge
    \frac{\lambda_A t}{2}
\]

for all \(t\ge (8/\lambda_A)\log(T/\delta)\). Hence

\[
    M_t
    \ge
    M_0+\frac{\lambda_A t}{2}.
\]

To ensure \(M_t\ge N_{\mathrm{stab}}(\Delta,\delta,T)\), it suffices that

\[
    M_0+\frac{\lambda_A t}{2}
    \ge
    \frac{8}{\Delta^2}\log\frac{2KT}{\delta}.
\]

Solving for \(t\) gives the stated bound.
\end{proof}

\begin{Corollary}[Identification time under round-robin exploration]
\label{cor:tau-id-round-robin}
Suppose that during burn-in, all active agents follow a round-robin exploration
schedule over \(K\) arms. Then each arm receives at least

\[
    N_i(t)
    \ge
    \frac{1}{K}
    \sum_{s=1}^t M_s
    -
    1
\]

effective samples by time \(t\). Therefore,

\[
    \tau_{\mathrm{id}}
    \le
    \inf
    \left\{
    t:
    \frac{1}{K}\sum_{s=1}^t M_s
    \ge
    \frac{8}{\Delta^2}\log\frac{2K}{\delta}+1
    \right\}.
\]

If \(M_s\ge M_0+\lambda_A s/2\), then

\[
    \tau_{\mathrm{id}}
    =
    O\left(
    \sqrt{
    \frac{K}{\lambda_A\Delta^2}
    \log\frac{K}{\delta}
    }
    \right)
\]

when the growth term dominates.
\end{Corollary}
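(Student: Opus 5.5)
The first claim follows from counting samples under round-robin; the second comes from substituting the linear growth bound for \(M_s\) and solving for \(t\).

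\textbf{Step 1: per-arm sample counts.} During burn-in the active agents cycle through the \(K\) arms. I will fix the convention that at round \(s\) the common arm is \(a_s = ((s-1)\bmod K)+1\), so all \(M_s\) active agents pull the same arm. Each pull contributes one observation to the aggregate estimator of \(\bar V(i)\). Hence
\[
N_i(t)=\sum_{s\le t:\,a_s=i} M_s .
\]
The rounds \(s\le t\) assigned to arm \(i\) form an arithmetic progression with step \(K\). When there are no departures, \(M_s\) is nondecreasing. I will compare the block sums arm by arm: within each complete cycle of \(K\) rounds, arm \(i\)'s single round can be matched to a neighbouring round of the cycle. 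This shows that every arm's share is at least the block average \(\frac1K\sum M_s\), minus a boundary error from the last incomplete cycle. The minus-one correction in the statement is meant to absorb that boundary error.

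With monotone but unbounded \(M_s\), the error is really a single population value \(M_t\), not \(1\). I expect this to be the delicate step. The first thing I will try is reading "samples" as per-arm round counts in the normalized estimator, where the loss is at most one round; failing that, I will accept an \(O(M_t)\) slack that is lower order. Either way, the stated bound on \(\tau_{\mathrm{id}}\) then follows directly from Definition~\ref{def:tau-id}: once \(\frac1K\sum_{s\le t}M_s \ge N_{\mathrm{id}}(\Delta,\delta)+1\), we have \(\min_i N_i(t)\ge N_{\mathrm{id}}\).

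\textbf{Step 2: the rate under linear growth.} Using \(M_s\ge M_0+\lambda_A s/2\), which Corollary~\ref{cor:tau-poisson} supplies with high probability, gives
\[
\sum_{s=1}^t M_s \ge M_0 t+\frac{\lambda_A t(t+1)}{4}\ge \frac{\lambda_A t^2}{4}.
\]
It therefore suffices that
\[
\frac{\lambda_A t^2}{4K}\ge \frac{8}{\Delta^2}\log\frac{2K}{\delta}+1 .
\]
In the regime where the growth term dominates (so \(M_0 t\) is negligible), solving for \(t\) gives
\[
t = O\!\left(\sqrt{\frac{K}{\lambda_A\Delta^2}\log\frac{K}{\delta}}\right),
\]
after absorbing the constant \(2\) inside the logarithm and the \(+1\) into the \(O(\cdot)\). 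This gives the claimed bound on \(\tau_{\mathrm{id}}\).
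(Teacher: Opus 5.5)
Your route is the same as the paper's. Its proof also counts \(\sum_{s\le t}M_s\) agent-level observations, invokes ``each arm is selected at least once every \(K\) rounds,'' concludes \(N_i(t)\ge\frac1K\sum_{s\le t}M_s-1\), and then solves the same quadratic as your Step~2. You are right that the loss is of order \(M_t\), not \(1\), and this is a genuine gap in the paper's proof as well: those two facts do not force an even split of the observations when \(M_s\) varies. Under the convention you fix (a common arm \(a_s=((s-1)\bmod K)+1\), which is also what the paper's algorithm and its lower-bound proof use), the first displayed inequality is false. With \(K=2\), \(M_1=1\), \(M_2=100\) one has \(N_1(2)=1<49.5=\frac12(1+100)-1\). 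Moreover, for \(t<K\), every arm not yet pulled has \(N_i(t)=0\) while the right-hand side can be arbitrarily large.

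Neither of your fallbacks proves the statement. Fallback (a) gives \(N_i(t)\ge\lfloor t/K\rfloor\), which is not the displayed bound; the displayed bound even fails under that reading once \(M_s\ge 2\) and \(t>2K\). With round counts, the \(\lambda_A t^2\) growth you use in Step~2 also disappears. Fallback (b) is true but weaker. Your Step~2, however, silently reverts to the \(-1\) version when it requires \(\frac{\lambda_A t^2}{4K}\ge \frac{8}{\Delta^2}\log\frac{2K}{\delta}+1\). Calling an \(O(M_t)\) loss ``lower order'' would need an upper bound on \(M_t\), and the hypothesis \(M_s\ge M_0+\lambda_A s/2\) does not provide one.

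What you can actually prove under your convention (with \(\lambda_D=0\), so \(M_s\) is nondecreasing) comes from comparing across cycles, not within one. Inside a single cycle, arm \(1\)'s round is the earliest and hence the smallest, so your within-cycle matching fails as phrased. Across cycles it works: for the block \(B_j=\{jK+1,\dots,(j+1)K\}\) one has \(M_{(j+1)K+i}\ge\max_{s\in B_j}M_s\ge\frac1K\sum_{s\in B_j}M_s\). This yields \(N_i(t)\ge\frac1K\sum_{s=1}^{t-2K+1}M_s\), using only lower bounds on \(M_s\). Plugging in the linear growth gives \(\tau_{\mathrm{id}}\le 2K+O\bigl(\sqrt{K\log(K/\delta)/(\lambda_A\Delta^2)}\bigr)\). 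The additive \(K\) cannot be dropped: arm \(K\) is first pulled at round \(K\), so \(\tau_{\mathrm{id}}\ge K\). The claimed rate therefore fails whenever \(\lambda_A\Delta^2K\gg\log(K/\delta)\), even when \(M_0=0\) so that the growth term trivially dominates.

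Alternatively, the \(-1\) is exactly right if the round-robin cycles over agent-level pulls instead of rounds. Order the agents within each round and send the \(n\)-th pull overall to arm \(((n-1)\bmod K)+1\). Then each arm receives \(\lfloor S_t/K\rfloor\) or \(\lceil S_t/K\rceil\) of the \(S_t=\sum_{s\le t}M_s\) samples, and \(\lfloor S_t/K\rfloor> S_t/K-1\), so your Step~2 goes through verbatim. You should commit explicitly to one of these two schedules and state the matching inequality and rate, rather than try to recover the \(-1\) under a common-arm schedule.
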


\begin{proof}
Under round-robin exploration, each arm is selected at least once every \(K\)
rounds. Since \(M_s\) agents are active at round \(s\), the total number of
agent-level observations collected up to time \(t\) is \(\sum_{s=1}^t M_s\).
Therefore each arm receives at least
\[
    \frac{1}{K}\sum_{s=1}^t M_s-1
\]
samples. By Definition~\ref{def:tau-id}, it is enough that

\[
    \frac{1}{K}\sum_{s=1}^t M_s
    \ge
    \frac{8}{\Delta^2}\log\frac{2K}{\delta}+1.
\]

If \(M_s\ge M_0+\lambda_A s/2\), then

\[
    \sum_{s=1}^t M_s
    \ge
    M_0t+\frac{\lambda_A}{4}t(t+1).
\]

When the quadratic term dominates, it is sufficient that

\[
    \frac{\lambda_A}{4K}t^2
    \gtrsim
    \frac{1}{\Delta^2}\log\frac{K}{\delta}.
\]

Thus,

\[
    \tau_{\mathrm{id}}
    =
    O\left(
    \sqrt{
    \frac{K}{\lambda_A\Delta^2}
    \log\frac{K}{\delta}
    }
    \right).
\]
\end{proof}

\paragraph{Model-specific characterization of the burn-in time.}
The stable-arm burn-in time can be written as
\[
    \tau=\max\{\tau_{\mathrm{stab}},\tau_{\mathrm{id}}\}.
\]
The first component, \(\tau_{\mathrm{stab}}\), is population-level and is common
across the special cases. It is the time needed for the active population average
\(\bar V_t(i)\) to concentrate around its limiting population value \(v(i)\). If
arrivals are stationary with rate \(\lambda_A\), rewards are bounded in
\([0,1]\), and the limiting population gap is
\[
    \Delta:=v(i^\dagger)-\max_{i\neq i^\dagger}v(i)>0,
\]
then under Poisson arrivals one obtains, with high probability,
\[
    \tau_{\mathrm{stab}}
    =
    O\left(
    \frac{1}{\lambda_A}\log\frac{T}{\delta}
    +
    \frac{1}{\lambda_A\Delta^2}\log\frac{KT}{\delta}
    \right).
\]
The second component, \(\tau_{\mathrm{id}}\), is model-dependent. It is the time
needed for the algorithm's estimator to identify the stable arm within accuracy
\(\Delta/4\).

\paragraph{Linear stochastic case.}
In the linear stochastic model, \(\mu_m^i=x_m^\top\theta_i\), where
\(\|x_m\|_2\le 1\). The identification time is determined by how quickly the
transferred linear estimator concentrates. If
\[
    \|\hat\theta_i(t)-\theta_i\|_2
    \le
    C
    \sqrt{
    \frac{d\log(KT/\delta)}{N_i^{\mathrm{eff}}(t)}
    },
\]
then it is sufficient to have
\[
    N_i^{\mathrm{eff}}(t)
    \gtrsim
    \frac{d\log(KT/\delta)}{\Delta^2},
    \qquad \forall i\in[K].
\]
Therefore,
\[
    \tau_{\mathrm{id}}^{\mathrm{lin}}
    :=
    \inf\left\{
    t:
    \min_{i\in[K]}N_i^{\mathrm{eff}}(t)
    \gtrsim
    \frac{d\log(KT/\delta)}{\Delta^2}
    \right\}.
\]
If \(N_i^{\mathrm{eff}}(t)\ge c t^\gamma\), with
\(\gamma\in\{1,2\}\), then
\[
    \tau_{\mathrm{id}}^{\mathrm{lin}}
    =
    O\left(
    \left(
    \frac{d\log(KT/\delta)}
    {\Delta^2}
    \right)^{1/\gamma}
    \right).
\]
Thus, under the strong open-system information regime \(\gamma=2\),
\[
    \tau_{\mathrm{id}}^{\mathrm{lin}}
    =
    O\left(
    \frac{\sqrt{d\log(KT/\delta)}}{\Delta}
    \right),
\]
whereas under the standard regime \(\gamma=1\),
\[
    \tau_{\mathrm{id}}^{\mathrm{lin}}
    =
    O\left(
    \frac{d\log(KT/\delta)}{\Delta^2}
    \right).
\]
Hence
\[
    \tau^{\mathrm{lin}}
    =
    \max\{
    \tau_{\mathrm{stab}},
    \tau_{\mathrm{id}}^{\mathrm{lin}}
    \}.
\]

\paragraph{Nonlinear parametric case.}
In the nonlinear parametric model, \(\mu_m^i=f_i(x_m,\theta_i)\), where
\(f_i\) is \(L_i\)-Lipschitz in \(\theta_i\). Suppose the nonlinear estimator
satisfies
\[
    \|\hat\theta_i(t)-\theta_i\|_2
    \le
    C
    \sqrt{
    \frac{p\log(KT/\delta)}
    {N_i^{\mathrm{eff}}(t)}
    }.
\]
Then the mean-estimation error satisfies
\[
    |\hat\mu_m^i(t)-\mu_m^i|
    \le
    L_i
    \|\hat\theta_i(t)-\theta_i\|_2.
\]
Let \(L_{\max}:=\max_{i\in[K]}L_i\). To identify the stable arm, it is
sufficient that
\[
    N_i^{\mathrm{eff}}(t)
    \gtrsim
    \frac{L_{\max}^2p\log(KT/\delta)}{\Delta^2},
    \qquad \forall i\in[K].
\]
Therefore,
\[
    \tau_{\mathrm{id}}^{\mathrm{nl}}
    :=
    \inf\left\{
    t:
    \min_{i\in[K]}N_i^{\mathrm{eff}}(t)
    \gtrsim
    \frac{L_{\max}^2p\log(KT/\delta)}{\Delta^2}
    \right\}.
\]
If \(N_i^{\mathrm{eff}}(t)\ge c t^\gamma\), then
\[
    \tau_{\mathrm{id}}^{\mathrm{nl}}
    =
    O\left(
    \left(
    \frac{L_{\max}^2p\log(KT/\delta)}
    {\Delta^2}
    \right)^{1/\gamma}
    \right).
\]
Hence
\[
    \tau^{\mathrm{nl}}
    =
    \max\{
    \tau_{\mathrm{stab}},
    \tau_{\mathrm{id}}^{\mathrm{nl}}
    \}.
\]

\paragraph{Clustered stochastic case.}
In the clustered stochastic model, each agent belongs to a cluster
\(c(m)\in[C]\), and agents in the same cluster share arm means
\(\mu_m^i=\theta_{c(m)}^i\). Let \(q_c\) denote the limiting population
proportion of cluster \(c\), so that
\[
    v(i)=\sum_{c=1}^C q_c\theta_c^i.
\]
The identification time depends on estimating the cluster-level means
\(\theta_c^i\). If the empirical cluster estimator satisfies
\[
    |\hat\theta_c^i(t)-\theta_c^i|
    \le
    C
    \sqrt{
    \frac{\log(KCT/\delta)}
    {N_{c,i}(t)}
    },
\]
then it is sufficient that
\[
    N_{c,i}(t)
    \gtrsim
    \frac{\log(KCT/\delta)}{\Delta^2},
    \qquad \forall c\in[C],\ i\in[K],
\]
up to constants depending on the cluster proportions \(q_c\). Thus
\[
    \tau_{\mathrm{id}}^{\mathrm{clust}}
    :=
    \inf\left\{
    t:
    \min_{c\in[C],i\in[K]}N_{c,i}(t)
    \gtrsim
    \frac{\log(KCT/\delta)}{\Delta^2}
    \right\}.
\]
If \(N_{c,i}(t)\ge \kappa_c t^\gamma\), then
\[
    \tau_{\mathrm{id}}^{\mathrm{clust}}
    =
    O\left(
    \max_{c\in[C]}
    \left(
    \frac{\log(KCT/\delta)}
    {\kappa_c\Delta^2}
    \right)^{1/\gamma}
    \right).
\]
If some clusters are initially unseen, one must also include the cluster
discovery time
\[
    \tau_{\mathrm{disc}}
    :=
    \max_{c\in[C]}\tau_c,
    \qquad
    \tau_c:=\inf\{t:\mathcal{A}_t^{(c)}\neq\emptyset\}.
\]
When each cluster has arrival rate at least \(\lambda_{\min}>0\),
\[
    \tau_{\mathrm{disc}}
    =
    O\left(
    \frac{1}{\lambda_{\min}}\log\frac{C}{\delta}
    \right)
\]
with high probability. Hence
\[
    \tau^{\mathrm{clust}}
    =
    \max\{
    \tau_{\mathrm{stab}},
    \tau_{\mathrm{disc}},
    \tau_{\mathrm{id}}^{\mathrm{clust}}
    \}.
\]
If all clusters are represented initially, then \(\tau_{\mathrm{disc}}=0\).

\paragraph{Zero-knowledge case.}
In the zero-knowledge case, newly arriving agents have no reliable transferable
information, so their entry error is maximal bounded error, \(P_m=O(1)\). In
the worst-case pre-training-error bound, this yields a linear entry-error cost.
However, in the stable-arm regime, zero-knowledge arrivals do not affect regret
after the stable arm has been identified, because their local uncertainty is no
longer decision-relevant.

The identification time is therefore the nonparametric mean-estimation time for
the stable global arm. If the algorithm uses round-robin exploration during
burn-in and \(N_i(t)\) denotes the number of effective samples for arm \(i\),
then it is sufficient that
\[
    N_i(t)
    \gtrsim
    \frac{\log(K/\delta)}{\Delta^2},
    \qquad \forall i\in[K].
\]
Thus
\[
    \tau_{\mathrm{id}}^{\mathrm{zero}}
    :=
    \inf\left\{
    t:
    \min_{i\in[K]}N_i(t)
    \gtrsim
    \frac{\log(K/\delta)}{\Delta^2}
    \right\}.
\]
Under Poisson arrivals with rate \(\lambda_A\), no departures, and round-robin
exploration, \(M_t\) grows linearly and
\[
    N_i(t)
    \gtrsim
    \frac{1}{K}\sum_{s=1}^t M_s
    \asymp
    \frac{\lambda_A t^2}{K}.
\]
Therefore, when the population-growth term dominates,
\[
    \tau_{\mathrm{id}}^{\mathrm{zero}}
    =
    O\left(
    \sqrt{
    \frac{K}{\lambda_A\Delta^2}
    \log\frac{K}{\delta}
    }
    \right).
\]
Hence
\[
    \tau^{\mathrm{zero}}
    =
    \max\{
    \tau_{\mathrm{stab}},
    \tau_{\mathrm{id}}^{\mathrm{zero}}
    \}.
\]
This shows that even with zero-knowledge arrivals, the stable-arm regime can
avoid the cumulative \(O(T)\) entry-error cost after the stable arm is
identified.

\subsubsection{Characterization of $\delta$}

\begin{lemma}[Probability of the stable-arm good event]
\label{lem:prob-good-event}
Let \(N_{\min}(\tau):=\min_{i\in[K]}N_i(\tau)\),
where \(N_i(\tau)\) is the effective number of samples used to estimate
\(\bar V_\tau(i)\). Then
\(\mathbb{P}(\mathcal{G}_{\tau,\Delta})\ge 1-\delta\), where
\(\delta =
2K(T-\tau+1)\exp(-M_\tau(\Delta_0-\Delta)^2/2)
+
2K\exp(-N_{\min}(\tau)\Delta^2/8)\).
\end{lemma}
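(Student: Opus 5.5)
The plan is to bound the failure probability of $\mathcal{G}_{\tau,\Delta}$ by a union bound, $\mathbb{P}(\mathcal{G}_{\tau,\Delta}^c)\le \mathbb{P}(\mathcal{S}(\tau,\Delta)^c)+\mathbb{P}(\mathcal{I}(\tau,\Delta)^c)$, and to control the two terms separately. This mirrors the proofs of Theorems~\ref{thm:population-stability} and~\ref{thm:identification-time}, with the constants adjusted to produce the stated expression for $\delta$.

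\textbf{Stability term.} I would work under Assumption~\ref{ass:stationary-arrival-composition}, reading $\Delta_0$ as the population gap $v(i^\dagger)-\max_{i\neq i^\dagger}v(i)$ and taking $\Delta<\Delta_0$ as the target uniform gap. Set $\varepsilon:=(\Delta_0-\Delta)/2$. Fix $t\in\{\tau,\ldots,T\}$, which gives $T-\tau+1$ rounds, and fix an arm $i$. Conditional on the population path, $\bar V_t(i)$ is an average of $M_t$ independent $[0,1]$ variables with mean $v(i)$. Hoeffding's inequality then gives
\[
\mathbb{P}\bigl(|\bar V_t(i)-v(i)|>\varepsilon\bigr)\le 2\exp(-2M_t\varepsilon^2)=2\exp\bigl(-M_t(\Delta_0-\Delta)^2/2\bigr).
\]
Since $\lambda_D=0$, $M_t\ge M_\tau$ for $t\ge\tau$, so each such probability is at most $2\exp(-M_\tau(\Delta_0-\Delta)^2/2)$. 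A union bound over the $K$ arms and $T-\tau+1$ rounds gives the first summand of $\delta$.

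On the complementary event, the argument of Theorem~\ref{thm:population-stability} shows that for all $i\neq i^\dagger$,
\[
\bar V_t(i^\dagger)-\bar V_t(i)\ge \Delta_0-2\varepsilon=\Delta .
\]
Hence $\mathcal{S}(\tau,\Delta)$ holds with the witness arm $i^\dagger$.

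\textbf{Identification term.} I would assume, as in Theorem~\ref{thm:identification-time}, the Hoeffding-type concentration
\[
\mathbb{P}\bigl(|\widehat{\bar V}_\tau(i)-\bar V_\tau(i)|>\varepsilon\bigr)\le 2\exp(-2N_i(\tau)\varepsilon^2).
\]
Taking $\varepsilon=\Delta/4$ gives, for each arm, a bound of $2\exp(-N_i(\tau)\Delta^2/8)\le 2\exp(-N_{\min}(\tau)\Delta^2/8)$. A union bound over the $K$ arms yields the second summand. Adding the two terms gives $\mathbb{P}(\mathcal{G}^c_{\tau,\Delta})\le\delta$.

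\textbf{Main obstacle.} The delicate step is justifying the conditioning. Both $M_t$ and $N_i(\tau)$ are random, depending on the arrival process and on the algorithm, and $\tau$ may be data-dependent. I would first condition on the whole arrival path $\{M_t\}$, which is independent of the reward-mean vectors of the arriving agents. This makes the $\mu_m$ i.i.d.\ given the path, so Hoeffding applies with a deterministic sample size, and averaging over the path preserves the bound. For $N_i(\tau)$, I would treat $\tau$ as fixed, or as determined by the arrival path alone under a deterministic exploration schedule such as round-robin. If $N_i$ were adaptive, I would instead invoke a uniform-in-sample-size martingale concentration, or add a union bound over possible values of $N_i$, accepting extra logarithmic factors.
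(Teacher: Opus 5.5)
Your proposal is correct and follows the paper's proof essentially line for line. It uses the same union bound $\mathbb{P}(\mathcal{G}_{\tau,\Delta}^c)\le\mathbb{P}(\mathcal{S}(\tau,\Delta)^c)+\mathbb{P}(\mathcal{I}(\tau,\Delta)^c)$, Hoeffding at radius $(\Delta_0-\Delta)/2$ with $M_t\ge M_\tau$ and a union bound over $K$ arms and $T-\tau+1$ rounds, and the assumed concentration at $\varepsilon=\Delta/4$ with a union bound over arms. Your remarks on conditioning on the arrival path, so that $M_t$ and $M_\tau$ act as deterministic sample sizes and the bound holds conditionally, make explicit a point the paper leaves implicit, but they do not change the argument.
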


    \begin{proof}
Since \(\mathcal{G}_{\tau,\Delta}=\mathcal{S}(\tau,\Delta)\cap
\mathcal{I}(\tau,\Delta)\), we have
\(\mathbb{P}(\mathcal{G}_{\tau,\Delta}^c)\le
\mathbb{P}(\mathcal{S}(\tau,\Delta)^c)+
\mathbb{P}(\mathcal{I}(\tau,\Delta)^c)\).

We first bound \(\mathbb{P}(\mathcal{S}(\tau,\Delta)^c)\). If
\(\max_{i\in[K]}|\bar V_t(i)-v(i)|\le(\Delta_0-\Delta)/2\) for every
\(t\in\{\tau,\ldots,T\}\), then for every \(i\neq i^\dagger\) and every
\(t\ge\tau\), we have
\(\bar V_t(i^\dagger)-\bar V_t(i)\ge
v(i^\dagger)-v(i)-|\bar V_t(i^\dagger)-v(i^\dagger)|
-|\bar V_t(i)-v(i)|\ge
\Delta_0-(\Delta_0-\Delta)/2-(\Delta_0-\Delta)/2=\Delta\).
Hence \(\mathcal{S}(\tau,\Delta)\) holds. Therefore,
\(\mathbb{P}(\mathcal{S}(\tau,\Delta)^c)\le
\mathbb{P}(\exists t\in\{\tau,\ldots,T\},\exists i\in[K]:
|\bar V_t(i)-v(i)|>(\Delta_0-\Delta)/2)\). For fixed \(t\) and \(i\),
Hoeffding's inequality gives
\(\mathbb{P}(|\bar V_t(i)-v(i)|>(\Delta_0-\Delta)/2)
\le 2\exp(-M_t(\Delta_0-\Delta)^2/2)\). Since \(\lambda_D=0\), \(M_t\ge M_\tau\)
for all \(t\ge\tau\). Taking a union bound over \(K\) arms and
\(T-\tau+1\) rounds gives
\(\mathbb{P}(\mathcal{S}(\tau,\Delta)^c)\le
2K(T-\tau+1)\exp(-M_\tau(\Delta_0-\Delta)^2/2)\).

Next, by the definition of \(\mathcal{I}(\tau,\Delta)\),
\(\mathcal{I}(\tau,\Delta)^c\) occurs only if there exists \(i\in[K]\) such that
\(|\widehat{\bar V}_\tau(i)-\bar V_\tau(i)|>\Delta/4\). By the assumed
concentration inequality with \(\varepsilon=\Delta/4\),
\(\mathbb{P}(|\widehat{\bar V}_\tau(i)-\bar V_\tau(i)|>\Delta/4)
\le 2\exp(-N_i(\tau)\Delta^2/8)\). Taking a union bound over arms and using
\(N_i(\tau)\ge N_{\min}(\tau)\), we obtain
\(\mathbb{P}(\mathcal{I}(\tau,\Delta)^c)\le
2K\exp(-N_{\min}(\tau)\Delta^2/8)\).

Combining the two bounds yields
\(\mathbb{P}(\mathcal{G}_{\tau,\Delta}^c)\le\delta_G(\tau,\Delta)\), and
therefore \(\mathbb{P}(\mathcal{G}_{\tau,\Delta})\ge
1-\delta_G(\tau,\Delta)\).
\end{proof}

\subsubsection{Instantiation.}

\begin{Corollary}[Linear stochastic instantiation]
\label{cor:good-event-linear}
Consider the linear stochastic model \(\mu_m^i=x_m^\top\theta_i\), where
\(\|x_m\|_2\le 1\). Suppose the global-value estimator is constructed from
arm-level linear estimators and satisfies, for some constant \(c_{\mathrm{lin}}>0\),
\[
    \mathbb{P}\left(
    |\widehat{\bar V}_\tau(i)-\bar V_\tau(i)|>\varepsilon
    \right)
    \le
    2\exp\left(
    -c_{\mathrm{lin}}
    \frac{N_i^{\mathrm{lin}}(\tau)\varepsilon^2}{d}
    \right),
\]
where \(d\) is the parameter dimension and \(N_i^{\mathrm{lin}}(\tau)\) is the effective design-matrix sample size for arm \(i\). Let
\(N_{\min}^{\mathrm{lin}}(\tau):=\min_{i\in[K]}N_i^{\mathrm{lin}}(\tau)\).
Then
\[
    \mathbb{P}(\mathcal{G}_{\tau,\Delta})
    \ge
    1-\delta_G^{\mathrm{lin}}(\tau,\Delta),
\]
where
\[
    \delta_G^{\mathrm{lin}}(\tau,\Delta)
    :=
    2K(T-\tau+1)\exp\left(-\frac{M_\tau(\Delta_0-\Delta)^2}{2}\right)
    +
    2K\exp\left(
    -c_{\mathrm{lin}}
    \frac{N_{\min}^{\mathrm{lin}}(\tau)\Delta^2}{16d}
    \right).
\]
Consequently, it is sufficient to have
\[
    M_\tau
    \gtrsim
    \frac{\log(KT/\delta)}{(\Delta_0-\Delta)^2},
    \qquad
    N_{\min}^{\mathrm{lin}}(\tau)
    \gtrsim
    \frac{d\log(K/\delta)}{\Delta^2},
\]
to ensure \(\mathbb{P}(\mathcal{G}_{\tau,\Delta})\ge 1-\delta\).
\end{Corollary}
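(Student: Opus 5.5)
The proof is a direct specialization of Lemma~\ref{lem:prob-good-event}. That lemma splits $\mathbb{P}(\mathcal{G}_{\tau,\Delta}^c)$ into a stability term $\mathbb{P}(\mathcal{S}(\tau,\Delta)^c)$ and an identification term $\mathbb{P}(\mathcal{I}(\tau,\Delta)^c)$. Only the second term depends on the estimator, so we reuse the stability bound verbatim and redo the identification bound with the linear concentration inequality.

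\textbf{Stability term.} The bound on $\mathbb{P}(\mathcal{S}(\tau,\Delta)^c)$ uses only the population-level Hoeffding inequality for $\bar V_t(i)$ around $v(i)$, monotonicity $M_t\ge M_\tau$ (from $\lambda_D=0$), and a union bound over $K$ arms and $T-\tau+1$ rounds. None of this involves the estimator. Hence in the linear model $\mu_m^i=x_m^\top\theta_i$ we still get $2K(T-\tau+1)\exp(-M_\tau(\Delta_0-\Delta)^2/2)$. Boundedness $\mu_m^i\in[0,1]$ holds by the normalized-reward assumption.

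\textbf{Identification term.} Apply the assumed linear concentration with $\varepsilon=\Delta/4$. Each arm fails with probability at most $2\exp(-c_{\mathrm{lin}}N_i^{\mathrm{lin}}(\tau)\Delta^2/(16d))$. Lower-bound $N_i^{\mathrm{lin}}(\tau)$ by $N_{\min}^{\mathrm{lin}}(\tau)$ and take a union bound over the $K$ arms; this gives the second summand of $\delta_G^{\mathrm{lin}}$. Adding the two terms yields $\mathbb{P}(\mathcal{G}_{\tau,\Delta})\ge 1-\delta_G^{\mathrm{lin}}(\tau,\Delta)$.

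\textbf{Sufficient conditions.} It remains to show that each summand is at most $\delta/2$. The first summand is at most $\delta/2$ once $M_\tau\ge 2(\Delta_0-\Delta)^{-2}\log(4KT/\delta)$, using $T-\tau+1\le T$. The second is at most $\delta/2$ once $N_{\min}^{\mathrm{lin}}(\tau)\ge 16d\,c_{\mathrm{lin}}^{-1}\Delta^{-2}\log(4K/\delta)$. Absorbing constants gives the displayed $\gtrsim$ conditions.

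The main point to handle carefully is that $\mathcal{I}(\tau,\Delta)$ is stated relative to the random $\bar V_\tau(i)$, and $\tau$ may itself be random. I would treat $\tau$ as deterministic, or as a stopping time with the concentration assumed to hold at $\tau$, as in the general lemma. The assumption is then applied as given, without re-deriving it from least-squares theory.
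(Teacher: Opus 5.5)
Your proposal is correct and takes the same route as the paper's proof. The stability term is inherited unchanged from Lemma~\ref{lem:prob-good-event}, and the identification term comes from substituting \(\varepsilon=\Delta/4\) into the assumed linear concentration inequality, lower-bounding \(N_i^{\mathrm{lin}}(\tau)\) by \(N_{\min}^{\mathrm{lin}}(\tau)\), and taking a union bound over the \(K\) arms. Your explicit \(\delta/2\)-split, giving \(M_\tau\ge 2(\Delta_0-\Delta)^{-2}\log(4KT/\delta)\) and \(N_{\min}^{\mathrm{lin}}(\tau)\ge 16d\,c_{\mathrm{lin}}^{-1}\Delta^{-2}\log(4K/\delta)\), simply makes concrete the paper's remark that each failure term should be at most a constant multiple of \(\delta\).
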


\begin{proof}
The stability term is identical to Lemma~\ref{lem:prob-good-event}. For the identification event, substitute \(\varepsilon=\Delta/4\) into the assumed linear-estimator concentration inequality. This gives
\[
    \mathbb{P}\left(
    |\widehat{\bar V}_\tau(i)-\bar V_\tau(i)|>\Delta/4
    \right)
    \le
    2\exp\left(
    -c_{\mathrm{lin}}
    \frac{N_i^{\mathrm{lin}}(\tau)\Delta^2}{16d}
    \right).
\]
Taking a union bound over \(i\in[K]\) and using
\(N_i^{\mathrm{lin}}(\tau)\ge N_{\min}^{\mathrm{lin}}(\tau)\) gives the second
term in \(\delta_G^{\mathrm{lin}}(\tau,\Delta)\). The sufficient sample-size
conditions follow by making each failure term at most a constant multiple of
\(\delta\).
\end{proof}

\begin{Corollary}[Nonlinear parametric instantiation]
\label{cor:good-event-nonlinear}
Consider the nonlinear parametric model \(\mu_m^i=f_i(x_m,\theta_i)\). Suppose \(f_i\) is \(L_i\)-Lipschitz in \(\theta_i\), and let \(L_{\max}:=\max_{i\in[K]}L_i\). Assume the nonlinear estimator satisfies, for some constant \(c_{\mathrm{nl}}>0\),
\[
    \mathbb{P}\left(
    |\widehat{\bar V}_\tau(i)-\bar V_\tau(i)|>\varepsilon
    \right)
    \le
    2\exp\left(
    -c_{\mathrm{nl}}
    \frac{N_i^{\mathrm{nl}}(\tau)\varepsilon^2}{L_{\max}^2p}
    \right),
\]
where \(p\) is the parameter dimension and \(N_i^{\mathrm{nl}}(\tau)\) is the effective sample size for estimating \(\theta_i\). Let
\(N_{\min}^{\mathrm{nl}}(\tau):=\min_{i\in[K]}N_i^{\mathrm{nl}}(\tau)\). Then
\[
    \mathbb{P}(\mathcal{G}_{\tau,\Delta})
    \ge
    1-\delta_G^{\mathrm{nl}}(\tau,\Delta),
\]
where
\[
    \delta_G^{\mathrm{nl}}(\tau,\Delta)
    :=
    2K(T-\tau+1)\exp\left(-\frac{M_\tau(\Delta_0-\Delta)^2}{2}\right)
    +
    2K\exp\left(
    -c_{\mathrm{nl}}
    \frac{N_{\min}^{\mathrm{nl}}(\tau)\Delta^2}{16L_{\max}^2p}
    \right).
\]
Consequently, it is sufficient to have
\[
    M_\tau
    \gtrsim
    \frac{\log(KT/\delta)}{(\Delta_0-\Delta)^2},
    \qquad
    N_{\min}^{\mathrm{nl}}(\tau)
    \gtrsim
    \frac{L_{\max}^2p\log(K/\delta)}{\Delta^2},
\]
to ensure \(\mathbb{P}(\mathcal{G}_{\tau,\Delta})\ge 1-\delta\).
\end{Corollary}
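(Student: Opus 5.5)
The plan is to mirror the proof of Lemma~\ref{lem:prob-good-event} exactly and change only the identification term. The stability event \(\mathcal{S}(\tau,\Delta)\) depends on the reward-mean vectors of the active population and on the arrival process, not on the estimator. So its failure probability carries over verbatim.

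For the stability term, we show that if \(\max_i|\bar V_t(i)-v(i)|\le(\Delta_0-\Delta)/2\) holds for all \(t\in\{\tau,\ldots,T\}\), then the gap at every such \(t\) is at least \(\Delta\). Three ingredients then give \(\mathbb{P}(\mathcal{S}(\tau,\Delta)^c)\le 2K(T-\tau+1)\exp(-M_\tau(\Delta_0-\Delta)^2/2)\):
\begin{itemize}
\item Hoeffding's inequality applied to the i.i.d.\ bounded means \(\mu_m^i\), conditional on \(M_t\);
\item the monotonicity \(M_t\ge M_\tau\) for \(t\ge\tau\), which holds because \(\lambda_D=0\);
\item a union bound over the \(K\) arms and the \(T-\tau+1\) rounds.
\end{itemize}

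For the identification term, we substitute \(\varepsilon=\Delta/4\) into the assumed nonlinear concentration inequality. This gives a per-arm failure probability of \(2\exp(-c_{\mathrm{nl}}N_i^{\mathrm{nl}}(\tau)\Delta^2/(16L_{\max}^2p))\). We then use \(N_i^{\mathrm{nl}}(\tau)\ge N_{\min}^{\mathrm{nl}}(\tau)\) and take a union bound over arms. Adding the two failure probabilities via \(\mathbb{P}(\mathcal{G}^c)\le\mathbb{P}(\mathcal{S}^c)+\mathbb{P}(\mathcal{I}^c)\) yields \(\delta_G^{\mathrm{nl}}(\tau,\Delta)\).

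For the sufficient conditions, we require each of the two terms to be at most \(\delta/2\) and invert each exponential:
\begin{itemize}
\item \(M_\tau\ge 2(\Delta_0-\Delta)^{-2}\log(4K(T-\tau+1)/\delta)\), which is of order \(\log(KT/\delta)/(\Delta_0-\Delta)^2\);
\item \(N_{\min}^{\mathrm{nl}}(\tau)\ge 16L_{\max}^2p\,c_{\mathrm{nl}}^{-1}\Delta^{-2}\log(4K/\delta)\), which is of order \(L_{\max}^2p\log(K/\delta)/\Delta^2\).
\end{itemize}

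The main point requiring care is the justification of the assumed concentration inequality itself. It is taken as a hypothesis, so the proof only needs to record why it is natural. The Lipschitz bound \(|\hat\mu_m^i-\mu_m^i|\le L_i\|\hat\theta_i-\theta_i\|_2\), averaged over the active agents, transfers a parameter-error tail bound of order \(\sqrt{p\log(\cdot)/N}\) to \(\widehat{\bar V}_\tau(i)\). This produces the \(L_{\max}^2p\) factor; beyond that remark, nothing new needs to be proved.
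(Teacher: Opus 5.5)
Your proposal is correct and follows essentially the same route as the paper. The stability failure term is taken unchanged from Lemma~\ref{lem:prob-good-event}: Hoeffding conditional on \(M_t\), then \(M_t\ge M_\tau\) because \(\lambda_D=0\), then a union bound over arms and rounds. The identification term comes from substituting \(\varepsilon=\Delta/4\) into the assumed nonlinear concentration inequality and taking a union bound over arms. Your explicit \(\delta/2\) split and inversion of each exponential is a more detailed version of the paper's step of making ``each failure term at most a constant multiple of \(\delta\)''.
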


\begin{proof}
The proof is the same as the linear case. The stability term is model-independent. For identification, substitute \(\varepsilon=\Delta/4\) into the nonlinear estimator concentration inequality and take a union bound over arms. The Lipschitz factor \(L_{\max}\) and parameter dimension \(p\) enter only through the identification term.
\end{proof}

\begin{Corollary}[Clustered stochastic instantiation]
\label{cor:good-event-cluster}
Consider the clustered stochastic model with \(C\) clusters, where
\(\mu_m^i=\theta_{c(m)}^i\). Suppose cluster labels are known and the estimator
\(\widehat{\bar V}_\tau(i)\) is constructed from cluster-level empirical means.
Let \(N_{c,i}(\tau)\) be the number of effective observations of arm \(i\) from
cluster \(c\), and define \(N_{\min}^{\mathrm{clust}}(\tau):=\min_{c\in[C],i\in[K]}N_{c,i}(\tau)\). Then
\[
    \mathbb{P}(\mathcal{G}_{\tau,\Delta})
    \ge
    1-\delta_G^{\mathrm{clust}}(\tau,\Delta),
\]
where
\[
    \delta_G^{\mathrm{clust}}(\tau,\Delta)
    :=
    2K(T-\tau+1)\exp\left(-\frac{M_\tau(\Delta_0-\Delta)^2}{2}\right)
    +
    2KC\exp\left(
    -\frac{N_{\min}^{\mathrm{clust}}(\tau)\Delta^2}{8}
    \right).
\]
Consequently, it is sufficient to have
\[
    M_\tau
    \gtrsim
    \frac{\log(KT/\delta)}{(\Delta_0-\Delta)^2},
    \qquad
    N_{\min}^{\mathrm{clust}}(\tau)
    \gtrsim
    \frac{\log(KC/\delta)}{\Delta^2},
\]
to ensure \(\mathbb{P}(\mathcal{G}_{\tau,\Delta})\ge 1-\delta\).
If some clusters are initially unseen, an additional cluster-discovery failure
term should be added to \(\delta_G^{\mathrm{clust}}(\tau,\Delta)\).
\end{Corollary}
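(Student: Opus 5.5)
The proof will mirror Lemma~\ref{lem:prob-good-event}. The only new ingredients are the per-cluster concentration bound and an extra union bound over clusters. Since \(\mathcal{G}_{\tau,\Delta}=\mathcal{S}(\tau,\Delta)\cap\mathcal{I}(\tau,\Delta)\), I start from
\[
\mathbb{P}(\mathcal{G}_{\tau,\Delta}^c)\le\mathbb{P}(\mathcal{S}(\tau,\Delta)^c)+\mathbb{P}(\mathcal{I}(\tau,\Delta)^c)
\]
and bound the two terms separately.

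\textbf{Stability term.} The stability event depends only on the population path and the agents' mean vectors, not on the estimator. Under the clustered model, each arriving agent's vector \(\mu_m=\theta_{c(m)}\) is an i.i.d.\ draw, with cluster \(c\) chosen with probability \(q_c\). This is a special case of Assumption~\ref{ass:stationary-arrival-composition}, with \(v(i)=\sum_c q_c\theta_c^i\). The first part of the proof of Lemma~\ref{lem:prob-good-event} then applies verbatim. Hoeffding's inequality at accuracy \((\Delta_0-\Delta)/2\) holds for each \(t\ge\tau\) and each arm, and monotonicity \(M_t\ge M_\tau\) holds because \(\lambda_D=0\). A union bound over \(K\) arms and \(T-\tau+1\) rounds gives the term \(2K(T-\tau+1)\exp(-M_\tau(\Delta_0-\Delta)^2/2)\).

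\textbf{Identification term.} This is the one genuinely new step. With known labels I use the plug-in estimator
\[
\widehat{\bar V}_\tau(i)=M_\tau^{-1}\sum_{c}M_{c,\tau}\hat\theta_c^i(\tau),
\]
where \(M_{c,\tau}\) is the number of active cluster-\(c\) agents. Because the cluster counts are observed, the error is a convex combination of cluster errors:
\[
|\widehat{\bar V}_\tau(i)-\bar V_\tau(i)|\le\max_c|\hat\theta_c^i(\tau)-\theta_c^i|.
\]
So \(\mathcal{I}(\tau,\Delta)^c\) requires some pair \((c,i)\) with \(|\hat\theta_c^i(\tau)-\theta_c^i|>\Delta/4\). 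Hoeffding's inequality for the empirical mean of \(N_{c,i}(\tau)\) bounded samples bounds each such probability by \(2\exp(-N_{c,i}(\tau)\Delta^2/8)\). A union bound over the \(KC\) pairs, together with \(N_{c,i}(\tau)\ge N^{\mathrm{clust}}_{\min}(\tau)\), gives \(2KC\exp(-N^{\mathrm{clust}}_{\min}(\tau)\Delta^2/8)\). Adding the two terms yields \(\delta_G^{\mathrm{clust}}\).

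\textbf{Sufficient conditions and the main obstacle.} For the sufficient conditions, I set each failure term at most \(\delta/2\) and solve. The first gives \(M_\tau\gtrsim\log(KT/\delta)/(\Delta_0-\Delta)^2\), and the second gives \(N^{\mathrm{clust}}_{\min}(\tau)\gtrsim\log(KC/\delta)/\Delta^2\). The main obstacle is justifying Hoeffding's inequality with the random, adaptively collected count \(N_{c,i}(\tau)\). I would handle this with a conditioning argument: during burn-in the schedule is round-robin, so the counts are determined by the population path and are independent of the rewards. If an adaptive schedule were allowed, I would instead use a union bound over possible count values, which costs an extra \(\log T\) inside the exponent's constant. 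The final remark about unseen clusters is immediate: if some cluster has no agents by \(\tau\), then \(N_{c,i}=0\) and the bound above is vacuous. In that case I add \(\mathbb{P}(\tau_{\mathrm{disc}}>\tau)\) as a further failure term.
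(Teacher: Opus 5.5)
Your proposal is correct and takes essentially the same route as the paper: you split $\mathbb{P}(\mathcal{G}_{\tau,\Delta}^c)$ into the stability failure, taken verbatim from Lemma~\ref{lem:prob-good-event}, and the identification failure, which you bound by Hoeffding at accuracy $\Delta/4$ for each cluster--arm pair followed by a union bound over the $KC$ pairs. Two steps the paper simply asserts are filled in by your work: the reduction $|\widehat{\bar V}_\tau(i)-\bar V_\tau(i)|\le\max_c|\hat\theta_c^i(\tau)-\theta_c^i|$, via the convex combination with weights $M_{c,\tau}/M_\tau$, and conditioning on the population path to justify Hoeffding with the random counts $N_{c,i}(\tau)$ under round-robin burn-in.
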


\begin{proof}
The stability term is again identical to Lemma~\ref{lem:prob-good-event}. For the identification event, it is enough that every cluster-arm empirical mean is accurate to order \(\Delta/4\). For each pair \((c,i)\), Hoeffding's inequality gives
\[
    \mathbb{P}\left(
    |\hat\theta_c^i(\tau)-\theta_c^i|>\Delta/4
    \right)
    \le
    2\exp\left(
    -\frac{N_{c,i}(\tau)\Delta^2}{8}
    \right).
\]
Taking a union bound over \(C\) clusters and \(K\) arms gives
\[
    \mathbb{P}(\mathcal{I}(\tau,\Delta)^c)
    \le
    2KC\exp\left(
    -\frac{N_{\min}^{\mathrm{clust}}(\tau)\Delta^2}{8}
    \right).
\]
Combining this with the common stability bound proves the result.
\end{proof}

If each cluster has appeared by time \(\tau\) with probability at least
\(1-\delta_{\mathrm{disc}}(\tau)\), then the bound becomes
\[
    \mathbb{P}(\mathcal{G}_{\tau,\Delta})
    \ge
    1-\delta_G^{\mathrm{clust}}(\tau,\Delta)-\delta_{\mathrm{disc}}(\tau).
\]
For example, if cluster \(c\) arrives with rate at least \(\lambda_{\min}>0\),
then \(\delta_{\mathrm{disc}}(\tau)\le C\exp(-\lambda_{\min}\tau)\).

\begin{Corollary}[Zero-knowledge instantiation]
\label{cor:good-event-zero}
Consider the zero-knowledge case in which arriving agents have no reliable
entry information. Suppose the algorithm uses burn-in exploration to estimate
the global values directly. Let \(N_i^{\mathrm{zero}}(\tau)\) denote the number
of effective samples of arm \(i\) collected by time \(\tau\), and define
\(N_{\min}^{\mathrm{zero}}(\tau):=\min_{i\in[K]}N_i^{\mathrm{zero}}(\tau)\).
Then
\[
    \mathbb{P}(\mathcal{G}_{\tau,\Delta})
    \ge
    1-\delta_G^{\mathrm{zero}}(\tau,\Delta),
\]
where
\[
    \delta_G^{\mathrm{zero}}(\tau,\Delta)
    :=
    2K(T-\tau+1)\exp\left(-\frac{M_\tau(\Delta_0-\Delta)^2}{2}\right)
    +
    2K\exp\left(
    -\frac{N_{\min}^{\mathrm{zero}}(\tau)\Delta^2}{8}
    \right).
\]
If burn-in uses round-robin exploration, then
\[
    N_{\min}^{\mathrm{zero}}(\tau)
    \ge
    \frac{1}{K}\sum_{s=1}^{\tau}M_s-1.
\]
Consequently, under linear population growth \(C_-s\le M_s\le C_+s\), it is
sufficient to take
\[
    \tau
    \gtrsim
    \max\left\{
    \frac{1}{C_-(\Delta_0-\Delta)^2}\log\frac{KT}{\delta},
    \sqrt{
    \frac{K}{C_-\Delta^2}\log\frac{K}{\delta}
    }
    \right\}
\]
to ensure \(\mathbb{P}(\mathcal{G}_{\tau,\Delta})\ge 1-\delta\).
\end{Corollary}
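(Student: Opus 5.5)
The plan mirrors Corollary~\ref{cor:good-event-linear} through Corollary~\ref{cor:good-event-cluster}. The argument splits into the stability term, the identification term, and the sufficient-$\tau$ condition.

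\textbf{Stability term.} We start from $\mathbb{P}(\mathcal{G}_{\tau,\Delta}^c)\le \mathbb{P}(\mathcal{S}(\tau,\Delta)^c)+\mathbb{P}(\mathcal{I}(\tau,\Delta)^c)$. This term is model-independent, so we reuse Lemma~\ref{lem:prob-good-event} verbatim. On the event that $\max_i|\bar V_t(i)-v(i)|\le(\Delta_0-\Delta)/2$ for all $t\in\{\tau,\ldots,T\}$, the gap is at least $\Delta$. Hoeffding's inequality conditional on $M_t$, monotonicity $M_t\ge M_\tau$ (since $\lambda_D=0$), and a union bound over $K$ arms and $T-\tau+1$ rounds then give the first term of $\delta_G^{\mathrm{zero}}$. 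Zero-knowledge entry does not matter here, because $\mathcal{S}$ depends only on true means.

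\textbf{Identification term.} The estimator $\widehat{\bar V}_\tau(i)$ is built directly from burn-in samples. We assume, as in Theorem~\ref{thm:identification-time}, the Hoeffding-type concentration $2\exp(-2N_i^{\mathrm{zero}}(\tau)\varepsilon^2)$. Setting $\varepsilon=\Delta/4$ and taking a union bound over arms yields $2K\exp(-N_{\min}^{\mathrm{zero}}(\tau)\Delta^2/8)$. The round-robin count bound $N_{\min}^{\mathrm{zero}}(\tau)\ge K^{-1}\sum_{s\le\tau}M_s-1$ is exactly Corollary~\ref{cor:tau-id-round-robin}: each arm is pulled by all active agents at least once per $K$ consecutive rounds, with at most one incomplete cycle lost.

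\textbf{Sufficient $\tau$.} We make each failure term at most $\delta/2$. For the first term, $M_\tau\ge C_-\tau$, so it suffices that $C_-\tau(\Delta_0-\Delta)^2/2\ge\log(4KT/\delta)$, i.e.\ $\tau\gtrsim \log(KT/\delta)/(C_-(\Delta_0-\Delta)^2)$. Here we bound $T-\tau+1\le T$. For the second term, $\sum_{s\le\tau}M_s\ge C_-\tau(\tau+1)/2$, so $N_{\min}^{\mathrm{zero}}\gtrsim C_-\tau^2/K$ up to the $-1$. Requiring $N_{\min}^{\mathrm{zero}}\Delta^2/8\ge\log(4K/\delta)$ gives $\tau\gtrsim\sqrt{K\log(K/\delta)/(C_-\Delta^2)}$. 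Taking the maximum of the two thresholds completes the argument.

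\textbf{Main obstacle.} The only delicate point is justifying the Hoeffding form for $\widehat{\bar V}_\tau(i)$ in the heterogeneous, growing population. Samples come from different agents with different means, and the target $\bar V_\tau(i)$ is the average over $\mathcal{M}_\tau$, not over the sampled agents. I would handle this the way the paper does in Theorem~\ref{thm:identification-time}: take the concentration as a stated assumption on the estimator, which absorbs this bias. Optionally, one can note that each agent's own empirical mean concentrates, and use an average of per-agent means weighted appropriately. The upper bound $C_+$ is not needed except to keep constants finite.
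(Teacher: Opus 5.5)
Your decomposition is the paper's own. Its proof applies Lemma~\ref{lem:prob-good-event} with $N_{\min}(\tau)=N_{\min}^{\mathrm{zero}}(\tau)$; this is your stability/identification split, with the Hoeffding form for $\widehat{\bar V}_\tau(i)$ taken as an assumption, as you propose. It then quotes the round-robin count, uses $\sum_{s\le\tau}M_s=\Omega(C_-\tau^2)$, and solves each failure term for $\tau$. Your bookkeeping ($\delta/2$ per term, $T-\tau+1\le T$, absorbing the $-1$) is more explicit but follows the same route.

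The step that fails as you justify it is $N_{\min}^{\mathrm{zero}}(\tau)\ge K^{-1}\sum_{s\le\tau}M_s-1$, and the paper's proof has the same defect. The counts are agent-samples. Under the schedule you describe, all active agents pull the same arm in a round, as with $a_m(t)=a_t$ in the algorithm. An incomplete final cycle therefore forfeits whole rounds worth up to $M_\tau$ samples, not one. In addition, when $M_s$ increases, the arm scheduled first in each cycle is systematically underweighted.

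Concretely, take $K=2$, $M_s=3s$, $\tau=2$: arm $1$ has $3$ samples, while the claimed bound is $3.5$. For $\tau<K$, some arm has no samples at all, although $K^{-1}\sum_{s\le\tau}M_s-1$ can be positive.

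What does hold follows by mapping each round to the next pull of arm $i$ and using that $M_s$ is nondecreasing when $\lambda_D=0$:
\[
N_i(\tau)\ge \frac{1}{K}\sum_{s=1}^{\tau}M_s-\frac{K-1}{K}M_\tau .
\]
Under $C_-s\le M_s\le C_+s$, this still gives $N_i(\tau)\ge C_-\tau^2/(4K)$ once $\tau\ge 4KC_+/C_-$. This is exactly where $C_+$ is needed, contrary to your closing remark.

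Alternatively, count directly. Arm $i$ is pulled at rounds $jK+i$ for $0\le j<\lfloor\tau/K\rfloor$, so $N_i(\tau)\ge C_-\sum_{j<\lfloor\tau/K\rfloor}(jK+i)\ge C_-\tau^2/(8K)$ for $\tau\ge 4K$.

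Either way, the sufficient condition needs an additional $\tau\gtrsim K$ term (or $KC_+/C_-$). The stated maximum does not absorb it: for large $C_-$, both stated thresholds fall below $K$, and then some arm is never sampled. To keep the $-1$, you would need a balanced schedule that splits each round's agents across arms, so that every arm's cumulative count stays within one of the average.

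Finally, your optional fix via per-agent empirical means would not yield the assumed Hoeffding form. Agents that arrived after the last pull of arm $i$ carry no arm-$i$ data in the zero-knowledge case. That concentration must therefore be imposed as an assumption, as in the paper.
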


\begin{proof}
The zero-knowledge case has no model-based transfer, so identification relies
only on direct burn-in samples. Applying Lemma~\ref{lem:prob-good-event} with
\(N_{\min}(\tau)=N_{\min}^{\mathrm{zero}}(\tau)\) gives the stated probability
bound. Under round-robin exploration, each arm is selected at least once every
\(K\) rounds, so the effective sample count satisfies
\[
    N_{\min}^{\mathrm{zero}}(\tau)
    \ge
    \frac{1}{K}\sum_{s=1}^{\tau}M_s-1.
\]
If \(M_s\ge C_-s\), then
\[
    \sum_{s=1}^{\tau}M_s
    \ge
    C_-\sum_{s=1}^{\tau}s
    =
    \Omega(C_-\tau^2).
\]
Thus the identification term is small once
\[
    \tau
    \gtrsim
    \sqrt{
    \frac{K}{C_-\Delta^2}\log\frac{K}{\delta}
    }.
\]
The stability term is small once
\[
    M_\tau\ge C_-\tau
    \gtrsim
    \frac{1}{(\Delta_0-\Delta)^2}\log\frac{KT}{\delta}.
\]
Combining the two requirements gives the stated lower bound on \(\tau\).
\end{proof}

\subsection{Additional Theoretical Results}

\subsubsection{Lower Bounds Results.}
We also establish that under a braoder problem instance, beyond the constructed instance, the lower bound holds.

\begin{assumption}[Pivotal arrival batches]
\label{ass:pivotal-arrivals}
For each round \(t\), conditional on the history before observing rewards from
\(\mathcal{A}_t\), there exist two admissible reward configurations of the
arrival batch, denoted \(+\) and \(-\), such that:
\begin{enumerate}
    \item the two configurations are indistinguishable to the algorithm before
    it chooses at round \(t\);
    \item under the \(+\) configuration, arm \(1\) is globally optimal, while
    under the \(-\) configuration, arm \(2\) is globally optimal;
    \item choosing the wrong arm under either configuration incurs regret at
    least
    \(
        c\sum_{m\in\mathcal{A}_t}P_m
    \)
    for some universal constant \(c>0\).
\end{enumerate}
\end{assumption}

\begin{theorem}[Lower bound under pivotal arrivals]
\label{thm:pivotal-arrivals-lower-bound}
Suppose Assumption~\ref{ass:pivotal-arrivals} holds on event \(A\). Then for
any algorithm,
\(\mathbb{E}[R_T\mid A] \ge \Omega\!\left(
\sum_{t=1}^T \sum_{m\in\mathcal{A}_t}P_m
\right)\).
If in addition the initial system contains a closed-system hard instance with
baseline lower bound \(R_{\mathrm{base}}(T)\), then
\(\mathbb{E}[R_T\mid A] \ge \Omega\!\left(
R_{\mathrm{base}}(T)+\sum_{t=1}^T \sum_{m\in\mathcal{A}_t}P_m
\right)\).
\end{theorem}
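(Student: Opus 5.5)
The plan is a two-point (Le Cam–style) argument applied round by round, followed by summation over rounds. Fix a round \(t\) and condition on the history \(\mathcal{H}_t\) available before rewards from \(\mathcal{A}_t\) are observed. By part 1 of Assumption~\ref{ass:pivotal-arrivals}, the algorithm's (possibly randomized) choice \(a_t\) has the same conditional law under the \(+\) and \(-\) configurations of the arrival batch. Place a uniform prior on \(\{+,-\}\), independently across rounds, by drawing the batch configuration fresh at each round; this is admissible because the assumption provides the pair conditionally on every history.

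Let \(p_t:=\mathbb{P}(a_t=1\mid\mathcal{H}_t)\). By part 2, under \(+\) the wrong arm is \(2\), and under \(-\) it is arm \(1\). By part 3, each wrong choice costs at least \(c\sum_{m\in\mathcal{A}_t}P_m\). The Bayes-averaged round-\(t\) regret is therefore at least
\[
\tfrac12(1-p_t)\,c\sum_{m\in\mathcal{A}_t}P_m+\tfrac12 p_t\,c\sum_{m\in\mathcal{A}_t}P_m=\tfrac{c}{2}\sum_{m\in\mathcal{A}_t}P_m .
\]
Since all per-round regret terms are nonnegative (\(i_t^\star\) maximizes \(V_t\)), we can take expectations, restrict to \(A\), and sum over \(t\) by linearity and the tower property. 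This gives
\[
\mathbb{E}[R_T\mid A]\ge \tfrac{c}{2}\sum_t\sum_{m\in\mathcal{A}_t}P_m .
\]
Because a Bayes risk lower-bounds the worst case, some instance in the class attains this bound for any given algorithm.

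For the second claim, embed a closed-system hard instance on \(\mathcal{M}_0\) achieving \(R_{\mathrm{base}}(T)\). Regret decomposes into the contribution of the initial agents and the contribution attributable to pivotal arrival rounds. The final step uses \(\max(x,y)\ge (x+y)/2\): both lower bounds hold simultaneously for a mixed instance in which the arrival configurations are randomized independently of the base instance. Together they yield \(\Omega(R_{\mathrm{base}}(T)+\sum_t\sum_{m}P_m)\).

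The main obstacle is the \(+/-\) randomization. First, it must be independent of everything the algorithm has seen, including past arrival batches whose rewards have already been revealed. Second, conditioning on \(A\) must not bias \(p_t\). I would handle the first by drawing the sign at each round fresh, after \(\mathcal{H}_t\), so that it is independent of \(\mathcal{H}_t\). I would handle the second by requiring that \(A\) be measurable with respect to quantities not depending on the current signs, or else by bounding the unconditional expectation and dividing by \(\mathbb{P}(A)\) when this probability is bounded below.
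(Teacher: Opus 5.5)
Your proposal is correct and follows essentially the same argument as the paper. Both use per-round indistinguishability of the $\pm$ configurations and a uniform prior giving Bayes regret at least $\frac{c}{2}\sum_{m\in\mathcal{A}_t}P_m$ per round, then sum over $t$, apply the probabilistic method to get a deterministic instance, and add the closed-system hard instance for $R_{\mathrm{base}}(T)$. Your version is somewhat more careful than the paper's: you sum the Bayes risk over all rounds before extracting one deterministic sign configuration, you address independence from the history and from $A$, and you combine the two bounds via $\max(x,y)\ge(x+y)/2$ rather than simply asserting additivity.
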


\begin{lemma}[A sufficient condition for identification]
\label{lem:hoeffding-identification}
Assume that, by the end of burn-in time \(\tau\), for each arm \(i\), the
estimator \(\widehat{\bar V}_\tau(i)\) is formed from \(N_i(\tau)\) independent
bounded observations whose average has expectation \(\bar V_\tau(i)\). If
\[
    N_i(\tau)
    \ge
    \frac{8}{\Delta^2}
    \log\frac{2K}{\delta},
    \qquad
    \forall i\in[K],
\]
then
\[
    \mathbb{P}\bigl(\mathcal{I}(\tau,\Delta)\bigr)
    \ge
    1-\delta.
\]
\end{lemma}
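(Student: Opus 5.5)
The argument is a per-arm Hoeffding bound at accuracy $\Delta/4$, followed by a union bound over the $K$ arms. This is the same route as in Theorem~\ref{thm:identification-time}, except that the concentration inequality is derived here from the independence and boundedness hypotheses rather than assumed.

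\textbf{Step 1 (per-arm concentration).} Fix an arm $i\in[K]$ and write
\[
\widehat{\bar V}_\tau(i)=\frac{1}{N_i(\tau)}\sum_{k=1}^{N_i(\tau)}Y_{i,k},
\]
where the $Y_{i,k}$ are independent, bounded in $[0,1]$ (rewards are normalized), and have average expectation $\bar V_\tau(i)$. Hoeffding's inequality for independent, not necessarily identically distributed, bounded variables gives, for every $\varepsilon>0$,
\[
\mathbb{P}\bigl(|\widehat{\bar V}_\tau(i)-\bar V_\tau(i)|>\varepsilon\bigr)\le 2\exp\bigl(-2N_i(\tau)\varepsilon^2\bigr).
\]

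\textbf{Step 2 (plug in the accuracy).} Take $\varepsilon=\Delta/4$. The right-hand side becomes $2\exp(-N_i(\tau)\Delta^2/8)$. The hypothesis $N_i(\tau)\ge \frac{8}{\Delta^2}\log\frac{2K}{\delta}$ then bounds this by $\delta/K$.

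\textbf{Step 3 (union bound).} The complement $\mathcal{I}(\tau,\Delta)^c$ is exactly the union over $i\in[K]$ of the events $\{|\widehat{\bar V}_\tau(i)-\bar V_\tau(i)|>\Delta/4\}$. Its probability is therefore at most $K\cdot\delta/K=\delta$, which gives $\mathbb{P}(\mathcal{I}(\tau,\Delta))\ge 1-\delta$.

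\textbf{Main obstacle.} The only delicate point is making sure Hoeffding applies when $N_i(\tau)$ or $\bar V_\tau(i)$ may be random. With Poisson arrivals, $M_\tau$ and the composition of the population are random, and so are the sample counts. I would handle this by conditioning on the population path and the schedule of pulls up to $\tau$, which fixes $N_i(\tau)$ and the target $\bar V_\tau(i)$. On that conditional law the samples are independent and bounded as the lemma stipulates, so the conditional bound holds with the same constant. The hypothesis on $N_i(\tau)$ holds pointwise, so integrating over the conditioning preserves the bound $\delta$. If the pull schedule is adaptive, I would instead use a fixed-schedule exploration such as round-robin during burn-in, or replace Hoeffding by Azuma–Hoeffding on the martingale of centered samples. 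The lemma's wording ("independent bounded observations") indicates that the conditional-independence reading is intended.
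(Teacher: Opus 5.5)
Your proposal is correct and follows the paper's proof: Hoeffding at accuracy $\Delta/4$ gives $2\exp(-N_i(\tau)\Delta^2/8)\le\delta/K$ for each arm, and a union bound over the $K$ arms gives $\mathbb{P}(\mathcal{I}(\tau,\Delta))\ge 1-\delta$. The paper implicitly uses observations in $[0,1]$, as you do, and skips your conditioning on a random $N_i(\tau)$. That remark is harmless extra care, but note that your Azuma--Hoeffding fallback for adaptive schedules would also need a union bound over possible sample counts, since it does not apply directly to a random number of terms.
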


\begin{proof}
For any fixed arm \(i\), Hoeffding's inequality gives
\[
    \mathbb{P}
    \left(
    \left|
    \widehat{\bar V}_\tau(i)-\bar V_\tau(i)
    \right|
    >
    \frac{\Delta}{4}
    \right)
    \le
    2\exp\left(
    -2N_i(\tau)\frac{\Delta^2}{16}
    \right)
    =
    2\exp\left(
    -\frac{N_i(\tau)\Delta^2}{8}
    \right).
\]
If
\[
    N_i(\tau)
    \ge
    \frac{8}{\Delta^2}
    \log\frac{2K}{\delta},
\]
then this probability is at most \(\delta/K\). A union bound over
\(i\in[K]\) yields
\[
    \mathbb{P}
    \left(
    \max_{i\in[K]}
    \left|
    \widehat{\bar V}_\tau(i)-\bar V_\tau(i)
    \right|
    >
    \frac{\Delta}{4}
    \right)
    \le
    \delta.
\]
Thus \(\mathbb{P}(\mathcal{I}(\tau,\Delta))\ge 1-\delta\).
\end{proof}

\begin{Corollary}[High-probability regret under stable optimal arm]
\label{cor:stable-arm-hp-regret}
Assume the conditions of Theorem~\ref{thm:stable-global-arm} and
Lemma~\ref{lem:hoeffding-identification}. If
\[
    N_i(\tau)
    \ge
    \frac{8}{\Delta^2}
    \log\frac{2K}{\delta},
    \qquad
    \forall i\in[K],
\]
then on the event \(\mathcal{S}(\tau,\Delta)\), with probability at least
\(1-\delta\),
\[
    R_T
    \le
    \sum_{t=1}^{\tau}M_t.
\]
\end{Corollary}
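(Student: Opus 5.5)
The plan is to reduce the statement to a deterministic argument on the event \(\mathcal{S}(\tau,\Delta)\cap\mathcal{I}(\tau,\Delta)\), and then add the probability of \(\mathcal{I}(\tau,\Delta)\) from Lemma~\ref{lem:hoeffding-identification}. The sample-size hypothesis \(N_i(\tau)\ge \frac{8}{\Delta^2}\log\frac{2K}{\delta}\) is exactly the one in that lemma. Invoking it gives \(\mathbb{P}(\mathcal{I}(\tau,\Delta))\ge 1-\delta\). Hence, for any realization lying in \(\mathcal{S}(\tau,\Delta)\), the good event \(\mathcal{G}_{\tau,\Delta}=\mathcal{S}(\tau,\Delta)\cap\mathcal{I}(\tau,\Delta)\) holds with probability at least \(1-\delta\). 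This is the sense in which the corollary reads "on \(\mathcal{S}(\tau,\Delta)\), with probability at least \(1-\delta\)". If one prefers to condition formally on \(\mathcal{S}\), the failure probability is at most \(\mathbb{P}(\mathcal{I}^c)\) before conditioning; I would state the result in the intersection form to avoid dividing by \(\mathbb{P}(\mathcal{S})\).

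On \(\mathcal{G}_{\tau,\Delta}\) I would repeat the argument of Theorem~\ref{thm:stable-global-arm} (equivalently Theorem~\ref{thm:tau-characterization}).

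First, on \(\mathcal{I}\), every arm's estimate is within \(\Delta/4\) of \(\bar V_\tau(i)\).

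Second, on \(\mathcal{S}\), the arm \(i^\dagger\) beats every other arm at time \(\tau\) by at least \(\Delta\). Therefore \(\widehat{\bar V}_\tau(i^\dagger)-\widehat{\bar V}_\tau(i)\ge \Delta-\Delta/2>0\), so \(\widehat i_\tau=i^\dagger\) strictly, with no tie-breaking needed.

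Third, since \(\mathcal{S}\) says \(i^\dagger\) is the unique optimal arm for all \(t\ge\tau\), committing to \(i^\dagger\) gives zero instantaneous regret for every \(t>\tau\).

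Fourth, for \(t\le\tau\), each active agent's per-round regret is bounded because rewards lie in \([0,1]\). The unnormalized per-round regret \(\sum_{m\in\mathcal{M}_t}(\cdot)\) is therefore at most \(M_t\), and summing gives \(\sum_{t=1}^\tau M_t\).

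The main obstacle is the mismatch between the quantity being bounded and the regret definitions. The corollary writes \(R_T\), but \(R_T\) as defined in Section~\ref{sec:ma-mab} weights each agent by \(V_t\)-differences. That would give per-round regret up to \(M_t^2\), not \(M_t\). Also, \(R_T\) is an expectation, while the claim is a high-probability statement. I would resolve this by interpreting the bounded quantity as the realized pathwise regret \(\sum_t\sum_{m\in\mathcal{M}_t}(\bar V_t(i_t^\star)-\bar V_t(a_m(t)))\), consistent with Theorem~\ref{thm:stable-global-arm}. For this quantity each summand lies in \([0,1]\), so the burn-in bound \(\sum_{t\le\tau}M_t\) holds deterministically on \(\mathcal{G}_{\tau,\Delta}\).

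Finally, I would remark that the same constant bookkeeping (\(\Delta/4\) accuracy against gap \(\Delta\)) is what makes identification strict here. This is unlike Theorem~\ref{thm:tau-characterization}, where the gap was only \(\Delta/2\).
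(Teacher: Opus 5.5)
Your proposal is correct and matches the paper's proof. Both invoke Lemma~\ref{lem:hoeffding-identification} to get $\mathbb{P}(\mathcal{I}(\tau,\Delta))\ge 1-\delta$ and then, on $\mathcal{S}(\tau,\Delta)\cap\mathcal{I}(\tau,\Delta)$, apply the argument of Theorem~\ref{thm:stable-global-arm}, which you re-run explicitly. Your caveats are well-founded points the paper glosses over: the bound $\sum_{t\le\tau}M_t$ holds pathwise only for the normalized regret (per-agent summands in $[0,1]$), not for $R_T$ as defined with unnormalized $V_t$, and ``on $\mathcal{S}$, with probability at least $1-\delta$'' is best read in intersection form as $\mathbb{P}(\mathcal{S}\cap\mathcal{I}^c)\le\delta$.
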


\begin{proof}
By Lemma~\ref{lem:hoeffding-identification}, the identification event
\(\mathcal{I}(\tau,\Delta)\) holds with probability at least \(1-\delta\).
On \(\mathcal{S}(\tau,\Delta)\cap\mathcal{I}(\tau,\Delta)\), the result follows
directly from Theorem~\ref{thm:stable-global-arm}.
\end{proof}

\begin{Corollary}[Zero-knowledge lower bound]
\label{lower_bound_zero}
There exists a heterogeneous two-arm stochastic instance, generated by a
Poisson-compatible arrival process, such that under any consistent algorithm,
\(\mathbb{E}[R_T]\ge \Omega(T)\).
\end{Corollary}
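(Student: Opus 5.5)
The plan is to construct an explicit two-arm instance in which every arriving agent carries no usable information about its own rewards. Arrivals are also pivotal in a way that repeats forever, so no amount of past data pins down the current global optimum. This specializes Theorem~\ref{thm:pivotal-arrivals-lower-bound} to the zero-knowledge regime, where $P_m=\Theta(1)$ for every arrival.

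\textbf{Instance.} Take $K=2$, no departures, and Poisson arrivals, or any deterministic path compatible with Poisson, such as one arrival per round on an event of positive probability. Each arriving agent $m$ independently draws a type $\sigma_m\in\{+,-\}$ uniformly. Under $\sigma_m=+$ its means are $(\mu_m^1,\mu_m^2)=(1,0)$; under $\sigma_m=-$ they are $(0,1)$. The type is independent of everything observed before the agent's first pull, so its entry estimate has error $P_m\ge 1/2$ in the worst case.

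\textbf{Regret per round.} The regret at round $t$ is driven by the arriving batch. Its contribution to $V_t(1)-V_t(2)$ is $\sum_{m\in\mathcal{A}_t}\sigma_m$, which is symmetric and unknown to the algorithm at decision time. Any arm chosen by a history-measurable rule, even a consistent one, is wrong for the arrival batch with conditional probability $1/2$. The result is an expected loss of order $|\mathcal{A}_t|$ on the arrivals' own terms.

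\textbf{Main obstacle.} The regret is measured against $V_t(i_t^\star)$, which aggregates the whole population. I need the per-round loss to be $\Omega(1)$ rather than washed out by continuing agents. I would handle this in one of two ways. The first option, following Assumption~\ref{ass:pivotal-arrivals}, is to make the existing population exactly balanced, with $V_{t-1}(1)=V_{t-1}(2)$. I would maintain this by pairing: arrivals come in batches whose first member is a fresh $\pm$ type and whose second member, arriving in the next round, has the opposite type. The global optimum at each odd round is then determined solely by the fresh agent, and choosing the wrong arm costs $|\mathcal{A}_t|\cdot M_t\cdot\Omega(1)$ in $R_T$, since the difference $V_t(i^\star)-V_t(a_t)\ge 1$ is multiplied by $M_t$. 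The second option is to give continuing agents identical means on both arms, say $1/2$, so that only arrivals and their unresolved types ever break ties.

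\textbf{Summation.} In either version, $\mathbb{P}(a_t\neq i_t^\star\mid\mathcal{F}_{t-1})\ge 1/2$ on a constant fraction of rounds, with each such round contributing at least a constant to the regret. Summing over $t$ gives $\mathbb{E}[R_T]\ge cT$. Consistency plays no role beyond guaranteeing that the algorithm is well defined, since the bound holds for any algorithm measurable with respect to the pre-arrival history. This matches the linear entry-error term $\sum_t|\mathcal{A}_t|P_t=\Theta(T)$ in Theorem~\ref{thm:pretrained_general}.
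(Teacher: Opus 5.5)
Your option 1, cancelling each fresh $\pm$ arrival with an anti-correlated partner under $\lambda_D=0$, is a sound idea and a genuinely different route from the paper's. The paper keeps departures. Informative agents arrive as a Poisson($\lambda_I$) stream with geometric lifetimes. In a ``good block'' of constant length $H_0$, exactly one informative agent lives exactly $H_0$ rounds, and every other active agent has $\mu_m^1=\mu_m^2$. There are $p_{\mathrm{good}}\lfloor T/H_0\rfloor=\Theta(T)$ such blocks in expectation, and a finite-horizon two-point argument charges a constant $c_0$ to each. You avoid departures and need only a one-round argument: the action at a fresh round is independent of the fresh type.

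\textbf{Gap: Poisson arrivals.} The problem is how you connect the construction to a Poisson arrival process. Your partner must arrive ``in the next round,'' so the instance is defined only on the path with exactly one arrival per round; you do not say what happens when a batch has size $0$ or at least $2$. Under Poisson($\lambda_A$) arrivals that path has probability $(\lambda_A e^{-\lambda_A})^T$. Your ``event of positive probability'' is therefore exponentially small in $T$, and conditioning on it gives nothing for $\mathbb{E}[R_T]$ taken over the arrival process. That unconditional expectation is what the statement, and the paper's count of good blocks, is about.

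\textbf{Repair.} Make the initial agents arm-indifferent, and call a round with $A_t\ge 1$ an arrival round. At odd-numbered arrival rounds, give the first arrival a fresh uniform type. At even-numbered arrival rounds, give the first arrival the type opposite to the outstanding one. Give every other arrival means $(1/2,1/2)$. Then $V_t(1)-V_t(2)$ toggles between $0$ and $\pm1$ exactly at arrival rounds, so each odd-numbered arrival round is a fresh round with a balanced prior population. There are at least $(1-e^{-\lambda_A})T/2$ such rounds in expectation. On each, the action depends only on the pre-round history and $\mathcal{M}_t$, both independent of the fresh type, so the aggregate gap $V_t(i_t^\star)-V_t(a_t)=1$ is paid with probability $1/2$. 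This gives $\mathbb{E}[R_T]\ge(1-e^{-\lambda_A})T/4$ for every algorithm.

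\textbf{Option 2.} Drop your option 2; it contradicts your own assumption $\lambda_D=0$. An agent that arrives with means $(1,0)$ or $(0,1)$ keeps them after it becomes a continuing agent. So the continuing population cannot stay arm-indifferent while arrivals are informative. That idea works only if informative agents leave, which is exactly what the geometric lifetimes do in the paper's construction.

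\textbf{Minor point.} At a fresh round the aggregate gap is exactly $1$ whatever the batch size, so the factor $|\mathcal{A}_t|$ in your cost $|\mathcal{A}_t|\cdot M_t\cdot\Omega(1)$ is spurious. The expected per-round loss is $M_t/2$ under the paper's double-sum definition of $R_T$, and $1/2$ under the single-sum form used in its proofs.
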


\subsection{Proof of Results in General Systems}

\subsubsection{Proof of Theorem \ref{thm:pretrained_general}}

\begin{proof}
For each arm \(i\), define the true global arm value
\[
    V_t(i):=\sum_{m\in\mathcal{M}_t}\mu_m^i.
\]
Let
\[
    i_t^\star\in\arg\max_{i\in[K]}V_t(i)
\]
be the globally optimal arm at round \(t\). Since the modified algorithm selects
\[
    a_t
    \in
    \arg\max_{i\in[K]}
    \left\{
        \widehat V_t(i)
        +
        B_t^{\mathrm{stat}}(i)
        +
        E_t^A
    \right\},
\]
we have
\[
    \widehat V_t(i_t^\star)
    +
    B_t^{\mathrm{stat}}(i_t^\star)
    +
    E_t^A
    \le
    \widehat V_t(a_t)
    +
    B_t^{\mathrm{stat}}(a_t)
    +
    E_t^A .
\]
On event \(A\), the confidence condition gives
\[
    V_t(i_t^\star)
    \le
    \widehat V_t(i_t^\star)
    +
    B_t^{\mathrm{stat}}(i_t^\star)
    +
    E_t^A
\]
and
\[
    \widehat V_t(a_t)
    \le
    V_t(a_t)
    +
    B_t^{\mathrm{stat}}(a_t)
    +
    E_t^A .
\]
Combining the previous three inequalities yields
\[
\begin{aligned}
    V_t(i_t^\star)-V_t(a_t)
    &\le
    2B_t^{\mathrm{stat}}(a_t)+2E_t^A .
\end{aligned}
\]
Therefore,
\[
\begin{aligned}
    R_T
    &=
    \sum_{t=1}^T
    \left(
        V_t(i_t^\star)-V_t(a_t)
    \right)  \\
    &\le
    2\sum_{t=1}^T B_t^{\mathrm{stat}}(a_t)
    +
    2\sum_{t=1}^T E_t^A .
\end{aligned}
\]
By the definition of \(E_t^A\),
\[
    E_t^A
    =
    \sum_{m\in\mathcal{A}_t}P_m.
\]
Moreover, on event \(A\),
\[
    \sum_{t=1}^T B_t^{\mathrm{stat}}(a_t)
    \le
    C_0M_0\log T .
\]
Hence,
\[
    R_T
    \le
    O\!\left(
        M_0\log T
        +
        \sum_{t=1}^T
        \sum_{m\in\mathcal{A}_t}P_m
    \right).
\]
Finally, since
\[
    \sum_{m\in\mathcal{A}_t}P_m
    \le
    |\mathcal{A}_t|
    \max_{m\in\mathcal{A}_t}P_m
    =
    |\mathcal{A}_t|P_t,
\]
we obtain
\[
    R_T
    \le
    O\!\left(
        M_0\log T
        +
        \sum_{t=1}^T|\mathcal{A}_t|P_t
    \right).
\]
Using \(D_t=1/P_t\), with \(D_t=\infty\) when \(P_t=0\), gives the equivalent
degree-of-pre-training expression.

\end{proof}

\subsubsection{Proof of Theorem \ref{thm:stable-global-arm}}

\begin{lemma}[Arrival perturbation of average global values]
\label{lem:arrival-perturbation}
Assume \(\lambda_D=0\) and rewards are normalized so that
\(\mu_m^i\in[0,1]\) for all agents \(m\) and arms \(i\). Then, for every
round \(t\) and arm \(i\),
\[
    \left|
    \bar V_t(i)-\bar V_{t-1}(i)
    \right|
    \le
    \frac{|\mathcal{A}_t|}{M_t}.
\]
Consequently,
\[
    \max_{i\in[K]}
    \left|
    \bar V_t(i)-\bar V_{t-1}(i)
    \right|
    \le
    \frac{|\mathcal{A}_t|}{M_t}.
\]
\end{lemma}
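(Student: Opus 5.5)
With no departures, $\mathcal{M}_{t-1}\subseteq\mathcal{M}_t$ and $\mathcal{M}_t=\mathcal{M}_{t-1}\sqcup\mathcal{A}_t$, so $M_t=M_{t-1}+|\mathcal{A}_t|$. The plan is to write $\bar V_t(i)$ as a convex combination of $\bar V_{t-1}(i)$ and the average over the newly arrived agents, and then use the fact that both averages lie in $[0,1]$.

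\textbf{Case $\mathcal{A}_t=\emptyset$.} Then $\mathcal{M}_t=\mathcal{M}_{t-1}$, so both sides of the inequality are $0$.

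\textbf{Case $\mathcal{A}_t\neq\emptyset$.} Here $M_t\ge 1$. If $M_{t-1}=0$, we are in the degenerate case where $\bar V_{t-1}$ is undefined. Adopting the convention $\bar V_{t-1}(i)\in[0,1]$, or $0$, the bound $|\mathcal{A}_t|/M_t=1$ holds trivially because both values lie in $[0,1]$.

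Otherwise, write
\[
\bar V_t(i)=\frac{1}{M_t}\Bigl(V_{t-1}(i)+\sum_{m\in\mathcal{A}_t}\mu_m^i\Bigr)
=\frac{M_{t-1}}{M_t}\,\bar V_{t-1}(i)+\frac{|\mathcal{A}_t|}{M_t}\,\bar a_t(i),
\]
where $\bar a_t(i):=|\mathcal{A}_t|^{-1}\sum_{m\in\mathcal{A}_t}\mu_m^i\in[0,1]$. Subtracting $\bar V_{t-1}(i)$ and using $M_{t-1}/M_t-1=-|\mathcal{A}_t|/M_t$ gives
\[
\bar V_t(i)-\bar V_{t-1}(i)=\frac{|\mathcal{A}_t|}{M_t}\bigl(\bar a_t(i)-\bar V_{t-1}(i)\bigr).
\]
Since both $\bar a_t(i)$ and $\bar V_{t-1}(i)$ lie in $[0,1]$, their difference has absolute value at most $1$. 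This yields the per-arm bound, and taking the maximum over $i\in[K]$ gives the second claim.

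The only delicate point is the bookkeeping: $\lambda_D=0$ is what ensures that $\mathcal{M}_{t-1}\subseteq\mathcal{M}_t$ and that no departed agents appear in the difference, so this is where the assumption must be invoked explicitly. The same calculation also gives the sharper factor $|\mathcal{A}_t|/M_t$ rather than the cruder $2|\mathcal{A}_t|/M_t$, because the difference of two $[0,1]$ numbers is bounded by $1$.
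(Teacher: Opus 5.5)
Your proposal is correct and follows the paper's proof essentially verbatim. Like the paper, you write $\bar V_t(i)$ as the convex combination $\frac{M_{t-1}}{M_t}\bar V_{t-1}(i)+\frac{|\mathcal{A}_t|}{M_t}\bar a_t(i)$, subtract $\bar V_{t-1}(i)$, and bound the difference of two $[0,1]$-valued averages by $1$. Your explicit treatment of the edge cases $\mathcal{A}_t=\emptyset$ and $M_{t-1}=0$ is a small addition: the paper handles the first only by convention and leaves the second implicit.
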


\begin{proof}
Since \(\lambda_D=0\), we have
\[
    \mathcal{M}_t=\mathcal{M}_{t-1}\cup\mathcal{A}_t,
    \qquad
    M_t=M_{t-1}+|\mathcal{A}_t|.
\]
Fix an arm \(i\). Then
\[
\begin{aligned}
    \bar V_t(i)
    &=
    \frac{1}{M_t}
    \left(
    \sum_{m\in\mathcal{M}_{t-1}}\mu_m^i
    +
    \sum_{m\in\mathcal{A}_t}\mu_m^i
    \right) \\
    &=
    \frac{M_{t-1}}{M_t}\bar V_{t-1}(i)
    +
    \frac{|\mathcal{A}_t|}{M_t}
    \left(
    \frac{1}{|\mathcal{A}_t|}
    \sum_{m\in\mathcal{A}_t}\mu_m^i
    \right),
\end{aligned}
\]
with the convention that the second term is zero if
\(\mathcal{A}_t=\emptyset\). Therefore,
\[
\begin{aligned}
    \left|
    \bar V_t(i)-\bar V_{t-1}(i)
    \right|
    &=
    \frac{|\mathcal{A}_t|}{M_t}
    \left|
    \frac{1}{|\mathcal{A}_t|}
    \sum_{m\in\mathcal{A}_t}\mu_m^i
    -
    \bar V_{t-1}(i)
    \right| \\
    &\le
    \frac{|\mathcal{A}_t|}{M_t},
\end{aligned}
\]
because both terms inside the absolute value lie in \([0,1]\). This proves the
claim.
\end{proof}

\begin{lemma}[One-step stability of the globally optimal arm]
\label{lem:one-step-stability}
Assume the conditions of Lemma~\ref{lem:arrival-perturbation}. Suppose
\(i_{t-1}^\star\) is the unique globally optimal arm at round \(t-1\), with
gap \(\Delta_{t-1}>0\). If
\[
    \frac{|\mathcal{A}_t|}{M_t}
    <
    \frac{\Delta_{t-1}}{2},
\]
then the globally optimal arm does not change at round \(t\), namely,
\[
    i_t^\star=i_{t-1}^\star.
\]
Moreover,
\[
    \Delta_t
    \ge
    \Delta_{t-1}
    -
    2\frac{|\mathcal{A}_t|}{M_t}.
\]
\end{lemma}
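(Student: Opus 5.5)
The argument is a direct perturbation step built on Lemma~\ref{lem:arrival-perturbation}. Write \(i^\star:=i_{t-1}^\star\) and \(\eta_t:=|\mathcal{A}_t|/M_t\). By Lemma~\ref{lem:arrival-perturbation}, every arm satisfies \(|\bar V_t(i)-\bar V_{t-1}(i)|\le \eta_t\). The plan is to lower bound the round-\(t\) advantage of \(i^\star\) over each competitor, using the round-\((t-1)\) gap and this uniform perturbation bound.

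Fix any \(i\neq i^\star\). By the definition of \(\Delta_{t-1}\), we have \(\bar V_{t-1}(i^\star)-\bar V_{t-1}(i)\ge\Delta_{t-1}\). Applying the perturbation bound to both arms gives
\[
\bar V_t(i^\star)-\bar V_t(i)\ge \bigl(\bar V_{t-1}(i^\star)-\eta_t\bigr)-\bigl(\bar V_{t-1}(i)+\eta_t\bigr)\ge \Delta_{t-1}-2\eta_t .
\]
Under the hypothesis \(\eta_t<\Delta_{t-1}/2\), the right-hand side is strictly positive. Hence \(i^\star\) strictly beats every other arm at round \(t\), so it is the unique maximizer of \(\bar V_t\). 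Therefore \(i_t^\star=i_{t-1}^\star\), regardless of any tie-breaking rule.

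For the gap bound, uniqueness at round \(t\) means \(\Delta_t=\bar V_t(i^\star)-\max_{i\neq i^\star}\bar V_t(i)\). Taking the minimum of the displayed inequality over \(i\neq i^\star\) gives \(\Delta_t\ge\Delta_{t-1}-2\eta_t\).

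The only step needing care is that \(\Delta_t\) is defined relative to \(i_t^\star\), so the gap inequality must come after the identification \(i_t^\star=i^\star\). Strict inequality in the hypothesis matters here: it guarantees uniqueness and so makes \(\Delta_t\) well defined. Otherwise the proof is routine, with no probabilistic content, because the no-departure assumption is already built into Lemma~\ref{lem:arrival-perturbation}.
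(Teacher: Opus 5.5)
Your proposal is correct and follows essentially the same argument as the paper: apply Lemma~\ref{lem:arrival-perturbation} to \(i^\star\) and to each competitor to get \(\bar V_t(i^\star)-\bar V_t(i)\ge\Delta_{t-1}-2|\mathcal{A}_t|/M_t\), conclude that \(i^\star\) is strictly optimal under the hypothesis, and read off the gap bound from the same inequality. Your remark that the gap bound should only be taken after establishing \(i_t^\star=i^\star\), since \(\Delta_t\) is defined relative to \(i_t^\star\), clarifies a step the paper leaves implicit.
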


\begin{proof}
Let \(i^\star=i_{t-1}^\star\). For any \(i\neq i^\star\),
\[
    \bar V_{t-1}(i^\star)-\bar V_{t-1}(i)
    \ge
    \Delta_{t-1}.
\]
By Lemma~\ref{lem:arrival-perturbation},
\[
    |\bar V_t(i^\star)-\bar V_{t-1}(i^\star)|
    \le
    \frac{|\mathcal{A}_t|}{M_t},
\]
and similarly,
\[
    |\bar V_t(i)-\bar V_{t-1}(i)|
    \le
    \frac{|\mathcal{A}_t|}{M_t}.
\]
Thus,
\[
\begin{aligned}
    \bar V_t(i^\star)-\bar V_t(i)
    &\ge
    \bar V_{t-1}(i^\star)-\bar V_{t-1}(i)
    -
    2\frac{|\mathcal{A}_t|}{M_t} \\
    &\ge
    \Delta_{t-1}
    -
    2\frac{|\mathcal{A}_t|}{M_t}.
\end{aligned}
\]
If \(|\mathcal{A}_t|/M_t<\Delta_{t-1}/2\), the right-hand side is positive.
Hence \(i^\star\) remains strictly better than every other arm at round \(t\),
so \(i_t^\star=i_{t-1}^\star\). The same inequality gives the lower bound on
\(\Delta_t\).
\end{proof}

\begin{proof}
On \(\mathcal{S}(\tau,\Delta)\), there exists an arm \(i^\dagger\) such that
for every \(t\ge \tau\),
\[
    \bar V_t(i^\dagger)
    -
    \max_{i\neq i^\dagger}\bar V_t(i)
    \ge
    \Delta.
\]
In particular, at time \(\tau\),
\[
    \bar V_\tau(i^\dagger)
    -
    \max_{i\neq i^\dagger}\bar V_\tau(i)
    \ge
    \Delta.
\]
On \(\mathcal{I}(\tau,\Delta)\), for all arms \(i\),
\[
    \left|
    \widehat{\bar V}_\tau(i)-\bar V_\tau(i)
    \right|
    \le
    \frac{\Delta}{4}.
\]
Therefore, for any \(i\neq i^\dagger\),
\[
\begin{aligned}
    \widehat{\bar V}_\tau(i^\dagger)
    -
    \widehat{\bar V}_\tau(i)
    &\ge
    \bar V_\tau(i^\dagger)-\frac{\Delta}{4}
    -
    \left(
    \bar V_\tau(i)+\frac{\Delta}{4}
    \right) \\
    &=
    \bar V_\tau(i^\dagger)-\bar V_\tau(i)
    -
    \frac{\Delta}{2} \\
    &\ge
    \frac{\Delta}{2}>0.
\end{aligned}
\]
Hence
\[
    \widehat i_\tau=i^\dagger.
\]
Since \(i^\dagger\) is globally optimal for every \(t\ge\tau\), committing to
\(\widehat i_\tau\) after burn-in incurs zero regret for all \(t>\tau\).

During the burn-in rounds \(1,\ldots,\tau\), the per-agent regret is at most
one because rewards are normalized in \([0,1]\). Hence the group regret in
round \(t\) is at most \(M_t\). Therefore,
\[
    R_T
    \le
    \sum_{t=1}^{\tau}M_t.
\]
\end{proof}

\subsubsection{Proof of Corollary~\ref{cor:stable-arm-unconditional-general}}

\begin{proof}
On the good event \(\mathcal{G}_{\tau,\Delta}\), Theorem~\ref{thm:stable-global-arm}
implies that the algorithm incurs no regret after the burn-in period. Since the
normalized per-round regret is at most one, this gives \(\bar R_T\le \tau\) on
\(\mathcal{G}_{\tau,\Delta}\). On the complement
\(\mathcal{G}_{\tau,\Delta}^c\), the normalized regret after burn-in is at most
\(T-\tau\). Therefore,
\[
    \bar R_T
    \le
    \tau+\delta(T-\tau).
\]
If \(\delta\le \tau/T\), then \(\delta(T-\tau)=O(\tau)\), and hence
\(\bar R_T=O(\tau)\).
\end{proof}

\subsubsection{Proof of Theorem \ref{thm:pivotal-arrival-lower-bound}}

\begin{proof}
We construct a hard class with two arms. The proof has two parts.

\paragraph{Step 1: Baseline closed-system hardness.}
Choose the reward distributions of the initial \(M_0\) agents so that the
corresponding closed-system problem has regret lower bound
\[
    R_{\mathrm{base}}(T).
\]
For example, under standard stochastic-bandit regularity conditions, this can
be chosen so that
\[
    R_{\mathrm{base}}(T)=\Omega(M_0\log T).
\]

\paragraph{Step 2: Pivotal arrival construction.}
Fix a round \(t\) and an arriving agent \(m\in\mathcal{A}_t\). We construct two
possible reward vectors for \(m\):
\[
    \nu_m^{+}:
    \qquad
    \mu_m^1=\frac12+P_m,\quad
    \mu_m^2=\frac12,
\]
and
\[
    \nu_m^{-}:
    \qquad
    \mu_m^1=\frac12,\quad
    \mu_m^2=\frac12+P_m.
\]
The entry estimate is taken to be uninformative:
\[
    \hat{\mu}_{m,1}(T_m^a-1)
    =
    \hat{\mu}_{m,2}(T_m^a-1)
    =
    \frac12.
\]
Thus, in either case,
\[
    \max_{i\in\{1,2\}}
    \left|
    \hat{\mu}_{m,i}(T_m^a-1)-\mu_m^i
    \right|
    =
    P_m.
\]

The remaining agents are chosen so that, immediately before the contribution of
the new arrivals at round \(t\) is resolved, the aggregate values of the two
arms are balanced up to lower-order terms. Hence the reward vector of the
arriving agents is pivotal: under the \(+\) configuration, arm \(1\) is globally
optimal, while under the \(-\) configuration, arm \(2\) is globally optimal.

Because the two configurations are indistinguishable before observing rewards
from the new arrivals, the algorithm's action at round \(t\) is the same under
both configurations. Therefore, for at least one of the two configurations, the
algorithm selects the suboptimal arm for the arrival contribution. The resulting
one-step regret is at least a constant multiple of
\[
    \sum_{m\in\mathcal{A}_t}P_m.
\]
Equivalently, averaging over the two configurations gives expected one-step
regret at least
\[
    c\sum_{m\in\mathcal{A}_t}P_m
\]
for a universal constant \(c>0\).

\paragraph{Step 3: Summing over pivotal rounds.}
The hard instance is constructed so that arrivals remain pivotal over the
rounds under consideration. This can be done by balancing the cumulative
contribution of previous arrivals, so that the aggregate gap before each new
arrival batch is of the same order as the new batch's contribution. Summing the
one-step lower bound over \(t=1,\ldots,T\) gives
\[
    \mathbb{E}[R_T\mid A]
    \ge
    \Omega\!\left(
        \sum_{t=1}^T
        \sum_{m\in\mathcal{A}_t}
        P_m
    \right).
\]

Combining this with the baseline lower bound gives
\[
    \mathbb{E}[R_T\mid A]
    \ge
    \Omega\!\left(
        R_{\mathrm{base}}(T)
        +
        \sum_{t=1}^T
        \sum_{m\in\mathcal{A}_t}
        P_m
    \right).
\]
If \(P_m=P_t\) for all \(m\in\mathcal{A}_t\), then
\[
    \sum_{m\in\mathcal{A}_t}P_m
    =
    |\mathcal{A}_t|P_t,
\]
which yields the stated bound.
\end{proof}

\subsubsection{Proof of Theorem \ref{thm:pivotal-arrivals-lower-bound}}

\begin{proof}

At each round \(t\), before rewards from the arriving agents are observed, the
algorithm has the same information under the \(+\) and \(-\) configurations in
Assumption~\ref{ass:pivotal-arrivals}. Hence it must choose the same arm under
both configurations. Since the globally optimal arms are different under the
two configurations, the chosen arm is suboptimal in at least one configuration.
Therefore, for one of the two configurations, the conditional expected regret
at round \(t\) is at least
\[
    c\sum_{m\in\mathcal{A}_t}P_m.
\]
Equivalently, under a uniform random choice of the two configurations, the
expected regret is at least
\[
    \frac{c}{2}\sum_{m\in\mathcal{A}_t}P_m.
\]
By the probabilistic method, there exists a deterministic choice of
configurations for which the same lower bound holds up to constants. Summing
over \(t\) gives
\[
    \mathbb{E}[R_T\mid A]
    \ge
    \Omega\!\left(
        \sum_{t=1}^T
        \sum_{m\in\mathcal{A}_t}P_m
    \right).
\]
Adding the independent closed-system hard instance gives the baseline term
\(R_{\mathrm{base}}(T)\).
\end{proof}

\subsubsection{Proof of Corollary \ref{lower_bound_zero}}

\begin{proof}
We construct a hard subclass of the general open system.

\paragraph{Step 1: Poisson-compatible informative arrivals.}
Fix a constant block length \(H_0\ge 2\), and partition the horizon into disjoint blocks
\[
B_s := \{(s-1)H_0+1,\dots,sH_0\},
\qquad
s=1,\dots,\lfloor \frac{T}{H_0}\rfloor.
\]
Split the arrival process into two independent components:
\[
A_t = A_t^{\mathrm{inf}} + A_t^{\mathrm{ord}},
\]
where \(A_t^{\mathrm{inf}} \sim \mathrm{Poisson}(\lambda_I)\) and \(A_t^{\mathrm{ord}} \sim \mathrm{Poisson}(\lambda_O)\), with \(\lambda_I,\lambda_O>0\) constants. Thus the total arrival count remains Poisson with rate \(\lambda_I+\lambda_O\).

Each informative agent has an independent geometric lifetime \(L\) with parameter \(q\in(0,1)\), so that
\[
\mathbb{P}(L=H_0) = (1-q)^{H_0-1}q > 0.
\]
Ordinary agents may follow any stochastic lifetime rule.

\paragraph{Step 2: Good blocks.}
Call a block \(B_s\) \emph{good} if all of the following occur:
\begin{enumerate}
    \item exactly one informative agent arrives at the first round of \(B_s\);
    \item no other informative agents arrive during the remaining \(H_0-1\) rounds of \(B_s\);
    \item the informative agent survives for exactly \(H_0\) rounds and departs at the end of the block.
\end{enumerate}
Since the informative arrivals are Poisson and the informative lifetime is geometric, the probability that a given block is good is
\[
p_{\mathrm{good}}
=
\mathbb{P}(A_{(s-1)H_0+1}^{\mathrm{inf}}=1)\,
\mathbb{P}\!(\sum_{t=(s-1)H_0+2}^{sH_0} A_t^{\mathrm{inf}}=0)\,
\mathbb{P}(L=H_0),
\]
which is a strictly positive constant depending only on \(\lambda_I\), \(q\), and \(H_0\). Therefore, the expected number of good blocks is
\[
\mathbb{E}[N_T^{\mathrm{good}}]
=
p_{\mathrm{good}} \lfloor \frac{T}{H_0}\rfloor
=
\Theta(T).
\]

\paragraph{Step 3: Heterogeneous rewards within a good block.}
Fix a good block \(B_s\), and let \(m_s\) denote its unique informative agent. Let \(Z_s\in\{1,2\}\) be a latent variable with
\[
\mathbb{P}(Z_s=1)=\mathbb{P}(Z_s=2)=\frac12,
\]
independently across good blocks and independently of the past history. Conditional on \(Z_s\), define the informative agent's reward means by
\[
Z_s=1
\quad\Longrightarrow\quad
(\mu_{m_s}^1,\mu_{m_s}^2)=(\frac12+\Delta,\frac12),
\]
and
\[
Z_s=2
\quad\Longrightarrow\quad
(\mu_{m_s}^1,\mu_{m_s}^2)=(\frac12,\frac12+\Delta),
\]
for some fixed \(\Delta>0\). For every other agent \(m\neq m_s\) active during the block, let
\[
(\mu_m^1,\mu_m^2)=(b_m,b_m),
\]
where \(b_m\) may vary across agents. Hence the instance is heterogeneous, but all non-informative agents are arm-indifferent.

It follows that during a good block, the identity of the globally optimal arm is determined entirely by the informative agent \(m_s\). In particular, if
\[
i_s^\star := \arg\max_{i\in\{1,2\}} \mu_{m_s}^i,
\]
then
\[
i_t^\star = i_s^\star,
\qquad \forall t\in B_s.
\]

\paragraph{Step 4: Constant regret per good block.}
Because \(Z_s\) is independent of the history prior to block \(B_s\), the past observations do not reveal which arm is optimal for the informative agent \(m_s\). Therefore, conditioned on the start of a good block, the learner faces a fresh two-armed stochastic decision problem of horizon \(H_0\).

By a standard finite-horizon two-point testing argument for stochastic bandits, there exists a constant \(c_0>0\), depending only on \(\Delta\) and \(H_0\), such that any algorithm incurs expected regret at least \(c_0\) on this block:
\[
\mathbb{E}[R_T(B_s)\mid B_s \text{ is good}] \ge c_0.
\]
Thus each good block contributes a constant amount of expected regret.

\paragraph{Step 5: Summing over good blocks.}
Summing over all blocks,
\[
\mathbb{E}[R_T]
\ge
c_0\,\mathbb{E}[N_T^{\mathrm{good}}]
=
\Omega(T).
\]
This proves the theorem.
\end{proof}

\subsubsection{Proof of Theorem \ref{thm:stable-arm-consistent-lower}}

\begin{proof}
We prove the lower bound by constructing two stable-arm instances that are hard
to distinguish.

Fix \(\Delta\in(0,1/4)\). Consider the following two Bernoulli open-system
instances. In both instances, all active agents have the same arm means, so the
average global values are identical to the individual arm means.

Under instance \(\nu\), the two arms have means
\[
    \bar V_t(1)=\frac12+\Delta,
    \qquad
    \bar V_t(2)=\frac12,
    \qquad \forall t\in[T].
\]
Thus arm \(1\) is the unique globally optimal arm at every round, with stable
gap \(\Delta\).

Under the alternative instance \(\nu'\), arm \(1\) is unchanged, but arm \(2\)
has mean
\[
    \bar V_t'(1)=\frac12+\Delta,
    \qquad
    \bar V_t'(2)=\frac12+2\Delta,
    \qquad \forall t\in[T].
\]
Thus, under \(\nu'\), arm \(2\) is the unique globally optimal arm at every
round, again with stable gap \(\Delta\). Both instances therefore belong to the
stable globally optimal-arm regime.

Let \(N_2^M(T)\) denote the effective number of observations collected from arm
\(2\) up to time \(T\):
\[
    N_2^M(T)
    :=
    \sum_{t=1}^T
    M_t\mathbf{1}\{a_t=2\}.
\]
Because all \(M_t\) active agents pull the common arm \(a_t\), pulling arm \(2\)
at round \(t\) produces \(M_t\) independent Bernoulli observations from arm
\(2\).

Let \(\mathbb{P}_\nu\) and \(\mathbb{P}_{\nu'}\) denote the probability laws of
the full history under \(\nu\) and \(\nu'\), respectively. The two instances
differ only in the distribution of rewards from arm \(2\). Therefore, by the
chain rule for KL divergence,
\[
    \mathrm{KL}(\mathbb{P}_\nu,\mathbb{P}_{\nu'})
    =
    \mathrm{kl}\left(\frac12,\frac12+2\Delta\right)
    \mathbb{E}_{\nu}[N_2^M(T)],
\]
where \(\mathrm{kl}(p,q)\) denotes the Bernoulli KL divergence.

Let \(N_2(T):=\sum_{t=1}^T\mathbf{1}\{a_t=2\}\) be the number of rounds in which
the algorithm selects arm \(2\). Since \(M_t\le M_T\) for all \(t\le T\),
\[
    N_2^M(T)
    =
    \sum_{t=1}^T M_t\mathbf{1}\{a_t=2\}
    \le
    M_T N_2(T).
\]
Taking expectations under \(\nu\), we have
\[
    \mathbb{E}_{\nu}[N_2^M(T)]
    \le
    M_T\mathbb{E}_{\nu}[N_2(T)].
\]

We next use consistency to lower bound \(\mathbb{E}_{\nu}[N_2^M(T)]\). Define
the event
\[
    E_T:=\{N_2(T)\le T/2\}.
\]
Under instance \(\nu\), arm \(2\) is suboptimal. Since the algorithm is
consistent, for every \(a>0\),
\[
    \mathbb{E}_{\nu}[N_2(T)]=o(T^a).
\]
By Markov's inequality,
\[
    \mathbb{P}_{\nu}(E_T^c)
    =
    \mathbb{P}_{\nu}(N_2(T)>T/2)
    \le
    \frac{2\mathbb{E}_{\nu}[N_2(T)]}{T}
    =
    o(T^{a-1}).
\]
Choosing any fixed \(a\in(0,1)\), this implies
\[
    \mathbb{P}_{\nu}(E_T)\to 1.
\]

Under the alternative instance \(\nu'\), arm \(2\) is optimal and arm \(1\) is
suboptimal. Consistency implies
\[
    \mathbb{E}_{\nu'}[T-N_2(T)]=o(T^a)
\]
for every \(a>0\). Therefore,
\[
    \mathbb{P}_{\nu'}(E_T)
    =
    \mathbb{P}_{\nu'}(N_2(T)\le T/2)
    =
    \mathbb{P}_{\nu'}(T-N_2(T)\ge T/2)
    \le
    \frac{2\mathbb{E}_{\nu'}[T-N_2(T)]}{T}
    =
    o(T^{a-1}).
\]
Thus \(\mathbb{P}_{\nu'}(E_T)\to 0\).

By the data-processing inequality for KL divergence applied to the event
\(E_T\),
\[
    \mathrm{KL}(\mathbb{P}_\nu,\mathbb{P}_{\nu'})
    \ge
    \mathrm{kl}\left(
    \mathbb{P}_{\nu}(E_T),
    \mathbb{P}_{\nu'}(E_T)
    \right).
\]
Since \(\mathbb{P}_{\nu}(E_T)\to 1\) and
\(\mathbb{P}_{\nu'}(E_T)=o(T^{a-1})\), the Bernoulli KL divergence satisfies
\[
    \mathrm{kl}\left(
    \mathbb{P}_{\nu}(E_T),
    \mathbb{P}_{\nu'}(E_T)
    \right)
    \ge
    (1-a-o(1))\log T.
\]
Because \(a\in(0,1)\) is arbitrary, this yields
\[
    \mathrm{KL}(\mathbb{P}_\nu,\mathbb{P}_{\nu'})
    \ge
    (1-o(1))\log T.
\]
Combining this with the earlier KL identity gives
\[
    \mathrm{kl}\left(\frac12,\frac12+2\Delta\right)
    \mathbb{E}_{\nu}[N_2^M(T)]
    \ge
    (1-o(1))\log T.
\]
Hence
\[
    \mathbb{E}_{\nu}[N_2^M(T)]
    \ge
    \frac{(1-o(1))\log T}
    {\mathrm{kl}\left(\frac12,\frac12+2\Delta\right)}.
\]
Using \(N_2^M(T)\le M_T N_2(T)\), we obtain
\[
    \mathbb{E}_{\nu}[N_2(T)]
    \ge
    \frac{(1-o(1))\log T}
    {M_T\,\mathrm{kl}\left(\frac12,\frac12+2\Delta\right)}.
\]

Under instance \(\nu\), each round in which the algorithm selects arm \(2\)
incurs normalized average regret
\[
    \bar V_t(1)-\bar V_t(2)=\Delta.
\]
Therefore,
\[
    \bar R_T
    =
    \Delta\mathbb{E}_{\nu}[N_2(T)]
    \ge
    \frac{(1-o(1))\Delta\log T}
    {M_T\,\mathrm{kl}\left(\frac12,\frac12+2\Delta\right)}.
\]
For \(\Delta\in(0,1/4)\), the Bernoulli KL divergence satisfies
\[
    \mathrm{kl}\left(\frac12,\frac12+2\Delta\right)
    \le
    C\Delta^2
\]
for a universal constant \(C>0\). Hence
\[
    \bar R_T
    \ge
    c
    \frac{\log T}{\Delta M_T}
\]
for a universal constant \(c>0\).

Finally, suppose the burn-in time \(\tau\) is characterized by
\[
    M_\tau\tau
    \asymp
    \frac{\log T}{\Delta^2}.
\]
Since \(M_t\) is nondecreasing, \(M_T\) can be replaced by the relevant
pre-identification scale \(M_\tau\) when the lower bound is applied to the
history up to the burn-in time. Then
\[
    \frac{\log T}{\Delta M_\tau}
    \asymp
    \Delta\tau.
\]
Thus
\[
    \bar R_T\ge \Omega(\Delta\tau).
\]
If \(\Delta=\Theta(1)\), this becomes
\[
    \bar R_T\ge \Omega(\tau).
\]
This proves the theorem.
\end{proof}

\subsubsection{Proof of \ref{thm:post-burnin-failure-tight}}

\begin{proof}
On the event \(\mathcal{S}(\tau,\Delta)\), the same arm \(i^\dagger\) is
globally optimal for every \(t>\tau\), with gap at least \(\Delta\). If
\(\widehat i_\tau\neq i^\dagger\), then for every \(t>\tau\),
\[
    \bar V_t(i_t^\star)-\bar V_t(\widehat i_\tau)
    =
    \bar V_t(i^\dagger)-\bar V_t(\widehat i_\tau)
    \ge \Delta .
\]
Since the algorithm commits to \(\widehat i_\tau\) after burn-in, its
post-burn-in regret on this event is at least \(\Delta(T-\tau)\). Therefore,
\[
\begin{aligned}
    \bar R_T
    &\ge
    \mathbb{E}\left[
    \sum_{t=\tau+1}^T
    \left(
    \bar V_t(i_t^\star)-\bar V_t(\widehat i_\tau)
    \right)
    \mathbf{1}\{\mathcal{S}(\tau,\Delta),\widehat i_\tau\neq i^\dagger\}
    \right] \\
    &\ge
    \Delta(T-\tau)
    \mathbb{P}(\mathcal{S}(\tau,\Delta),\widehat i_\tau\neq i^\dagger) \\
    &=
    \Delta q_\tau(T-\tau).
\end{aligned}
\]
This proves the result.
\end{proof}

\subsection{Case Studies - $R_T$}
\subsubsection{Linear Stochastic Case: Two Pre-tainning Error Regimes}
\label{app:linear-stochastic-two-regimes}

We consider a stationary linear stochastic reward model as a structured special case of the
general heterogeneous open-system formulation. The goal of this section is to quantify how the
quality of the information inherited by newly arriving agents affects the entry-error term in
the regret bound.

\paragraph{Linear stochastic reward model.}
Each agent \(m\) is associated with an observed feature vector \(x_m\in\mathbb{R}^d\), satisfying
\(\|x_m\|_2\le 1\). For each arm \(i\in[K]\), there exists an unknown arm-level parameter
\(\theta_i\in\mathbb{R}^d\), satisfying \(\|\theta_i\|_2\le S\), such that
\[
    \mu_m^i=x_m^\top\theta_i.
\]
When agent \(m\) pulls arm \(i\) at round \(s\), it observes
\[
    r_m^i(s)=x_m^\top\theta_i+\eta_m^i(s),
\]
where \(\eta_m^i(s)\) is conditionally mean-zero and \(\sigma\)-sub-Gaussian. Although
\(\theta_i\) is shared across agents, the model remains heterogeneous because agents may have
different feature vectors. Thus, for a fixed arm \(i\), \(\mu_m^i\) and \(\mu_n^i\) need not be
equal when \(m\neq n\).

\paragraph{Available information and ridge estimators.}
For each active agent \(m\), arm \(i\), and round \(t\), let \(\mathcal{D}_{m,i}(t)\) denote the
set of arm-\(i\) observations available to agent \(m\) by the end of round \(t\). This set includes
agent \(m\)'s own observations and observations received through communication. Since \(\theta_i\)
is common across agents, every observation \((x_\ell,r_\ell^i(s))\in\mathcal{D}_{m,i}(t)\) is
informative for estimating \(\theta_i\).

Define
\[
    V_{m,i}(t)
    =
    \lambda I_d
    +
    \sum_{(\ell,s)\in\mathcal{D}_{m,i}(t)}
    x_\ell x_\ell^\top,
    \qquad
    b_{m,i}(t)
    =
    \sum_{(\ell,s)\in\mathcal{D}_{m,i}(t)}
    x_\ell r_\ell^i(s),
\]
where \(\lambda>0\). Agent \(m\)'s ridge estimator for arm \(i\) is
\[
    \hat{\theta}_{m,i}(t)=V_{m,i}(t)^{-1}b_{m,i}(t),
\]
and the corresponding estimate of its own arm-\(i\) mean is
\[
    \hat{\mu}_{m,i}(t)=x_m^\top\hat{\theta}_{m,i}(t).
\]

\paragraph{Arrival transfer through parameter estimates.}
When a new agent \(m\in\mathcal{A}_t\) enters the system at round \(t\), it initializes its
arm-level parameter estimates by aggregating estimates from continuing neighbors. Specifically,
for each arm \(i\in[K]\),
\[
    \hat{\theta}_{m,i}(t-1)
    =
    \sum_{j\in\mathcal{N}_m(t)\cap\mathcal{R}_t}
    w_{mj}^A(t)\hat{\theta}_{j,i}(t-1),
\]
where \(w_{mj}^A(t)\ge 0\) and
\[
    \sum_{j\in\mathcal{N}_m(t)\cap\mathcal{R}_t}
    w_{mj}^A(t)=1.
\]
The newly arriving agent then estimates its own arm-\(i\) mean by
\[
    \hat{\mu}_{m,i}(t-1)=x_m^\top\hat{\theta}_{m,i}(t-1).
\]
This parameter-transfer step is essential: averaging scalar predictions
\(\hat{\mu}_{j,i}\) would generally estimate neighbors' means \(x_j^\top\theta_i\), not the
arriving agent's mean \(x_m^\top\theta_i\).

\paragraph{Entry error.}
For a newly arriving agent \(m\in\mathcal{A}_t\), define its entry error by
\[
    P_m(t)
    :=
    \max_{i\in[K]}
    \left|
    \hat{\mu}_{m,i}(t-1)-\mu_m^i
    \right|.
\]
The aggregate entry-error term in the regret analysis is
\[
    \sum_{t=1}^T\sum_{m\in\mathcal{A}_t}P_m(t).
\]

\paragraph{Effective information condition.}
For \(\gamma\in\{1,2\}\), define the event \(\mathcal{E}_\gamma\) on which there exist constants
\(\kappa>0\) and \(c_\gamma>0\) such that every informative continuing agent \(j\), arm
\(i\in[K]\), and round \(t\le T\) satisfies
\[
    \lambda_{\min}\!(V_{j,i}(t-1))
    \ge
    \lambda+\kappa N_{j,i}^{\mathrm{eff}}(t-1),
    \qquad
    N_{j,i}^{\mathrm{eff}}(t-1)\ge c_\gamma t^\gamma.
\]
Here \(N_{j,i}^{\mathrm{eff}}(t-1)\) denotes the effective number of informative arm-\(i\) samples
available to agent \(j\). The case \(\gamma=1\) corresponds to the standard linear-estimation
regime, while \(\gamma=2\) corresponds to a stronger open-system information-sharing regime in
which the cumulative number of effective observations grows quadratically.

\begin{lemma}[Entry-error rate under effective information]
\label{lem:entry-error-gamma}
Assume the linear stochastic reward model above. Suppose that, with probability at least
\(1-\delta\), the following events hold:
\begin{enumerate}
    \item the effective information condition \(\mathcal{E}_\gamma\) holds for some
    \(\gamma\in\{1,2\}\);
    \item every newly arriving agent \(m\in\mathcal{A}_t\) connects to at least one informative
    continuing neighbor in \(\mathcal{N}_m(t)\cap\mathcal{R}_t\);
    \item arriving agents transfer arm-level parameter estimates using row-stochastic weights.
\end{enumerate}
Then, for every round \(t\le T\), every newly arriving agent \(m\in\mathcal{A}_t\), and every arm
\(i\in[K]\),
\[
    \left|
    \hat{\mu}_{m,i}(t-1)-\mu_m^i
    \right|
    \le
    C
    \sqrt{
    \frac{\log(KT/\delta)}{t^\gamma}
    },
\]
where \(C>0\) depends on \(\sigma,S,d,\lambda,\kappa\), and \(c_\gamma\). Consequently,
\[
    P_m(t)
    \le
    C
    \sqrt{
    \frac{\log(KT/\delta)}{t^\gamma}
    }.
\]
\end{lemma}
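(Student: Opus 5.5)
The approach is a self-normalized concentration bound for each continuing agent's ridge estimator, transported to the arriving agent through convexity of the transfer weights and the bound \(\|x_m\|_2\le 1\). First I fix the good event of probability at least \(1-\delta\) on which conditions 1--3 hold. On it I intersect the self-normalized martingale bound of Abbasi-Yadkori type for ridge regression, applied to every informative continuing agent \(j\), every arm \(i\), and every round \(t\le T\), with confidence \(\delta/(KT\cdot\text{poly})\) after a union bound. The bound reads
\[
\|\hat\theta_{j,i}(t-1)-\theta_i\|_{V_{j,i}(t-1)}\le \sigma\sqrt{2\log(KT/\delta)+d\log\bigl(1+N_{j,i}^{\mathrm{eff}}(t-1)/(d\lambda)\bigr)}+\sqrt{\lambda}S=:\beta_T .
\]
The log-determinant term is at most \(O(d\log T)\). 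I absorb it into \(C\sqrt{\log(KT/\delta)}\), with the constant depending on \(d\); this is how the stated form (log only in \(KT/\delta\)) is obtained.

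Second, I convert to Euclidean error using \(\mathcal{E}_\gamma\):
\[
\|\hat\theta_{j,i}(t-1)-\theta_i\|_2\le \frac{\beta_T}{\sqrt{\lambda_{\min}(V_{j,i}(t-1))}}\le\frac{\beta_T}{\sqrt{\lambda+\kappa c_\gamma t^\gamma}}\le \frac{\beta_T}{\sqrt{\kappa c_\gamma}}\,t^{-\gamma/2}.
\]

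Third, I transfer to the arriving agent. Since the weights \(w^A_{mj}(t)\) are nonnegative and sum to one (condition 3), and condition 2 guarantees that the neighbor set is nonempty, I get
\[
\hat\theta_{m,i}(t-1)-\theta_i=\sum_j w^A_{mj}(t)\bigl(\hat\theta_{j,i}(t-1)-\theta_i\bigr),
\]
so by the triangle inequality its norm is bounded by the maximum over neighbors. Then
\[
|\hat\mu_{m,i}(t-1)-\mu_m^i|=|x_m^\top(\hat\theta_{m,i}(t-1)-\theta_i)|\le\|\hat\theta_{m,i}(t-1)-\theta_i\|_2 .
\]
One subtlety needs checking: the weights must put mass only on informative neighbors, or else non-informative neighbors must also satisfy the bound. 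I would read condition 2 together with \(\mathcal{E}_\gamma\) as restricting the transfer to informative continuing neighbors, and state this explicitly. Taking the max over \(i\) gives the bound on \(P_m(t)\).

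The main obstacle is the concentration step. The data sets \(\mathcal{D}_{j,i}(t)\) are built adaptively through arm selection and communication, so the noise must be handled as a martingale with respect to a filtration in which the inclusion of each sample is predictable. This ensures the self-normalized bound applies uniformly in time. I would order the samples in \(\mathcal{D}_{j,i}\) by (round, agent index), check that \(\eta\) is conditionally sub-Gaussian with respect to this ordering, and invoke the time-uniform version so that only a union bound over \(j,i\) is needed. The number of agents \(j\) is random and possibly unbounded, so the union over \(j\) is delicate. To handle it I would allocate confidence \(\delta/(2^{j}K)\) per agent index, which adds only a \(\log j\le O(\log T)\) factor (assuming polynomially many agents by time \(T\)), absorbed into \(C\).
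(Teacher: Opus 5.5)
Your proposal is sound and follows the paper's proof. The steps are the same: a self-normalized ridge confidence bound for each informative continuing neighbor, conversion to Euclidean error through $\lambda_{\min}(V_{j,i}(t-1))\ge\lambda+\kappa c_\gamma t^\gamma$ on $\mathcal{E}_\gamma$, and transfer to the arriving agent via the convex combination of neighbor estimates and $\|x_m\|_2\le 1$. You explicitly restrict the transfer weights to informative neighbors. The paper uses this silently when it applies the neighbor bound to every $j\in\mathcal{N}_m(t)\cap\mathcal{R}_t$.

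One piece of your extra bookkeeping is wrong as stated. Allocating confidence $\delta/(2^jK)$ to agent index $j$ puts $\log(2^jK/\delta)=j\log 2+\log(K/\delta)$ into the confidence radius. That is an additive term linear in $j$, not $\log j$. With $\Theta(T)$ agents by time $T$ (e.g.\ under Poisson arrivals), this inflates $\beta_T$ by a factor up to order $\sqrt{T}$ for late-indexed neighbors and breaks the stated $t^{-\gamma/2}$ rate. The fix is a polynomially summable allocation such as $\delta/(2j^2K)$, or a union bound over a high-probability polynomial cap on the number of agents. The cost is then $O(\log j)=O(\log T)$, which can be absorbed into $C$ as you intended.
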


\begin{proof}
Fix an arm \(i\in[K]\) and an informative continuing agent \(j\). By the standard
self-normalized concentration inequality for ridge regression with sub-Gaussian noise, with
probability at least \(1-\delta\), uniformly over the relevant agents, arms, and rounds,
\[
    \left\|
    \hat{\theta}_{j,i}(t-1)-\theta_i
    \right\|_{V_{j,i}(t-1)}
    \le
    \beta_T(\delta),
\]
where
\[
    \beta_T(\delta)
    =
    O\!(
    \sigma\sqrt{d\log(KT/\delta)}
    +
    \sqrt{\lambda}S
    ).
\]
Therefore,
\[
    \left\|
    \hat{\theta}_{j,i}(t-1)-\theta_i
    \right\|_2
    \le
    \frac{\beta_T(\delta)}
    {\sqrt{\lambda_{\min}(V_{j,i}(t-1))}}.
\]
On the event \(\mathcal{E}_\gamma\),
\[
    \lambda_{\min}(V_{j,i}(t-1))
    \ge
    \lambda+\kappa N_{j,i}^{\mathrm{eff}}(t-1)
    \ge
    \lambda+\kappa c_\gamma t^\gamma.
\]
Thus,
\[
    \left\|
    \hat{\theta}_{j,i}(t-1)-\theta_i
    \right\|_2
    \le
    C
    \sqrt{
    \frac{\log(KT/\delta)}{t^\gamma}
    }.
\]

Now consider a newly arriving agent \(m\in\mathcal{A}_t\). By the parameter-transfer rule,
\[
    \hat{\theta}_{m,i}(t-1)
    =
    \sum_{j\in\mathcal{N}_m(t)\cap\mathcal{R}_t}
    w_{mj}^A(t)\hat{\theta}_{j,i}(t-1).
\]
Since the weights are row-stochastic,
\[
    \hat{\theta}_{m,i}(t-1)-\theta_i
    =
    \sum_{j\in\mathcal{N}_m(t)\cap\mathcal{R}_t}
    w_{mj}^A(t)
    (\hat{\theta}_{j,i}(t-1)-\theta_i).
\]
Therefore,
\[
\begin{aligned}
    \left|
    \hat{\mu}_{m,i}(t-1)-\mu_m^i
    \right|
    &=
    \left|
    x_m^\top
    (\hat{\theta}_{m,i}(t-1)-\theta_i)
    \right|  \\
    &\le
    \|x_m\|_2
    \sum_{j\in\mathcal{N}_m(t)\cap\mathcal{R}_t}
    w_{mj}^A(t)
    \left\|
    \hat{\theta}_{j,i}(t-1)-\theta_i
    \right\|_2 \\
    &\le
    C
    \sqrt{
    \frac{\log(KT/\delta)}{t^\gamma}
    },
\end{aligned}
\]
where we used \(\|x_m\|_2\le 1\). Taking the maximum over arms \(i\in[K]\) gives the stated
bound for \(P_m(t)\).
\end{proof}

\begin{theorem}[Logarithmic entry contribution under strong effective information]
\label{thm:linear-log-entry}
Assume the conditions of Lemma~\ref{lem:entry-error-gamma} with \(\gamma=2\). Suppose the regret
bound contains the linear entry-error contribution
\[
    R_T^{\mathrm{entry}}
    \le
    C_P
    \sum_{t=1}^T\sum_{m\in\mathcal{A}_t}P_m(t).
\]
If \(\mathbb{E}[|\mathcal{A}_t|]=\lambda_A\) for all \(t\), then on the high-probability event
\(\mathcal{E}_2\),
\[
    \mathbb{E}\!\left[
    R_T^{\mathrm{entry}}
    \mid \mathcal{E}_2
    \right]
    \le
    C_P C\lambda_A\sqrt{\log(KT/\delta)}
    \sum_{t=1}^T \frac{1}{t}
    =
    \widetilde O(\lambda_A\log T).
\]
Consequently, when combined with the baseline logarithmic learning term, the linear stochastic
open system achieves logarithmic-order regret up to logarithmic factors.
\end{theorem}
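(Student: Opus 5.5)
The plan is to substitute the per-arrival entry-error bound of Lemma~\ref{lem:entry-error-gamma} with \(\gamma=2\) into the assumed linear entry-error contribution, and then sum over rounds using the expected arrival rate. On \(\mathcal{E}_2\) (intersected with the connectivity and row-stochastic transfer conditions of that lemma), every \(m\in\mathcal{A}_t\) satisfies
\[
    P_m(t)\le C\sqrt{\frac{\log(KT/\delta)}{t^2}}=\frac{C\sqrt{\log(KT/\delta)}}{t}.
\]
This bound is deterministic given the event and uniform over \(m\in\mathcal{A}_t\). Summing over the batch therefore gives \(\sum_{m\in\mathcal{A}_t}P_m(t)\le |\mathcal{A}_t|\,C\sqrt{\log(KT/\delta)}/t\). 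Plugging this into \(R_T^{\mathrm{entry}}\le C_P\sum_t\sum_{m\in\mathcal{A}_t}P_m(t)\) yields a pathwise bound on \(\mathcal{E}_2\):
\[
    R_T^{\mathrm{entry}}\le C_PC\sqrt{\log(KT/\delta)}\sum_{t=1}^T\frac{|\mathcal{A}_t|}{t}.
\]

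Next I take conditional expectations given \(\mathcal{E}_2\). The needed step is \(\mathbb{E}[|\mathcal{A}_t|\mid\mathcal{E}_2]\le\lambda_A\), up to constants. This is the main obstacle, because conditioning on a high-probability event can in principle bias the arrival counts. I would handle it in one of two ways. The first is to treat the event as concerning estimator quality and connectivity rather than the arrival counts, so that arrivals are independent of it and the conditioning is harmless. The second is to use the crude inequality \(\mathbb{E}[X\mid\mathcal{E}_2]\le\mathbb{E}[X]/\mathbb{P}(\mathcal{E}_2)\le\lambda_A/(1-\delta)\), which only inflates the constant by a factor of at most \(2\) for \(\delta\le 1/2\). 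Either way, linearity gives
\[
    \mathbb{E}[R_T^{\mathrm{entry}}\mid\mathcal{E}_2]\le C_PC\lambda_A\sqrt{\log(KT/\delta)}\sum_{t=1}^T\frac1t,
\]
after absorbing the constant.

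Finally, the harmonic bound \(\sum_{t\le T}1/t\le 1+\log T\) gives \(\widetilde O(\lambda_A\log T)\), with the \(\sqrt{\log(KT/\delta)}\) factor hidden in the tilde. Adding the baseline \(O(M_0\log T)\) term from Theorem~\ref{thm:pretrained_general} then gives logarithmic-order total regret up to logarithmic factors.
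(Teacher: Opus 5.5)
Your proposal is correct and follows the paper's proof: substitute the \(\gamma=2\) bound \(P_m(t)\le C\sqrt{\log(KT/\delta)}/t\) from Lemma~\ref{lem:entry-error-gamma}, sum over each arrival batch, take expectations using \(\mathbb{E}[|\mathcal{A}_t|]=\lambda_A\), and bound the harmonic sum. You are more careful than the paper at the conditioning step, where the paper simply writes \(\mathbb{E}[|\mathcal{A}_t|]\) in place of \(\mathbb{E}[|\mathcal{A}_t|\mid\mathcal{E}_2]\); of your two fixes, rely on the second, \(\mathbb{E}[X\mid\mathcal{E}_2]\le\mathbb{E}[X]/\mathbb{P}(\mathcal{E}_2)\), because \(\mathcal{E}_2\) requires \(N^{\mathrm{eff}}_{j,i}(t-1)\ge c_2t^2\), which depends on population growth and hence is not independent of the arrivals, and this costs only a factor \(1/(1-\delta)\) in the displayed constant.
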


\begin{proof}
By Lemma~\ref{lem:entry-error-gamma} with \(\gamma=2\),
\[
    P_m(t)
    \le
    C\sqrt{\frac{\log(KT/\delta)}{t^2}}
    =
    \frac{C\sqrt{\log(KT/\delta)}}{t}.
\]
Therefore,
\[
\begin{aligned}
    \mathbb{E}\!\left[
    R_T^{\mathrm{entry}}
    \mid \mathcal{E}_2
    \right]
    &\le
    C_P
    \sum_{t=1}^T
    \mathbb{E}\!\left[
    \sum_{m\in\mathcal{A}_t}P_m(t)
    \mid \mathcal{E}_2
    \right] \\
    &\le
    C_P C\sqrt{\log(KT/\delta)}
    \sum_{t=1}^T
    \frac{\mathbb{E}[|\mathcal{A}_t|]}{t} \\
    &=
    C_P C\lambda_A\sqrt{\log(KT/\delta)}
    \sum_{t=1}^T\frac{1}{t}
    =
    \widetilde O(\lambda_A\log T).
\end{aligned}
\]
\end{proof}

\begin{theorem}[Square-root entry contribution under weak effective information]
\label{thm:linear-sqrt-entry}
Assume the conditions of Lemma~\ref{lem:entry-error-gamma} with \(\gamma=1\). Suppose the regret
bound contains the same linear entry-error contribution
\[
    R_T^{\mathrm{entry}}
    \le
    C_P
    \sum_{t=1}^T\sum_{m\in\mathcal{A}_t}P_m(t).
\]
If \(\mathbb{E}[|\mathcal{A}_t|]=\lambda_A\) for all \(t\), then on the high-probability event
\(\mathcal{E}_1\),
\[
    \mathbb{E}\!\left[
    R_T^{\mathrm{entry}}
    \mid \mathcal{E}_1
    \right]
    \le
    C_P C\lambda_A\sqrt{\log(KT/\delta)}
    \sum_{t=1}^T \frac{1}{\sqrt t}
    =
    \widetilde O(\lambda_A\sqrt T).
\]
Thus, under the weaker and more standard effective-information condition, the entry-error
contribution is of square-root order.
\end{theorem}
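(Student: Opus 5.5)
The argument runs exactly parallel to the proof of Theorem~\ref{thm:linear-log-entry}; only the exponent in Lemma~\ref{lem:entry-error-gamma} changes. We work on the event $\mathcal{E}_1$ together with the connectivity and row-stochastic transfer conditions of Lemma~\ref{lem:entry-error-gamma}, all of which hold with probability at least $1-\delta$. Applying that lemma with $\gamma=1$ gives, for every round $t\le T$ and every arriving agent $m\in\mathcal{A}_t$,
\[
    P_m(t)\le C\sqrt{\frac{\log(KT/\delta)}{t}}.
\]
This bound is deterministic given the event and uniform over $m$. We then insert it into the assumed entry-error contribution $R_T^{\mathrm{entry}}\le C_P\sum_{t}\sum_{m\in\mathcal{A}_t}P_m(t)$, which gives
\[
    R_T^{\mathrm{entry}}\le C_P C\sqrt{\log(KT/\delta)}\sum_{t=1}^T\frac{|\mathcal{A}_t|}{\sqrt t}.
\]

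Next we take conditional expectations given $\mathcal{E}_1$. The only nontrivial point is replacing $\mathbb{E}[|\mathcal{A}_t|\mid\mathcal{E}_1]$ by $\lambda_A$. As in the $\gamma=2$ proof, we treat the arrival counts as (approximately) unaffected by conditioning on the information event, since $\mathcal{E}_1$ concerns estimation quality rather than arrivals. If a rigorous version is needed, use $\mathbb{E}[|\mathcal{A}_t|\mid\mathcal{E}_1]\le \mathbb{E}[|\mathcal{A}_t|]/\mathbb{P}(\mathcal{E}_1)\le \lambda_A/(1-\delta)$. This costs only a constant factor, which is absorbed into $C$ (or into the $\widetilde O$). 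With this step, linearity of expectation yields
\[
    \mathbb{E}[R_T^{\mathrm{entry}}\mid\mathcal{E}_1]\le C_PC\lambda_A\sqrt{\log(KT/\delta)}\sum_{t=1}^T t^{-1/2}.
\]

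Finally, the integral comparison $\sum_{t=1}^T t^{-1/2}\le 1+\int_1^T s^{-1/2}\,ds\le 2\sqrt T$ bounds the right-hand side by $2C_PC\lambda_A\sqrt{\log(KT/\delta)}\sqrt T=\widetilde O(\lambda_A\sqrt T)$, the logarithmic factor being hidden in $\widetilde O$. The main (mild) obstacle is therefore the conditioning step on $\mathcal{E}_1$. Everything else is a direct substitution of Lemma~\ref{lem:entry-error-gamma} followed by the elementary $p$-series estimate with $p=1/2$.
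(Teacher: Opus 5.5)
Your proposal is correct and matches the paper's proof essentially line for line. Both apply Lemma~\ref{lem:entry-error-gamma} with $\gamma=1$, substitute the bound into the entry-error sum, use $\mathbb{E}[|\mathcal{A}_t|]=\lambda_A$, and sum $t^{-1/2}$. The one difference is that the paper silently replaces $\mathbb{E}[|\mathcal{A}_t|\mid\mathcal{E}_1]$ by $\lambda_A$, whereas you flag this step and give the rigorous bound $\lambda_A/(1-\delta)$; this changes the displayed constant by a factor $1/(1-\delta)$ but leaves the $\widetilde O(\lambda_A\sqrt T)$ rate unchanged.
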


\begin{proof}
By Lemma~\ref{lem:entry-error-gamma} with \(\gamma=1\),
\[
    P_m(t)
    \le
    C\sqrt{\frac{\log(KT/\delta)}{t}}.
\]
Therefore,
\[
\begin{aligned}
    \mathbb{E}\!\left[
    R_T^{\mathrm{entry}}
    \mid \mathcal{E}_1
    \right]
    &\le
    C_P
    \sum_{t=1}^T
    \mathbb{E}\!\left[
    \sum_{m\in\mathcal{A}_t}P_m(t)
    \mid \mathcal{E}_1
    \right] \\
    &\le
    C_P C\sqrt{\log(KT/\delta)}
    \sum_{t=1}^T
    \frac{\mathbb{E}[|\mathcal{A}_t|]}{\sqrt t} \\
    &=
    C_P C\lambda_A\sqrt{\log(KT/\delta)}
    \sum_{t=1}^T \frac{1}{\sqrt t}
    =
    \widetilde O(\lambda_A\sqrt T).
\end{aligned}
\]
\end{proof}

\subsubsection{Non-linear Stochastic Case}

\paragraph{Nonlinear parametric stochastic model.}
We consider a nonlinear parametric reward model in which
\[
    \mu_m^i=f_i(x_m,\theta_i),
\]
where \(x_m\) is an observed agent feature, \(f_i\) is known, and
\(\theta_i\in\Theta_i\subset\mathbb{R}^p\) is an unknown arm-level parameter.
The parameter \(\theta_i\) is shared across agents for arm \(i\), while agents
may still be heterogeneous through their features \(x_m\). We assume that
\(f_i\) is \(L_i\)-Lipschitz in \(\theta\):
\[
    |f_i(x,\theta)-f_i(x,\theta')|
    \le
    L_i\|\theta-\theta'\|_2,
    \qquad
    \forall x,\theta,\theta'.
\]
\paragraph{Nonlinear M-estimator.}
For each active agent \(j\), arm \(i\), and round \(t\), let
\(\mathcal{D}_{j,i}(t)\) denote the effective set of arm-\(i\) observations
available to agent \(j\). Let \(N_{j,i}^{\mathrm{eff}}(t)=|\mathcal{D}_{j,i}(t)|\)
denote its effective sample size. Given a loss function
\(\ell_i(x,r;\theta)\), define the empirical loss
\[
    \widehat{\mathcal{L}}_{j,i,t}(\theta)
    :=
    \frac{1}{N_{j,i}^{\mathrm{eff}}(t)}
    \sum_{(\ell,s)\in\mathcal{D}_{j,i}(t)}
    \ell_i(x_\ell,r_\ell^i(s);\theta).
\]
Agent \(j\)'s nonlinear parameter estimator for arm \(i\) is
\[
    \hat\theta_{j,i}(t)
    \in
    \arg\min_{\theta\in\Theta_i}
    \widehat{\mathcal{L}}_{j,i,t}(\theta).
\]

\begin{lemma}[Nonlinear parameter-estimation error]
\label{lem:nonlinear-param-error}
Fix an arm \(i\in[K]\), agent \(j\), and round \(t\). Suppose that, on an event
\(\mathcal{E}_{\mathrm{nl}}\), the following two conditions hold:
\begin{enumerate}
    \item \textbf{Empirical strong convexity:} the empirical loss
    \(\widehat{\mathcal{L}}_{j,i,t}\) is \(\alpha_i\)-strongly convex over
    \(\Theta_i\), i.e.,
    \[
        \widehat{\mathcal{L}}_{j,i,t}(\theta')
        \ge
        \widehat{\mathcal{L}}_{j,i,t}(\theta)
        +
        \left\langle
        \nabla \widehat{\mathcal{L}}_{j,i,t}(\theta),
        \theta'-\theta
        \right\rangle
        +
        \frac{\alpha_i}{2}\|\theta'-\theta\|_2^2,
        \qquad
        \forall \theta,\theta'\in\Theta_i.
    \]
    \item \textbf{Score concentration at the true parameter:}
    \[
        \left\|
        \nabla\widehat{\mathcal{L}}_{j,i,t}(\theta_i)
        \right\|_2
        \le
        G_i
        \sqrt{
        \frac{p\log(KT/\delta)}
        {N_{j,i}^{\mathrm{eff}}(t)}
        }.
    \]
\end{enumerate}
Then
\[
    \left\|
    \hat\theta_{j,i}(t)-\theta_i
    \right\|_2
    \le
    \frac{2G_i}{\alpha_i}
    \sqrt{
    \frac{p\log(KT/\delta)}
    {N_{j,i}^{\mathrm{eff}}(t)}
    }.
\]
\end{lemma}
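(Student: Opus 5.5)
The argument is the standard one for M-estimators: compare the empirical loss at the estimator with its value at the true parameter, and let strong convexity turn the gap into a bound on \(\|\hat\theta_{j,i}(t)-\theta_i\|_2\). Write \(\hat\theta:=\hat\theta_{j,i}(t)\), \(\widehat{\mathcal L}:=\widehat{\mathcal L}_{j,i,t}\), and \(g:=\nabla\widehat{\mathcal L}(\theta_i)\). Both \(\hat\theta\) and \(\theta_i\) lie in \(\Theta_i\). We may therefore apply the strong convexity inequality of condition 1 with base point \(\theta=\theta_i\) and \(\theta'=\hat\theta\):
\[
\widehat{\mathcal L}(\hat\theta)\ge \widehat{\mathcal L}(\theta_i)+\langle g,\hat\theta-\theta_i\rangle+\frac{\alpha_i}{2}\|\hat\theta-\theta_i\|_2^2 .
\]

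\(\hat\theta\) minimizes \(\widehat{\mathcal L}\) over \(\Theta_i\) and \(\theta_i\in\Theta_i\), so \(\widehat{\mathcal L}(\hat\theta)\le\widehat{\mathcal L}(\theta_i)\). Substituting this into the previous display gives
\[
\frac{\alpha_i}{2}\|\hat\theta-\theta_i\|_2^2\le -\langle g,\hat\theta-\theta_i\rangle\le \|g\|_2\,\|\hat\theta-\theta_i\|_2 ,
\]
where the last step is Cauchy--Schwarz.

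If \(\hat\theta=\theta_i\), the claim is trivial. Otherwise we divide by \(\|\hat\theta-\theta_i\|_2\) and obtain \(\|\hat\theta-\theta_i\|_2\le 2\|g\|_2/\alpha_i\). Inserting the score concentration bound of condition 2 then gives exactly the stated rate \(\frac{2G_i}{\alpha_i}\sqrt{p\log(KT/\delta)/N^{\mathrm{eff}}_{j,i}(t)}\). All of this takes place on \(\mathcal{E}_{\mathrm{nl}}\), and no extra probability is used, because both hypotheses are assumed to hold there.

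This route uses only the minimizing property of \(\hat\theta\), not a first-order optimality condition. As a result, it still works when \(\theta_i\) or \(\hat\theta\) lies on the boundary of \(\Theta_i\). The only step that needs care is checking that the strong convexity inequality is applied in the correct direction, with base point \(\theta_i\). That choice is what produces a linear term involving the gradient at \(\theta_i\), which is the quantity controlled by condition 2. With the other orientation we would instead need the gradient at \(\hat\theta\), which we cannot control.
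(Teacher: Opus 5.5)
Your proposal is correct and matches the paper's proof step for step. Both use the minimality of \(\hat\theta_{j,i}(t)\), apply strong convexity at the base point \(\theta_i\), then use Cauchy--Schwarz, divide by \(\|\hat\theta_{j,i}(t)-\theta_i\|_2\) after handling the trivial case, and finish with the score-concentration bound.
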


\begin{proof}
Because \(\hat\theta_{j,i}(t)\) minimizes the empirical loss over \(\Theta_i\),
\[
    \widehat{\mathcal{L}}_{j,i,t}(\hat\theta_{j,i}(t))
    \le
    \widehat{\mathcal{L}}_{j,i,t}(\theta_i).
\]
By empirical strong convexity, with
\[
    d:=\hat\theta_{j,i}(t)-\theta_i,
\]
we have
\[
    \widehat{\mathcal{L}}_{j,i,t}(\hat\theta_{j,i}(t))
    \ge
    \widehat{\mathcal{L}}_{j,i,t}(\theta_i)
    +
    \left\langle
    \nabla\widehat{\mathcal{L}}_{j,i,t}(\theta_i),d
    \right\rangle
    +
    \frac{\alpha_i}{2}\|d\|_2^2.
\]
Combining the two inequalities gives
\[
    0
    \ge
    \left\langle
    \nabla\widehat{\mathcal{L}}_{j,i,t}(\theta_i),d
    \right\rangle
    +
    \frac{\alpha_i}{2}\|d\|_2^2.
\]
Therefore,
\[
    \frac{\alpha_i}{2}\|d\|_2^2
    \le
    \left\|
    \nabla\widehat{\mathcal{L}}_{j,i,t}(\theta_i)
    \right\|_2
    \|d\|_2.
\]
If \(d=0\), the claim is trivial. Otherwise, dividing both sides by
\(\|d\|_2\) gives
\[
    \|d\|_2
    \le
    \frac{2}{\alpha_i}
    \left\|
    \nabla\widehat{\mathcal{L}}_{j,i,t}(\theta_i)
    \right\|_2.
\]
Using the score concentration condition yields
\[
    \left\|
    \hat\theta_{j,i}(t)-\theta_i
    \right\|_2
    \le
    \frac{2G_i}{\alpha_i}
    \sqrt{
    \frac{p\log(KT/\delta)}
    {N_{j,i}^{\mathrm{eff}}(t)}
    }.
\]
\end{proof}

\begin{theorem}[Pre-training error in nonlinear parametric open systems]
\label{thm:nonlinear-entry-error}
Assume the nonlinear parametric reward model above. Suppose that, for every arm
\(i\in[K]\), the mean function \(f_i\) is \(L_i\)-Lipschitz in \(\theta\), and
the conditions of Lemma~\ref{lem:nonlinear-param-error} hold on the event
\(\mathcal{E}_{\mathrm{nl}}\).

When a newly arriving agent \(m\in\mathcal{A}_t\) enters the system, suppose it
initializes its arm-level parameter estimate by transferring estimates from
continuing neighbors:
\[
    \hat\theta_{m,i}(t-1)
    =
    \sum_{j\in\mathcal{N}_m(t)\cap\mathcal{R}_t}
    w_{mj}^A(t)\hat\theta_{j,i}(t-1),
\]
where \(w_{mj}^A(t)\ge 0\) and
\[
    \sum_{j\in\mathcal{N}_m(t)\cap\mathcal{R}_t}w_{mj}^A(t)=1.
\]
The arriving agent then forms
\[
    \hat\mu_{m,i}(t-1)
    =
    f_i(x_m,\hat\theta_{m,i}(t-1)).
\]
Then, on \(\mathcal{E}_{\mathrm{nl}}\),
\[
    P_m(t)
    :=
    \max_{i\in[K]}
    \left|
    \hat\mu_{m,i}(t-1)-\mu_m^i
    \right|
    \le
    C_{\mathrm{nl}}
    \max_{i\in[K]} L_i
    \sqrt{
    \frac{p\log(KT/\delta)}
    {N_{m,i}^{A,\mathrm{eff}}(t-1)}
    },
\]
where
\[
    N_{m,i}^{A,\mathrm{eff}}(t-1)
    :=
    \left(
    \sum_{j\in\mathcal{N}_m(t)\cap\mathcal{R}_t}
    \frac{w_{mj}^A(t)}
    {\sqrt{N_{j,i}^{\mathrm{eff}}(t-1)}}
    \right)^{-2}
\]
is the effective transferred sample size, and
\[
    C_{\mathrm{nl}}
    :=
    \max_{i\in[K]}\frac{2G_i}{\alpha_i}.
\]
In particular, if every informative neighbor satisfies
\[
    N_{j,i}^{\mathrm{eff}}(t-1)\ge n_t
    \qquad
    \forall j\in\mathcal{N}_m(t)\cap\mathcal{R}_t,\ i\in[K],
\]
then
\[
    P_m(t)
    \le
    C_{\mathrm{nl}}
    \max_{i\in[K]}L_i
    \sqrt{
    \frac{p\log(KT/\delta)}
    {n_t}
    }.
\]
\end{theorem}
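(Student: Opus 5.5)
The plan is to prove Theorem~\ref{thm:nonlinear-entry-error} by a Lipschitz reduction to parameter error, then apply Lemma~\ref{lem:nonlinear-param-error} to each continuing neighbor, and finally use the convexity of the transfer weights. Fix an arriving agent \(m\in\mathcal{A}_t\) and an arm \(i\). Since \(\mu_m^i=f_i(x_m,\theta_i)\) and \(\hat\mu_{m,i}(t-1)=f_i(x_m,\hat\theta_{m,i}(t-1))\), the \(L_i\)-Lipschitz property in \(\theta\) gives
\[
|\hat\mu_{m,i}(t-1)-\mu_m^i|\le L_i\,\|\hat\theta_{m,i}(t-1)-\theta_i\|_2 .
\]
This reduces the statement to bounding the parameter error of the transferred estimate.

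Because the weights \(w_{mj}^A(t)\) are nonnegative and sum to one, we can write \(\hat\theta_{m,i}(t-1)-\theta_i=\sum_j w_{mj}^A(t)(\hat\theta_{j,i}(t-1)-\theta_i)\). The triangle inequality then yields
\[
\|\hat\theta_{m,i}(t-1)-\theta_i\|_2\le\sum_j w_{mj}^A(t)\|\hat\theta_{j,i}(t-1)-\theta_i\|_2 .
\]
On \(\mathcal{E}_{\mathrm{nl}}\), Lemma~\ref{lem:nonlinear-param-error}, applied at round \(t-1\) to each continuing neighbor \(j\in\mathcal{N}_m(t)\cap\mathcal{R}_t\), bounds each summand by
\[
\frac{2G_i}{\alpha_i}\sqrt{p\log(KT/\delta)}\,\bigl(N_{j,i}^{\mathrm{eff}}(t-1)\bigr)^{-1/2}.
\]
Pulling out the common factor, the remaining sum is \(\sum_j w_{mj}^A(t)\bigl(N_{j,i}^{\mathrm{eff}}(t-1)\bigr)^{-1/2}\), which equals \(\bigl(N_{m,i}^{A,\mathrm{eff}}(t-1)\bigr)^{-1/2}\) by the definition of the effective transferred sample size. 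This is exactly why that quantity is defined as an inverse square.

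Combining the two steps and bounding \(L_i\le\max_k L_k\) and \(2G_i/\alpha_i\le C_{\mathrm{nl}}\) gives a per-arm bound. Taking the maximum over \(i\) yields the bound on \(P_m(t)\), with the arm index inside \(N^{A,\mathrm{eff}}_{m,i}\) understood as the maximizing (worst) arm, that is, the minimum over \(i\) of the effective size. For the ``in particular'' claim, \(N_{j,i}^{\mathrm{eff}}(t-1)\ge n_t\) for all neighbors implies
\[
\sum_j w_{mj}^A(t)\bigl(N_{j,i}^{\mathrm{eff}}(t-1)\bigr)^{-1/2}\le n_t^{-1/2}\sum_j w_{mj}^A(t)=n_t^{-1/2},
\]
so \(N_{m,i}^{A,\mathrm{eff}}(t-1)\ge n_t\), and substituting gives the stated bound.

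No step is genuinely difficult; the main care point is bookkeeping. First, the event \(\mathcal{E}_{\mathrm{nl}}\) must be taken to hold simultaneously for all neighbors \(j\), arms \(i\) and rounds \(t-1\). The \(\log(KT/\delta)\) factor already budgets the union over arms and rounds; if neighbors are not covered, an extra union over agents would be needed. Second, Lemma~\ref{lem:nonlinear-param-error} needs \(N_{j,i}^{\mathrm{eff}}(t-1)>0\) for every neighbor carrying positive weight. The plan is to assume this for informative neighbors, or to restrict the weights to such neighbors, so that the bound is finite.
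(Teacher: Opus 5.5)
Your proposal is correct and follows the paper's proof essentially step for step. The shared steps are the Lipschitz reduction to parameter error, the convex-weight triangle inequality, Lemma~\ref{lem:nonlinear-param-error} applied to each continuing neighbor, the defining identity for \(N_{m,i}^{A,\mathrm{eff}}(t-1)\), and the \(n_t\) comparison for the special case. Your caveats fill gaps that the paper's proof leaves implicit: the free arm index after taking the maximum should be read as the worst arm, \(\mathcal{E}_{\mathrm{nl}}\) must hold uniformly over neighbors, and \(N_{j,i}^{\mathrm{eff}}(t-1)\) must be positive for every neighbor with positive weight.
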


\begin{proof}
Fix an arriving agent \(m\in\mathcal{A}_t\) and an arm \(i\in[K]\). Since
\[
    \mu_m^i=f_i(x_m,\theta_i),
    \qquad
    \hat\mu_{m,i}(t-1)=f_i(x_m,\hat\theta_{m,i}(t-1)),
\]
the Lipschitz property of \(f_i\) gives
\[
    \left|
    \hat\mu_{m,i}(t-1)-\mu_m^i
    \right|
    \le
    L_i
    \left\|
    \hat\theta_{m,i}(t-1)-\theta_i
    \right\|_2.
\]
By the transfer rule and the row-stochasticity of the weights,
\[
\begin{aligned}
    \hat\theta_{m,i}(t-1)-\theta_i
    &=
    \sum_{j\in\mathcal{N}_m(t)\cap\mathcal{R}_t}
    w_{mj}^A(t)
    \left(
    \hat\theta_{j,i}(t-1)-\theta_i
    \right).
\end{aligned}
\]
Hence, by the triangle inequality,
\[
\begin{aligned}
    \left\|
    \hat\theta_{m,i}(t-1)-\theta_i
    \right\|_2
    &\le
    \sum_{j\in\mathcal{N}_m(t)\cap\mathcal{R}_t}
    w_{mj}^A(t)
    \left\|
    \hat\theta_{j,i}(t-1)-\theta_i
    \right\|_2.
\end{aligned}
\]
Applying Lemma~\ref{lem:nonlinear-param-error} to each informative neighbor
\(j\) gives
\[
    \left\|
    \hat\theta_{j,i}(t-1)-\theta_i
    \right\|_2
    \le
    \frac{2G_i}{\alpha_i}
    \sqrt{
    \frac{p\log(KT/\delta)}
    {N_{j,i}^{\mathrm{eff}}(t-1)}
    }.
\]
Therefore,
\[
\begin{aligned}
    \left|
    \hat\mu_{m,i}(t-1)-\mu_m^i
    \right|
    &\le
    L_i
    \frac{2G_i}{\alpha_i}
    \sqrt{p\log(KT/\delta)}
    \sum_{j\in\mathcal{N}_m(t)\cap\mathcal{R}_t}
    \frac{w_{mj}^A(t)}
    {\sqrt{N_{j,i}^{\mathrm{eff}}(t-1)}}.
\end{aligned}
\]
By the definition of \(N_{m,i}^{A,\mathrm{eff}}(t-1)\),
\[
    \sum_{j}
    \frac{w_{mj}^A(t)}
    {\sqrt{N_{j,i}^{\mathrm{eff}}(t-1)}}
    =
    \frac{1}
    {\sqrt{N_{m,i}^{A,\mathrm{eff}}(t-1)}}.
\]
Thus,
\[
    \left|
    \hat\mu_{m,i}(t-1)-\mu_m^i
    \right|
    \le
    C_{\mathrm{nl}} L_i
    \sqrt{
    \frac{p\log(KT/\delta)}
    {N_{m,i}^{A,\mathrm{eff}}(t-1)}
    }.
\]
Taking the maximum over \(i\in[K]\) gives the first claim.

If every informative neighbor satisfies
\[
    N_{j,i}^{\mathrm{eff}}(t-1)\ge n_t,
\]
then
\[
    \sum_j
    \frac{w_{mj}^A(t)}
    {\sqrt{N_{j,i}^{\mathrm{eff}}(t-1)}}
    \le
    \frac{1}{\sqrt{n_t}}
    \sum_j w_{mj}^A(t)
    =
    \frac{1}{\sqrt{n_t}},
\]
so
\[
    N_{m,i}^{A,\mathrm{eff}}(t-1)\ge n_t.
\]
The simplified bound follows.
\end{proof}

\begin{Corollary}[Nonlinear parametric arrivals as a pre-training-error case]
\label{cor:nonlinear-two-regimes}
Suppose the unified pre-training-error regret bound holds:
\[
    R_T
    \le
    O\!\left(
        R_{\mathrm{base}}(T)
        +
        \sum_{t=1}^T
        \sum_{m\in\mathcal{A}_t}
        P_m(t)
    \right).
\]
Assume \(\mathbb{E}[|\mathcal{A}_t|]=\lambda_A\) for all \(t\). Under the
conditions of Theorem~\ref{thm:nonlinear-entry-error}, the following two regimes
hold.

\begin{enumerate}
    \item \textbf{Strong nonlinear information regime.} If
    \[
        N_{m,i}^{A,\mathrm{eff}}(t-1)\ge c_2t^2,
        \qquad \forall m\in\mathcal{A}_t,\ i\in[K],
    \]
    then
    \[
        R_T
        \le
        \widetilde O\!\left(
            R_{\mathrm{base}}(T)
            +
            \lambda_A
            \max_{i\in[K]}L_i
            \sqrt{p}
            \log T
        \right).
    \]

    \item \textbf{Weak nonlinear information regime.} If
    \[
        N_{m,i}^{A,\mathrm{eff}}(t-1)\ge c_1t,
        \qquad \forall m\in\mathcal{A}_t,\ i\in[K],
    \]
    then
    \[
        R_T
        \le
        \widetilde O\!\left(
            R_{\mathrm{base}}(T)
            +
            \lambda_A
            \max_{i\in[K]}L_i
            \sqrt{pT}
        \right).
    \]
\end{enumerate}
\end{Corollary}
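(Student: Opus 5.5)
Plugging the entry-error rates of Theorem~\ref{thm:nonlinear-entry-error} into the unified bound and summing over rounds, exactly as in the linear case (Theorems~\ref{thm:linear-log-entry} and~\ref{thm:linear-sqrt-entry}), gives both regimes. Work on the event \(\mathcal{E}_{\mathrm{nl}}\) throughout. By Theorem~\ref{thm:nonlinear-entry-error}, for every \(t\), every arrival \(m\in\mathcal{A}_t\) and every arm \(i\),
\[
P_m(t)\le C_{\mathrm{nl}}\max_{i\in[K]}L_i\sqrt{\frac{p\log(KT/\delta)}{\min_{i}N_{m,i}^{A,\mathrm{eff}}(t-1)}}.
\]
In the strong regime, \(N_{m,i}^{A,\mathrm{eff}}(t-1)\ge c_2t^2\) gives
\[
P_m(t)\le C_{\mathrm{nl}}c_2^{-1/2}\max_iL_i\sqrt{p\log(KT/\delta)}\,t^{-1}.
\]
In the weak regime, \(N_{m,i}^{A,\mathrm{eff}}(t-1)\ge c_1t\) gives the same bound with \(c_1^{-1/2}\) in place of \(c_2^{-1/2}\) and \(t^{-1/2}\) in place of \(t^{-1}\).

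Both bounds are deterministic functions of \(t\), so \(\sum_{m\in\mathcal{A}_t}P_m(t)\le|\mathcal{A}_t|\,g(t)\) with \(g(t)=t^{-1}\) or \(t^{-1/2}\), up to the common prefactor. Take expectations and use \(\mathbb{E}[|\mathcal{A}_t|]=\lambda_A\). The entry term is then at most a constant times \(\lambda_A\max_iL_i\sqrt{p\log(KT/\delta)}\sum_{t=1}^Tg(t)\). Now use \(\sum_{t\le T}1/t\le 1+\log T\) and \(\sum_{t\le T}t^{-1/2}\le 2\sqrt T\). The factor \(\sqrt{\log(KT/\delta)}\) is absorbed into \(\widetilde O\). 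Adding \(R_{\mathrm{base}}(T)\) from the assumed unified bound yields the two stated rates: \(\widetilde O(R_{\mathrm{base}}(T)+\lambda_A\max_iL_i\sqrt p\log T)\) in the strong regime and \(\widetilde O(R_{\mathrm{base}}(T)+\lambda_A\max_iL_i\sqrt{pT})\) in the weak regime.

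The main obstacle is the interaction between conditioning on \(\mathcal{E}_{\mathrm{nl}}\) and the expectation over arrivals. We need \(\mathbb{E}[|\mathcal{A}_t|\mid\mathcal{E}_{\mathrm{nl}}]\) rather than the unconditional \(\lambda_A\), and the arrival process may be correlated with the good event. I would handle this as the paper does for the linear case: state the bound as \(\mathbb{E}[\,\cdot\mid\mathcal{E}_{\mathrm{nl}}]\) and assume arrivals are independent of the estimation event. If that independence is unavailable, bound \(\mathbb{E}[|\mathcal{A}_t|\mathbf 1_{\mathcal{E}_{\mathrm{nl}}}]\le\lambda_A\) and divide by \(\mathbb{P}(\mathcal{E}_{\mathrm{nl}})\ge1-\delta\), which changes only constants. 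On the complement the regret is trivially bounded, and it contributes \(O(\delta\sum_t\mathbb{E}M_t)\), which is negligible for \(\delta=1/\mathrm{poly}(T)\) and only adds to the logarithmic factors.
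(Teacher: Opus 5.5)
Your proposal is correct and follows the paper's proof essentially step for step. You bound \(P_m(t)\le C\max_i L_i\sqrt{p\log(KT/\delta)/t^{\gamma}}\) via Theorem~\ref{thm:nonlinear-entry-error}, sum \(|\mathcal{A}_t|\,t^{-\gamma/2}\), take expectations using \(\mathbb{E}[|\mathcal{A}_t|]=\lambda_A\), and apply the \(\sum_t t^{-1}\) and \(\sum_t t^{-1/2}\) bounds; your extra care about conditioning on \(\mathcal{E}_{\mathrm{nl}}\), via \(\mathbb{E}[|\mathcal{A}_t|\mathbf{1}_{\mathcal{E}_{\mathrm{nl}}}]/\mathbb{P}(\mathcal{E}_{\mathrm{nl}})\), is a point the paper glosses over and only changes constants.
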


\begin{proof}
By Theorem~\ref{thm:nonlinear-entry-error}, if
\(N_{m,i}^{A,\mathrm{eff}}(t-1)\ge c_\gamma t^\gamma\), where
\(\gamma\in\{1,2\}\), then
\[
    P_m(t)
    \le
    C
    \max_{i\in[K]}L_i
    \sqrt{
    \frac{p\log(KT/\delta)}
    {t^\gamma}
    }.
\]
Therefore,
\[
\begin{aligned}
    \sum_{t=1}^T
    \sum_{m\in\mathcal{A}_t}
    P_m(t)
    &\le
    C
    \max_{i\in[K]}L_i
    \sqrt{p\log(KT/\delta)}
    \sum_{t=1}^T
    \frac{|\mathcal{A}_t|}{t^{\gamma/2}}.
\end{aligned}
\]
Taking expectation and using
\(\mathbb{E}[|\mathcal{A}_t|]=\lambda_A\), we get
\[
    \mathbb{E}
    \left[
    \sum_{t=1}^T
    \sum_{m\in\mathcal{A}_t}
    P_m(t)
    \right]
    \le
    C
    \lambda_A
    \max_{i\in[K]}L_i
    \sqrt{p\log(KT/\delta)}
    \sum_{t=1}^T
    t^{-\gamma/2}.
\]
If \(\gamma=2\), then
\[
    \sum_{t=1}^T t^{-1}=O(\log T),
\]
which gives the logarithmic regime. If \(\gamma=1\), then
\[
    \sum_{t=1}^T t^{-1/2}=O(\sqrt T),
\]
which gives the square-root regime. Substituting these bounds into the unified
pre-training-error regret bound proves the two claims.
\end{proof}

\subsubsection{Cluster Model Case}\label{sec:alg-ma-mab}

\paragraph{Clustered stochastic model.}
Suppose there are \(C\) clusters. Each agent \(m\) belongs to a known cluster
\(c(m)\in[C]\), and agents in the same cluster share the same arm means:
\[
    \mu_m^i=\theta_{c(m)}^i,\qquad i\in[K].
\]
When an agent \(m\) from cluster \(c\) pulls arm \(i\), it observes a
\(\sigma\)-sub-Gaussian reward with mean \(\theta_c^i\). For each cluster-arm
pair \((c,i)\), let \(N_{c,i}(t)\) denote the total number of arm-\(i\)
observations collected from agents in cluster \(c\) up to round \(t\), and let
\(\hat\theta_c^i(t)\) denote the corresponding empirical mean.

When a newly arriving agent \(m\in\mathcal{A}_t\) belongs to an already observed
cluster \(c(m)\), it initializes by inheriting the cluster-level estimator:
\[
    \hat\mu_{m,i}(t-1)=\hat\theta_{c(m)}^i(t-1).
\]
Thus its entry error is
\[
    P_m(t)
    =
    \max_{i\in[K]}
    \left|
    \hat\theta_{c(m)}^i(t-1)-\theta_{c(m)}^i
    \right|.
\]
Hence, in the clustered stochastic model, the pre-training error of a newly
arriving agent is exactly the estimation error of its cluster-level reward
vector.

The homogeneous setting corresponds to \(C=1\).

\paragraph{Burn-in.}
We begin with an initialization phase of length \(L\), whose value will be specified in the regret analysis. In this phase, we leverage the clustered reward structure. Recall that each agent \(m\) is associated with a cluster label \(c(m)\in[C]\), and that agents belonging to the same cluster share the same mean reward vector. For each cluster \(c\), define
\[
\mathcal{M}_t^{(c)} := \{m \in \mathcal{M}_t : c(m)=c\},
\qquad
M_t^{(c)} := |\mathcal{M}_t^{(c)}|.
\]
During the first \(L=O(\log T)\) rounds, the agents collect exploratory observations so that the reward means of the clusters currently represented in the system can be estimated accurately. In particular, each active agent cycles through the arms according to \(a_m(t)=1+((t-1)\bmod K)\). Newly arriving agents are treated according to whether their cluster has already been observed. If an arriving agent \(m\in\mathcal{M}_t\setminus \mathcal{M}_{t-1}\) satisfies \(M_{t-1}^{(c(m))}>0\), then it initializes its statistics using the current information of cluster \(c(m)\). Otherwise, it continues with the same exploration rule until enough samples have been gathered for that previously unseen cluster.

At time \(L\), each agent \(m\in\mathcal{M}_L\) has its local pull count and local empirical estimator for arm \(i\),
\(
n_{m,i}(L),
\qquad
\bar{\mu}_i^m(L)
=
\frac{\sum_{s\le L:\, a_m(s)=i} r_m^i(s)}{\max\{1,n_{m,i}(L)\}}.
\)
We then aggregate these quantities within each cluster. For every cluster \(c\) and arm \(i\), define
\(
n_{c,i}(L)
:=
\sum_{m\in \mathcal{M}_L^{(c)}} n_{m,i}(L),
\qquad
\hat{\theta}_c^i(L)
:=
\frac{\sum_{m\in \mathcal{M}_L^{(c)}} n_{m,i}(L)\bar{\mu}_i^m(L)}
{\max\{1,n_{c,i}(L)\}}.
\)
Thus, \(n_{c,i}(L)\) and \(\hat{\theta}_c^i(L)\) represent the total amount of information gathered for arm \(i\) within cluster \(c\). Once this initialization stage ends, the procedure moves to the main learning phase, where these cluster-level statistics are continually updated and shared.

\paragraph{Arm selection.}
Throughout the burn-in period, every active agent follows the exploration schedule \(a_m(t)=1+((t-1)\bmod K)\). After that, at each round \(t\), agent \(m\in\mathcal{M}_t\) computes a UCB-style score for every arm using the currently available cluster-level estimates. Since performance is measured against the step-wise globally optimal arm over the active population, we first define the estimated global value of arm \(i\) by
\(
\tilde{\mu}_i^m(t)
:=
\sum_{c:\, M_t^{(c)}>0} M_t^{(c)} \hat{\theta}_c^i(t).
\)
To quantify uncertainty, we define
\(
F(m,i,t)
:=
\sum_{c:\, M_t^{(c)}>0}
M_t^{(c)}
(
\frac{C_1 \log t}{\max\{1,n_{c,i}(t)\}}
)^{\beta},
\)
where \(C_1>0\) and \(\beta>0\) are constants determined in the theoretical results. Agent \(m\) then chooses the arm
\[
a_m(t)
=
\arg\max_{i\in[K]}
\left\{
\tilde{\mu}_i^m(t-1)+F(m,i,t-1)
\right\}.
\]

\paragraph{Broadcasting.}
At every round \(t\), each active agent \(m\in\mathcal{M}_t\) transmits its cluster label \(c(m)\) together with its local statistics
\(
\{n_{m,i}(t),\bar{\mu}_i^m(t)\}_{i\in[K]}
\)
to all neighboring agents \(j\in\mathcal{N}_m(t)\). In this way, the exchanged information preserves cluster membership and can therefore be pooled at the cluster level rather than across all agents indiscriminately.

\paragraph{Aggregation.}
For each active agent \(m\in\mathcal{M}_t\), the received information is pooled separately for each cluster. More precisely, for cluster \(c\) and arm \(i\), define
\(
n_{c,i}^m(t)
:=
\sum_{j\in \mathcal{N}_m(t):\, c(j)=c} n_{j,i}(t),
\)
and the corresponding cluster-level empirical estimator available to agent \(m\) by
\(
\hat{\theta}_{c,i}^m(t)
:=
\frac{
\sum_{j\in \mathcal{N}_m(t):\, c(j)=c}
n_{j,i}(t)\bar{\mu}_i^j(t)
}{
\max\{1,n_{c,i}^m(t)\}
}.
\)
Using these quantities, agent \(m\) forms its aggregated estimate for arm \(i\) as
\(
\hat{\mu}_i^m(t)
:=
\sum_{c:\, n_{c,i}^m(t)>0}
M_t^{(c)} \hat{\theta}_{c,i}^m(t).
\)

When a new agent \(m\in \mathcal{M}_{t+1}\setminus \mathcal{M}_t\) enters the system, two cases arise. If its cluster has already been observed, that is, if \(M_t^{(c(m))}>0\), then the agent initializes its cluster-level statistics from a neighbor or an active agent \(j\in\mathcal{M}_t\) satisfying \(c(j)=c(m)\), namely,
\[
n_{c(m),i}^m(t)=n_{c(m),i}^j(t),
\qquad
\hat{\theta}_{c(m),i}^m(t)=\hat{\theta}_{c(m),i}^j(t).
\]
If instead \(M_t^{(c(m))}=0\), then the new agent corresponds to a previously unseen cluster, and its statistics are initialized from scratch and subsequently learned through exploration.

\paragraph{Update.}
After arm \(a_m(t)\) is pulled, each active agent \(m\in\mathcal{M}_t\) refreshes its local statistics via
\(
n_{m,i}(t)
=
n_{m,i}(t-1)+\mathds{1}\{a_m(t)=i\},
\)
and
\(
\bar{\mu}_i^m(t)
=
\frac{
n_{m,i}(t-1)\bar{\mu}_i^m(t-1)
+
r_m^{a_m(t)}(t)\mathds{1}\{a_m(t)=i\}
}{
\max\{1,n_{m,i}(t)\}
}.
\)
These updated local quantities are then used to refresh the cluster-level counts \(n_{c,i}(t)\) and the cluster-level estimators \(\hat{\theta}_c^i(t)\). In turn, those quantities determine the global arm-value estimates \(\tilde{\mu}_i^m(t)\) used for the next round of decisions.

\paragraph{Regret Upper Bounds}

\begin{lemma}[Cluster-level entry-error bound]
\label{lem:cluster-entry-error}
Assume rewards are \(\sigma\)-sub-Gaussian and cluster labels are known. With
probability at least \(1-\delta\), for all clusters \(c\in[C]\), arms
\(i\in[K]\), and rounds \(t\le T\),
\[
    \left|
    \hat\theta_c^i(t)-\theta_c^i
    \right|
    \le
    \sigma
    \sqrt{
    \frac{2\log(2KCT/\delta)}
    {\max\{1,N_{c,i}(t)\}}
    }.
\]
Consequently, if a new agent \(m\in\mathcal{A}_t\) arrives from an already
observed cluster \(c(m)\), then
\[
    P_m(t)
    \le
    \sigma
    \max_{i\in[K]}
    \sqrt{
    \frac{2\log(2KCT/\delta)}
    {\max\{1,N_{c(m),i}(t-1)\}}
    }.
\]
\end{lemma}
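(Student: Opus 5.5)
The plan is to prove the concentration bound for a single triple $(c,i,t)$ and then take a union bound over clusters, arms and rounds; the entry-error bound then follows from the cluster-inheritance rule.

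Fix a cluster $c$ and an arm $i$. Every arm-$i$ observation collected from an agent $m$ with $c(m)=c$ has mean $\theta_c^i$ and $\sigma$-sub-Gaussian noise. So $\hat\theta_c^i(t)-\theta_c^i$ is an average of $N_{c,i}(t)$ martingale-difference sub-Gaussian noise terms.

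The difficulty is that $N_{c,i}(t)$ is random and depends on the algorithm's past choices. To handle this I will use the standard stack-of-rewards (reward-table) device. Index the arm-$i$ observations from cluster $c$ in the order they are collected, with ties inside a round broken by agent index. Let $\bar X_{c,i,n}$ denote the average of the first $n$ noise variables. These noise variables are i.i.d.\ $\sigma$-sub-Gaussian and independent of the selection mechanism, since the arm choice at round $s$ is measurable with respect to the history before round $s$. Also, within a round, each agent's reward depends only on its own cluster and arm.

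For a deterministic $n\ge1$, Hoeffding's inequality for sub-Gaussian variables gives
\[
\mathbb{P}\Bigl(|\bar X_{c,i,n}|>\sigma\sqrt{2\log(2KCT/\delta)/n}\Bigr)\le \frac{\delta}{KCT}.
\]
Since $\hat\theta_c^i(t)=\theta_c^i+\bar X_{c,i,N_{c,i}(t)}$, it suffices to control $\bar X_{c,i,n}$ simultaneously over every value that $N_{c,i}(t)$ can take. When $N_{c,i}(t)=0$, the $\max\{1,\cdot\}$ convention and bounded means keep the claim trivial, provided $\sigma\sqrt{2\log(2KCT/\delta)}\ge1$; otherwise the $n=0$ case is simply excluded.

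The main obstacle is this union bound. $N_{c,i}(t)$ can be as large as $\sum_{s\le t}M_s^{(c)}$, which may exceed $T$. A union over all sample sizes $n$ would therefore cost a $\log$ of the total sample count, not of $T$. To stay with exactly $KCT$ events, I will union bound over rounds $t\le T$ rather than sample sizes: for each $(c,i,t)$ there is one event, namely that the estimate at time $t$ deviates.

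To justify a per-round bound with a random count, I will apply the self-normalized/martingale version of Hoeffding. Conditional on the history up to the selections at round $t$, the count is determined and the noise terms remain sub-Gaussian. So the conditional probability that the deviation exceeds $\sigma\sqrt{2\log(2KCT/\delta)/N_{c,i}(t)}$ is at most $\delta/(KCT)$. If this conditioning argument turns out to be too loose, the fallback is a peeling argument. That costs only an extra $\log\log$ factor, which would be absorbed by adjusting the constant inside the logarithm.

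A union bound over the $C$ clusters, $K$ arms and $T$ rounds then gives the first display with probability at least $1-\delta$.

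For the consequence, take an arriving agent $m\in\mathcal{A}_t$ from an observed cluster. Its entry estimate is $\hat\mu_{m,i}(t-1)=\hat\theta_{c(m)}^i(t-1)$ and its true mean is $\mu_m^i=\theta_{c(m)}^i$. So $P_m(t)=\max_i|\hat\theta_{c(m)}^i(t-1)-\theta_{c(m)}^i|$. Applying the first bound at round $t-1$ and taking the maximum over $i$ gives the stated inequality on the same event.
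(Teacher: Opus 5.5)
Your skeleton is exactly the paper's proof: a fixed-$(c,i,t)$ sub-Gaussian tail bound at level $\delta/(KCT)$, a union bound over the $KCT$ triples, and then $P_m(t)=\max_i|\hat\theta^i_{c(m)}(t-1)-\theta^i_{c(m)}|$ from the inheritance rule. The paper simply declares $\hat\theta_c^i(t)$ to be the mean of $N_{c,i}(t)$ independent $\sigma$-sub-Gaussian samples, i.e.\ it treats the count as fixed. It therefore addresses neither the adaptivity you raise nor the $N_{c,i}(t)=0$ case. Your caveat about $N_{c,i}(t)=0$ is right: the display then needs $\sigma\sqrt{2\log(2KCT/\delta)}\ge|\theta_c^i|$.

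So your concern is legitimate, but the step you add to handle it fails, in either reading of ``the history up to the selections at round $t$''.
\begin{itemize}
\item If that history includes the rewards of rounds $1,\dots,t-1$, then every noise term in $\hat\theta_c^i(t)-\theta_c^i$ except those of round $t$ is fixed by the conditioning. When arm $i$ is not pulled at round $t$, the conditional deviation probability is exactly $0$ or $1$. Averaging back over histories just returns the unconditional probability you set out to bound.
\item If the history includes only the selection sequence (or the event $\{N_{c,i}(t)=n\}$), the rewards are in general no longer conditionally independent with mean $\theta_c^i$. The UCB choices that produced the count are functions of those same rewards. You are evaluating $\bar X_{c,i,n}$ at the data-dependent index $n=N_{c,i}(t)$, and the fixed-$n$ tail bound does not transfer.
\end{itemize}

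The fallbacks do not recover the displayed radius either, contrary to your claim that the extra term can be absorbed into a constant.
\begin{itemize}
\item The scalar self-normalized (method-of-mixtures) inequality is uniform in $t$. With $N=N_{c,i}(t)$, it gives radius $\sigma\sqrt{2(1+1/N)\bigl(\log(KC/\delta)+\tfrac12\log(1+N)\bigr)/N}$.
\item Peeling adds a $\log\log$-type term.
\end{itemize}
Both change the form of the bound, not just a constant. What you can prove rigorously is an order-equivalent lemma, in one of two ways.
\begin{itemize}
\item Condition on the exogenous population path and return to the option you discarded: union-bound the stack averages over all $n\le n_{\max}:=\sum_{s\le T}M_s$. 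Since $\hat\theta_c^i(t)=\theta_c^i+\bar X_{c,i,N_{c,i}(t)}$, this covers every $t$ at once, with radius $\sigma\sqrt{2\log(2KC\,n_{\max}/\delta)/N_{c,i}(t)}$ for $N_{c,i}(t)\ge 1$. That is $O\bigl(\sqrt{\log(KCT/\delta)/N_{c,i}(t)}\bigr)$ whenever $n_{\max}$ is polynomial in $T$, which is all the downstream $\widetilde O(\cdot)$ results use.
\item State the paper's implicit assumption explicitly: selections independent of the rewards, as in round-robin burn-in. Under it, conditioning on the selections is legitimate and the fixed-sample argument goes through verbatim.
\end{itemize}
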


\begin{proof}
Fix a cluster \(c\), arm \(i\), and round \(t\). Since
\(\hat\theta_c^i(t)\) is the empirical mean of \(N_{c,i}(t)\) independent
\(\sigma\)-sub-Gaussian observations with mean \(\theta_c^i\), standard
sub-Gaussian concentration gives
\[
    \mathbb{P}
    \left(
    \left|
    \hat\theta_c^i(t)-\theta_c^i
    \right|
    >
    \sigma
    \sqrt{
    \frac{2\log(2KCT/\delta)}
    {\max\{1,N_{c,i}(t)\}}
    }
    \right)
    \le
    \frac{\delta}{KCT}.
\]
Taking a union bound over all \(c\in[C]\), \(i\in[K]\), and \(t\le T\) gives the
first claim.

For a newly arriving agent \(m\) from an already observed cluster \(c(m)\), the
algorithm initializes
\[
    \hat\mu_{m,i}(t-1)=\hat\theta_{c(m)}^i(t-1).
\]
Therefore,
\[
\begin{aligned}
    P_m(t)
    &=
    \max_{i\in[K]}
    \left|
    \hat\mu_{m,i}(t-1)-\mu_m^i
    \right| \\
    &=
    \max_{i\in[K]}
    \left|
    \hat\theta_{c(m)}^i(t-1)-\theta_{c(m)}^i
    \right|,
\end{aligned}
\]
and the bound follows from the first part.
\end{proof}

\begin{proof}
Since every cluster is represented initially, every newly arriving agent belongs
to a cluster whose estimator is already available. Let
\(\mathcal{A}_t^{(c)}\) denote the set of agents arriving from cluster \(c\) at
round \(t\). By Lemma~\ref{lem:cluster-entry-error}, for every
\(m\in\mathcal{A}_t^{(c)}\),
\[
    P_m(t)
    \le
    \sigma
    \max_{i\in[K]}
    \sqrt{
    \frac{2\log(2KCT/\delta)}
    {N_{c,i}(t-1)}
    }.
\]
Using the effective cluster sample-size condition
\[
    N_{c,i}(t-1)\ge \kappa_c(t-1)^2,
\]
we obtain, for \(t\ge 2\),
\[
    P_m(t)
    \le
    \frac{C_c\sqrt{\log(2KCT/\delta)}}{t},
\]
where \(C_c\) depends on \(\sigma\) and \(\kappa_c\).

Therefore,
\[
\begin{aligned}
    \sum_{t=1}^T
    \sum_{m\in\mathcal{A}_t}
    P_m(t)
    &=
    \sum_{c=1}^C
    \sum_{t=1}^T
    \sum_{m\in\mathcal{A}_t^{(c)}}
    P_m(t) \\
    &\le
    \sum_{c=1}^C
    C_c\sqrt{\log(2KCT/\delta)}
    \sum_{t=1}^T
    \frac{|\mathcal{A}_t^{(c)}|}{t}.
\end{aligned}
\]
If \(\mathbb{E}[|\mathcal{A}_t^{(c)}|]=\lambda_A^{(c)}=O(1)\), or on the
corresponding high-probability population event, we have
\[
    \sum_{t=1}^T
    \frac{|\mathcal{A}_t^{(c)}|}{t}
    =
    O(\lambda_A^{(c)}\log T).
\]
Hence,
\[
    \sum_{t=1}^T
    \sum_{m\in\mathcal{A}_t}
    P_m(t)
    =
    \widetilde O(C\log T),
\]
assuming the cluster-specific arrival rates are \(O(1)\).

Applying the unified regret bound gives
\[
    R_T
    \le
    \widetilde O(R_{\mathrm{base}}(T)+C\log T).
\]
If \(R_{\mathrm{base}}(T)=O(M_0\log T)\), this yields
\[
    R_T
    \le
    \widetilde O((C+M_0)\log T).
\]
\end{proof}

\begin{theorem}[Clustered arrivals with initially unseen clusters]
\label{thm:cluster-unseen}
Assume \(\lambda_D=0\), rewards are bounded in \([0,1]\), cluster labels are
known, and every cluster has positive arrival rate
\[
    \lambda_A^{(c)}\ge \lambda_{\min}>0.
\]
For each cluster \(c\), define its first-arrival time
\[
    \tau_c
    :=
    \inf\{t\ge 1:\mathcal{A}_t^{(c)}\neq\emptyset\},
\]
and let
\[
    \tau_{\max}:=\max_{c\in[C]}\tau_c.
\]
Then, with probability at least \(1-\delta\),
\[
    \tau_{\max}
    \le
    \frac{1}{\lambda_{\min}}
    \log\frac{C}{\delta}.
\]

Assume that after cluster \(c\) first appears, its cluster sample sizes satisfy
\[
    N_{c,i}(t)\ge \kappa_c(t-\tau_c)^2,
    \qquad t>\tau_c,\ i\in[K].
\]
Then, on the cluster-confidence event,
\[
    \sum_{t=1}^T
    \sum_{m\in\mathcal{A}_t}
    P_m(t)
    =
    \widetilde O(C\tau_{\max}+C\log T).
\]
Consequently,
\[
    R_T
    \le
    \widetilde O\!\left(
        R_{\mathrm{base}}(T)
        +
        C\tau_{\max}
        +
        C\log T
    \right).
\]
In particular, taking \(\delta=T^{-1}\), if \(C=O(1)\),
\(\lambda_{\min}=\Omega(1)\), and \(R_{\mathrm{base}}(T)=O(M_0\log T)\), then
\[
    R_T
    \le
    \widetilde O\!\left((C+M_0)\log T\right).
\]
\end{theorem}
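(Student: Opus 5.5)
The plan has three parts: bound the discovery time $\tau_{\max}$, split the entry-error sum at each cluster's first-arrival time $\tau_c$, and then plug the result into the unified regret bound of Theorem~\ref{thm:pretrained_general}, mirroring the all-clusters-observed argument above.

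\textbf{Discovery time.} For each cluster $c$, arrivals are Poisson with rate $\lambda_A^{(c)}\ge\lambda_{\min}$ and independent over rounds, so
\[
\mathbb{P}(\tau_c>t)=\exp(-\lambda_A^{(c)}t)\le \exp(-\lambda_{\min}t).
\]
A union bound over the $C$ clusters gives $\mathbb{P}(\tau_{\max}>t)\le C\exp(-\lambda_{\min}t)$. Setting $t=\lambda_{\min}^{-1}\log(C/\delta)$ yields the first claim.

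\textbf{Entry-error sum.} Write the sum as $\sum_{c}\sum_{t}\sum_{m\in\mathcal{A}_t^{(c)}}P_m(t)$ and, for each $c$, split the rounds into three ranges.
\begin{itemize}
\item For $t\le\tau_c+1$, the cluster either has no estimator yet or has at most one round of samples. Such agents are zero-knowledge with $P_m\le 1$ under rewards in $[0,1]$. By definition of $\tau_c$, no agent of cluster $c$ arrives before $\tau_c$. The arrival batches at $\tau_c$ and $\tau_c+1$ contribute $O(|\mathcal{A}^{(c)}_{\tau_c}|+|\mathcal{A}^{(c)}_{\tau_c+1}|)$, which is $\widetilde O(1)$ in expectation or on a high-probability population event.
\item For $t>\tau_c+1$, apply Lemma~\ref{lem:cluster-entry-error} together with $N_{c,i}(t-1)\ge\kappa_c(t-1-\tau_c)^2$. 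This gives $P_m(t)\le C_c\sqrt{\log(2KCT/\delta)}/(t-1-\tau_c)$.
\item Summing over $t>\tau_c+1$ with $\mathbb{E}|\mathcal{A}_t^{(c)}|=\lambda_A^{(c)}=O(1)$ gives a harmonic sum, hence $\widetilde O(\log T)$ per cluster.
\end{itemize}
To match the stated form $\widetilde O(C\tau_{\max}+C\log T)$, I would instead take the crude bound: charge at most $O(\lambda_A^{(c)})$ per round for all rounds $t\le\tau_{\max}+1$, giving $O(\tau_{\max})$ per cluster. That bound is looser than needed but is exactly the stated expression. Summing over $c$ gives the claim.

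\textbf{Regret and specialization.} Plug this sum into Theorem~\ref{thm:pretrained_general} (equivalently, the unified bound $R_T\le O(R_{\mathrm{base}}+\sum\sum P_m)$). Then set $\delta=1/T$, so $\tau_{\max}=O(\log(CT)/\lambda_{\min})=\widetilde O(\log T)$ when $C=O(1)$ and $\lambda_{\min}=\Omega(1)$. The failure event of probability $\delta$ contributes at most $O(\delta\,T\,\mathbb{E}M_T)$; since $M_t$ grows linearly, this requires either stating the result on the event or taking $\delta=T^{-2}$, which only changes logarithmic factors.

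\textbf{Main obstacle.} The delicate step is using Lemma~\ref{lem:cluster-entry-error} at data-dependent sample counts and random times $\tau_c$. The lemma is stated with a union bound over deterministic $t$ and assumes i.i.d.\ empirical means. I would handle this with an anytime (peeling or self-normalized) concentration bound over all possible counts $n\le T\cdot M_T$. This costs only a logarithmic factor, absorbed into $\widetilde O$. The random arrival counts $|\mathcal{A}_t^{(c)}|$ are independent of the estimation errors given the history, so the expectation $\mathbb{E}\sum_t|\mathcal{A}_t^{(c)}|/(t-\tau_c)$ factorizes.
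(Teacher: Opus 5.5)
Your proposal is correct and follows essentially the same route as the paper's proof: a Poisson tail bound with a union bound over clusters for $\tau_{\max}$, then a per-cluster split of the entry-error sum at the first-arrival time, charging the earliest arrivals $P_m\le 1$ and bounding later ones via Lemma~\ref{lem:cluster-entry-error} and a harmonic sum, then substitution into the unified bound. Two of your side remarks are valid sharpenings of points the paper passes over: the discovery contribution is really only $O(C)$, since no cluster-$c$ agent arrives before $\tau_c$, so your reversion to the crude $O(C\tau_{\max})$ bound is unnecessary; and $\delta=T^{-1}$ controls $R_T$ only if the bound is stated on the good event.
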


\begin{proof}
First, we bound the time by which all clusters appear. For a fixed cluster
\(c\), since arrivals from cluster \(c\) follow a Poisson process with rate
\(\lambda_A^{(c)}\), the probability that no cluster-\(c\) agent arrives by time
\(s\) is
\[
    \mathbb{P}(\tau_c>s)
    =
    \exp(-\lambda_A^{(c)}s)
    \le
    \exp(-\lambda_{\min}s).
\]
By a union bound over clusters,
\[
    \mathbb{P}(\tau_{\max}>s)
    \le
    \sum_{c=1}^C
    \mathbb{P}(\tau_c>s)
    \le
    C\exp(-\lambda_{\min}s).
\]
Setting
\[
    s=\frac{1}{\lambda_{\min}}\log\frac{C}{\delta}
\]
gives
\[
    \mathbb{P}(\tau_{\max}>s)\le \delta.
\]
Thus,
\[
    \tau_{\max}
    \le
    \frac{1}{\lambda_{\min}}
    \log\frac{C}{\delta}
\]
with probability at least \(1-\delta\).

Next, decompose the entry-error contribution by clusters. Before cluster \(c\)
appears, there are no arrivals from that cluster. At time \(\tau_c\), the first
agents from cluster \(c\) cannot inherit a previously learned cluster estimate.
Since rewards are normalized in \([0,1]\), their entry errors are at most one.
More generally, during the initial cluster-discovery period, we can bound the
entry error of cluster-\(c\) arrivals by one. This contributes at most
\[
    O\!\left(
    \sum_{c=1}^C
    \sum_{t\le \tau_c}
    |\mathcal{A}_t^{(c)}|
    \right)
    =
    \widetilde O(C\tau_{\max})
\]
on the corresponding population event.

After cluster \(c\) has appeared, newly arriving agents from cluster \(c\)
inherit the cluster estimator. By Lemma~\ref{lem:cluster-entry-error},
\[
    P_m(t)
    \le
    \sigma
    \max_{i\in[K]}
    \sqrt{
    \frac{2\log(2KCT/\delta)}
    {N_{c,i}(t-1)}
    }.
\]
Using
\[
    N_{c,i}(t-1)\ge \kappa_c(t-1-\tau_c)^2,
\]
we obtain
\[
    P_m(t)
    \le
    \frac{C_c\sqrt{\log(2KCT/\delta)}}{t-\tau_c}.
\]
Therefore,
\[
\begin{aligned}
    \sum_{t=\tau_c+1}^T
    \sum_{m\in\mathcal{A}_t^{(c)}}P_m(t)
    &\le
    C_c\sqrt{\log(2KCT/\delta)}
    \sum_{t=\tau_c+1}^T
    \frac{|\mathcal{A}_t^{(c)}|}{t-\tau_c}.
\end{aligned}
\]
Taking expectation, or working on a standard population event,
\[
    \sum_{t=\tau_c+1}^T
    \frac{|\mathcal{A}_t^{(c)}|}{t-\tau_c}
    =
    O(\lambda_A^{(c)}\log T).
\]
Summing over clusters gives
\[
    \sum_{t=1}^T
    \sum_{m\in\mathcal{A}_t}
    P_m(t)
    =
    \widetilde O(C\tau_{\max}+C\log T).
\]

Applying the unified pre-training-error regret bound,
\[
    R_T
    \le
    O\!\left(
        R_{\mathrm{base}}(T)
        +
        \sum_{t=1}^T
        \sum_{m\in\mathcal{A}_t}P_m(t)
    \right),
\]
yields
\[
    R_T
    \le
    \widetilde O\!\left(
        R_{\mathrm{base}}(T)
        +
        C\tau_{\max}
        +
        C\log T
    \right).
\]
Substituting the high-probability bound on \(\tau_{\max}\) completes the proof.
\end{proof}

\subsubsection{Zero-knowledge Case}

\begin{Corollary}[Zero-knowledge arrivals as constant pre-training error]
\label{cor:zero-knowledge-arrivals}
Suppose rewards are normalized so that \(\mu_m^i\in[0,1]\) for all agents
\(m\) and arms \(i\). Assume \(\lambda_D=0\) and
\(\mathbb{E}[|\mathcal{A}_t|]=\lambda_A\) for all \(t\), where
\(\mathcal{A}_t=\mathcal{M}_t\setminus\mathcal{M}_{t-1}\) denotes the set of
newly arriving agents at round \(t\). Suppose each newly arriving agent enters
with zero initial information, namely,
\[
    \hat{\mu}_{m,i}(T_m^a-1)=0,
    \qquad \forall m\in\mathcal{A}_t,\ i\in[K].
\]
If the regret of the proposed algorithm satisfies the general entry-error bound
\[
    \mathbb{E}[R_T]
    \le
    O\!(
        M_0\log T
        +
        \mathbb{E}\!\left[
        \sum_{t=1}^T
        \sum_{m\in\mathcal{A}_t}
        P_m
        \right]
    ),
\]
where
\[
    P_m
    :=
    \max_{i\in[K]}
    \left|
    \hat{\mu}_{m,i}(T_m^a-1)-\mu_m^i
    \right|,
\]
then
\[
    \mathbb{E}[R_T]
    \le
    O\!(
        M_0\log T+\lambda_A T
    ).
\]
In particular, if \(M_0=O(1)\) and \(\lambda_A=O(1)\), then
\[
    \mathbb{E}[R_T]=O(T).
\]
\end{Corollary}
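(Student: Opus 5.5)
The plan is to feed the zero-knowledge initialization into the assumed general entry-error bound, so all the work is in bounding the entry-error sum. First I would bound each individual pre-training error. For every arriving agent $m\in\mathcal{A}_t$ the entry estimate is $\hat{\mu}_{m,i}(T_m^a-1)=0$, so
\[
    P_m=\max_{i\in[K]}\bigl|0-\mu_m^i\bigr|=\max_{i\in[K]}\mu_m^i\le 1,
\]
using the normalization $\mu_m^i\in[0,1]$. This holds deterministically, for every realization of the population path and for every agent, with no assumption on how the rewards are heterogeneous.

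Next I would sum over arrivals. Pathwise,
\[
    \sum_{t=1}^T\sum_{m\in\mathcal{A}_t}P_m\le\sum_{t=1}^T|\mathcal{A}_t|.
\]
Taking expectations and using linearity together with the hypothesis $\mathbb{E}[|\mathcal{A}_t|]=\lambda_A$ for each $t$ gives
\[
    \mathbb{E}\Bigl[\sum_{t=1}^T\sum_{m\in\mathcal{A}_t}P_m\Bigr]\le\lambda_A T.
\]
Here $|\mathcal{A}_t|$ is nonnegative and the $P_m$ are bounded, so the interchange of sum and expectation needs no further justification. The assumption $\lambda_D=0$ guarantees that $\mathcal{A}_t$ consists exactly of genuinely new agents, so nobody is double counted. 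Under Poisson arrivals the count $|\mathcal{A}_t|$ is exactly the arrival variable $A_t$.

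Finally I would substitute this into the assumed entry-error regret bound to obtain $\mathbb{E}[R_T]\le O(M_0\log T+\lambda_A T)$. When $M_0=O(1)$ and $\lambda_A=O(1)$, the linear term dominates $\log T$, which gives $O(T)$.

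There is no real obstacle. The only point to check is that $P_m$ in the corollary is the realized entry error, which can be random through the population path, and that the hypothesized bound is stated in expectation; the pathwise bound $P_m\le 1$ handles this uniformly. It is also worth remarking that Corollary~\ref{lower_bound_zero} shows this linear rate cannot be improved in general.
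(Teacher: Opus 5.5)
Your proposal is correct and matches the paper's proof step for step. The zero initialization and normalization give $P_m=\max_i\mu_m^i\le 1$; summing gives at most $\sum_t|\mathcal{A}_t|$, whose expectation is $\lambda_A T$ by linearity; substituting into the assumed entry-error bound finishes the argument. Your remark that $\lambda_D=0$ prevents double counting is not actually needed, since the bound $P_m\le 1$ holds term by term for every summand regardless.
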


\begin{proof}
For any newly arriving agent \(m\in\mathcal{A}_t\), the zero-knowledge
initialization gives
\[
    \hat{\mu}_{m,i}(T_m^a-1)=0,
    \qquad \forall i\in[K].
\]
Therefore,
\[
\begin{aligned}
    P_m
    &=
    \max_{i\in[K]}
    \left|
    \hat{\mu}_{m,i}(T_m^a-1)-\mu_m^i
    \right| \\
    &=
    \max_{i\in[K]}
    |\mu_m^i|.
\end{aligned}
\]
Since rewards are normalized so that \(\mu_m^i\in[0,1]\), we have
\[
    P_m\le 1.
\]
Hence,
\[
    \sum_{t=1}^T
    \sum_{m\in\mathcal{A}_t}
    P_m
    \le
    \sum_{t=1}^T
    |\mathcal{A}_t|.
\]
Taking expectations and using \(\mathbb{E}[|\mathcal{A}_t|]=\lambda_A\), we obtain
\[
\begin{aligned}
    \mathbb{E}\!\left[
    \sum_{t=1}^T
    \sum_{m\in\mathcal{A}_t}
    P_m
    \right]
    &\le
    \mathbb{E}\!\left[
    \sum_{t=1}^T
    |\mathcal{A}_t|
    \right] \\
    &=
    \sum_{t=1}^T
    \mathbb{E}[|\mathcal{A}_t|] \\
    &=
    \lambda_A T.
\end{aligned}
\]
Substituting this into the general entry-error regret bound yields
\[
    \mathbb{E}[R_T]
    \le
    O\!(
        M_0\log T+\lambda_A T
    ).
\]
If \(M_0=O(1)\) and \(\lambda_A=O(1)\), then the bound reduces to
\[
    \mathbb{E}[R_T]=O(T).
\]
\end{proof}

\subsection{Case Studies - $\bar R_T$}

\subsubsection{Linear Stochastic Case}

\begin{Corollary}[Stable-arm regret: linear stochastic case]
\label{cor:stable-regret-linear}
Consider the linear stochastic model \(\mu_m^i=x_m^\top\theta_i\), with
parameter dimension \(d\). Suppose the linear estimator satisfies the
identification bound
\[
\mathbb{P}\left(
|\widehat{\bar V}_\tau(i)-\bar V_\tau(i)|>\varepsilon
\right)
\le
2\exp\left(
-c_{\mathrm{lin}}
\frac{N_i^{\mathrm{lin}}(\tau)\varepsilon^2}{d}
\right),
\]
and let \(N_{\min}^{\mathrm{lin}}(\tau):=\min_{i\in[K]}N_i^{\mathrm{lin}}(\tau)\).
Then
\[
\bar R_T
\le
\tau+\delta_G^{\mathrm{lin}}(\tau,\Delta)(T-\tau),
\]
where
\[
\delta_G^{\mathrm{lin}}(\tau,\Delta)
=
\delta_{\mathrm{stab}}(\tau,\Delta)
+
2K\exp\left(
-c_{\mathrm{lin}}
\frac{N_{\min}^{\mathrm{lin}}(\tau)\Delta^2}{16d}
\right).
\]
If \(N_{\min}^{\mathrm{lin}}(\tau)\ge c_\gamma \tau^\gamma\) for
\(\gamma\in\{1,2\}\), then it suffices to take
\[
\tau_{\mathrm{lin}}
=
\max\left\{
O\left(
\frac{1}{C_-(\Delta_0-\Delta)^2}\log\frac{KT}{\eta}
\right),
O\left(
\left(
\frac{d}{c_\gamma\Delta^2}\log\frac{K}{\eta}
\right)^{1/\gamma}
\right)
\right\}
\]
to ensure \(\delta_G^{\mathrm{lin}}(\tau_{\mathrm{lin}},\Delta)\le \eta\).
Consequently,
\[
\bar R_T
\le
\tau_{\mathrm{lin}}+\eta(T-\tau_{\mathrm{lin}}).
\]
In particular, if \(\eta\le \tau_{\mathrm{lin}}/T\), then
\(\bar R_T=O(\tau_{\mathrm{lin}})\).
\end{Corollary}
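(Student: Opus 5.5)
The corollary follows by combining Corollary~\ref{cor:stable-arm-unconditional-general} with the linear instantiation of the good-event probability, Corollary~\ref{cor:good-event-linear}. The only remaining work is to choose \(\tau\) so that both failure terms are small. First I will invoke Corollary~\ref{cor:good-event-linear}, writing its stability term as \(\delta_{\mathrm{stab}}(\tau,\Delta):=2K(T-\tau+1)\exp(-M_\tau(\Delta_0-\Delta)^2/2)\). This gives \(\mathbb{P}(\mathcal{G}_{\tau,\Delta}^c)\le\delta_G^{\mathrm{lin}}(\tau,\Delta)\). Corollary~\ref{cor:stable-arm-unconditional-general} uses \(\delta\) only as an upper bound on \(\mathbb{P}(\mathcal{G}^c_{\tau,\Delta})\). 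Applying it with \(\delta=\delta_G^{\mathrm{lin}}(\tau,\Delta)\) therefore yields \(\bar R_T\le\tau+\delta_G^{\mathrm{lin}}(\tau,\Delta)(T-\tau)\), which is the first display. The mechanism behind that corollary is simple. On \(\mathcal{G}\), Theorem~\ref{thm:stable-global-arm} gives zero post-burn-in regret and at most one unit of normalized regret per burn-in round. On \(\mathcal{G}^c\), the regret is at most \(T-\tau\).

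Next I will choose \(\tau\) so that each of the two failure terms is at most \(\eta/2\).

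For the stability term, I use the linear-growth assumption \(M_\tau\ge C_-\tau\), as in Corollary~\ref{cor:good-event-zero}. It then suffices that \(C_-\tau(\Delta_0-\Delta)^2/2\ge\log(4KT/\eta)\). This holds once \(\tau\gtrsim \frac{1}{C_-(\Delta_0-\Delta)^2}\log\frac{KT}{\eta}\), using \(T-\tau+1\le T\).

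For the identification term, I substitute \(N^{\mathrm{lin}}_{\min}(\tau)\ge c_\gamma\tau^\gamma\). It then suffices that \(c_{\mathrm{lin}}c_\gamma\tau^\gamma\Delta^2/(16d)\ge\log(4K/\eta)\), which is equivalent to \(\tau\gtrsim\big(\frac{d}{c_\gamma\Delta^2}\log\frac{K}{\eta}\big)^{1/\gamma}\). The constants \(c_{\mathrm{lin}}\) and \(16\) are absorbed into the \(O(\cdot)\).

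Both failure terms are nonincreasing in \(\tau\), because \(M_\tau\) and \(N^{\mathrm{lin}}_{\min}\) are nondecreasing when \(\lambda_D=0\). Hence taking \(\tau_{\mathrm{lin}}\) to be the maximum of the two thresholds gives \(\delta_G^{\mathrm{lin}}(\tau_{\mathrm{lin}},\Delta)\le\eta\). Plugging this into the first display gives \(\bar R_T\le\tau_{\mathrm{lin}}+\eta(T-\tau_{\mathrm{lin}})\). If \(\eta\le\tau_{\mathrm{lin}}/T\), the second term is at most \(\tau_{\mathrm{lin}}\), so \(\bar R_T=O(\tau_{\mathrm{lin}})\).

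The main obstacle is bookkeeping rather than depth. The stability bound needs \(M_\tau\) to be large, so the linear growth \(M_s\ge C_-s\) must be either assumed or supplied by a high-probability population event, such as the Poisson bound of Corollary~\ref{cor:tau-poisson}. In the latter case, its failure probability must be added to \(\eta\). I will state this explicitly by working on the population event and absorbing its \(\eta\)-level failure into the constant.
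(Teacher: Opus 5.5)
Your proposal is correct and follows the paper's own argument. You apply Corollary~\ref{cor:stable-arm-unconditional-general} with $\delta=\delta_G^{\mathrm{lin}}(\tau,\Delta)$ taken from the linear good-event bound, which substitutes $\varepsilon=\Delta/4$ and takes a union bound over arms, and then you choose $\tau$ so that the stability and identification failure terms are each at most of order $\eta$. Your explicit $\eta/2$ split, and your remark that $M_\tau\ge C_-\tau$ must either be assumed or supplied by a population event whose failure probability is added to $\eta$, spell out bookkeeping that the paper's one-paragraph proof leaves implicit.
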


\subsubsection{Nonlinear Parametric Case}

\begin{Corollary}[Stable-arm regret: nonlinear parametric case]
\label{cor:stable-regret-nonlinear}
Consider the nonlinear parametric model \(\mu_m^i=f_i(x_m,\theta_i)\), where
\(f_i\) is \(L_i\)-Lipschitz in \(\theta_i\), and let
\(L_{\max}:=\max_{i\in[K]}L_i\). Suppose the nonlinear estimator satisfies
\[
\mathbb{P}\left(
|\widehat{\bar V}_\tau(i)-\bar V_\tau(i)|>\varepsilon
\right)
\le
2\exp\left(
-c_{\mathrm{nl}}
\frac{N_i^{\mathrm{nl}}(\tau)\varepsilon^2}{L_{\max}^2p}
\right),
\]
where \(p\) is the parameter dimension, and let
\(N_{\min}^{\mathrm{nl}}(\tau):=\min_{i\in[K]}N_i^{\mathrm{nl}}(\tau)\). Then
\[
\bar R_T
\le
\tau+\delta_G^{\mathrm{nl}}(\tau,\Delta)(T-\tau),
\]
where
\[
\delta_G^{\mathrm{nl}}(\tau,\Delta)
=
\delta_{\mathrm{stab}}(\tau,\Delta)
+
2K\exp\left(
-c_{\mathrm{nl}}
\frac{N_{\min}^{\mathrm{nl}}(\tau)\Delta^2}{16L_{\max}^2p}
\right).
\]
If \(N_{\min}^{\mathrm{nl}}(\tau)\ge c_\gamma \tau^\gamma\), then it suffices to
take
\[
\tau_{\mathrm{nl}}
=
\max\left\{
O\left(
\frac{1}{C_-(\Delta_0-\Delta)^2}\log\frac{KT}{\eta}
\right),
O\left(
\left(
\frac{L_{\max}^2p}{c_\gamma\Delta^2}
\log\frac{K}{\eta}
\right)^{1/\gamma}
\right)
\right\}
\]
to ensure \(\delta_G^{\mathrm{nl}}(\tau_{\mathrm{nl}},\Delta)\le \eta\).
Consequently,
\[
\bar R_T
\le
\tau_{\mathrm{nl}}+\eta(T-\tau_{\mathrm{nl}}),
\]
and if \(\eta\le \tau_{\mathrm{nl}}/T\), then
\(\bar R_T=O(\tau_{\mathrm{nl}})\).
\end{Corollary}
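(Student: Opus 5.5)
The plan is to follow the template of Corollary~\ref{cor:stable-regret-linear} and Corollary~\ref{cor:stable-arm-unconditional-general}, with the nonlinear concentration inequality replacing the linear one. There are four steps: control the failure probability of the good event $\mathcal{G}_{\tau,\Delta}$, decompose the normalized regret on that event and its complement, choose $\tau$ so the failure probability is at most $\eta$, and substitute.

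\textbf{Step 1 (failure probability).} I would invoke Corollary~\ref{cor:good-event-nonlinear}, or equivalently rerun the proof of Lemma~\ref{lem:prob-good-event}. The union bound $\mathbb{P}(\mathcal{G}_{\tau,\Delta}^c)\le \mathbb{P}(\mathcal{S}(\tau,\Delta)^c)+\mathbb{P}(\mathcal{I}(\tau,\Delta)^c)$ splits it into two terms.
\begin{itemize}
\item The stability term is model-independent. It equals $\delta_{\mathrm{stab}}(\tau,\Delta)=2K(T-\tau+1)\exp(-M_\tau(\Delta_0-\Delta)^2/2)$, by Hoeffding's inequality on the population averages and a union bound over rounds $t\ge\tau$, using $M_t\ge M_\tau$ since $\lambda_D=0$.
\item For the identification term, I would plug $\varepsilon=\Delta/4$ into the assumed nonlinear concentration inequality. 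This gives $2\exp(-c_{\mathrm{nl}}N_i^{\mathrm{nl}}(\tau)\Delta^2/(16L_{\max}^2p))$ per arm. Bounding $N_i^{\mathrm{nl}}(\tau)\ge N^{\mathrm{nl}}_{\min}(\tau)$ and taking a union bound over $K$ arms yields the second summand of $\delta_G^{\mathrm{nl}}$.
\end{itemize}

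\textbf{Step 2 (regret decomposition).} On $\mathcal{G}_{\tau,\Delta}$, Theorem~\ref{thm:stable-global-arm} gives $\widehat i_\tau=i^\dagger$ and zero post-burn-in regret. Since the normalized per-round regret $\bar V_t(i_t^\star)-\bar V_t(a_t)\in[0,1]$, the burn-in contributes at most $\tau$. On the complement, the post-burn-in rounds contribute at most $T-\tau$. Taking expectations, exactly as in Corollary~\ref{cor:stable-arm-unconditional-general}, gives $\bar R_T\le \tau+\delta_G^{\mathrm{nl}}(\tau,\Delta)(T-\tau)$.

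\textbf{Step 3 (choice of $\tau$).} I would make each summand of $\delta_G^{\mathrm{nl}}$ at most $\eta/2$.
\begin{itemize}
\item Stability term: under linear population growth $M_\tau\ge C_-\tau$, as assumed in Corollary~\ref{cor:good-event-zero}, it suffices that $C_-\tau(\Delta_0-\Delta)^2/2\ge\log(4K(T+1)/\eta)$. This gives the first term $O\!\left(\frac{1}{C_-(\Delta_0-\Delta)^2}\log\frac{KT}{\eta}\right)$.
\item Identification term: using $N^{\mathrm{nl}}_{\min}(\tau)\ge c_\gamma\tau^\gamma$, it suffices that $c_{\mathrm{nl}}c_\gamma\tau^\gamma\Delta^2/(16L_{\max}^2p)\ge\log(4K/\eta)$. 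Solving gives $\tau\gtrsim\big(\frac{L_{\max}^2p}{c_\gamma\Delta^2}\log\frac K\eta\big)^{1/\gamma}$, with $c_{\mathrm{nl}}$ absorbed into the $O(\cdot)$.
\end{itemize}
Taking $\tau_{\mathrm{nl}}$ as the maximum of the two makes both conditions hold, because both bounds are monotone in $\tau$: $M_\tau$ is nondecreasing when $\lambda_D=0$, and $\tau^\gamma$ is increasing. Hence $\delta_G^{\mathrm{nl}}(\tau_{\mathrm{nl}},\Delta)\le\eta$.

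\textbf{Step 4 (conclusion).} Substituting into Step 2 gives $\bar R_T\le\tau_{\mathrm{nl}}+\eta(T-\tau_{\mathrm{nl}})$. If $\eta\le\tau_{\mathrm{nl}}/T$, the second term is at most $\tau_{\mathrm{nl}}$, so $\bar R_T=O(\tau_{\mathrm{nl}})$.

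\textbf{Main obstacle.} I expect Step 3 to be the only delicate point. The factor $T-\tau+1$ in the stability term depends on $\tau$, but bounding it by $T+1$ removes this dependence. The growth assumption $M_\tau\ge C_-\tau$ is implicit in the statement via $C_-$; it must be imported from Corollary~\ref{cor:good-event-zero}, or from Corollary~\ref{cor:tau-poisson} on its high-probability event, whose failure probability can be folded into $\eta$.
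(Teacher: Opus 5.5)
Your proposal is correct and follows essentially the same route as the paper. The paper proves this corollary by invoking Corollary~\ref{cor:stable-arm-unconditional-general}, reusing the model-independent stability term from Lemma~\ref{lem:prob-good-event}, substituting $\varepsilon=\Delta/4$ into the nonlinear concentration inequality with a union bound over arms, and choosing $\tau$ so that both failure terms are of order $\eta$. Your explicit use of the growth condition $M_\tau\ge C_-\tau$ (which the paper leaves implicit through the constant $C_-$) and of the bound $T-\tau+1\le T+1$ just spells out what the paper's short argument assumes.
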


\subsubsection{Clustered Stochastic Case}

\begin{Corollary}[Stable-arm regret: clustered stochastic case]
\label{cor:stable-regret-cluster}
Consider the clustered stochastic model with \(C\) clusters and
\(\mu_m^i=\theta_{c(m)}^i\). Suppose cluster labels are known, and let
\(N_{c,i}(\tau)\) be the effective number of observations of arm \(i\) from
cluster \(c\). Define
\(N_{\min}^{\mathrm{clust}}(\tau):=\min_{c\in[C],i\in[K]}N_{c,i}(\tau)\). If
some clusters are initially unseen, let \(\delta_{\mathrm{disc}}(\tau)\) denote
the probability that at least one cluster has not appeared by time \(\tau\).
Then
\[
\bar R_T
\le
\tau+\delta_G^{\mathrm{clust}}(\tau,\Delta)(T-\tau),
\]
where
\[
\delta_G^{\mathrm{clust}}(\tau,\Delta)
=
\delta_{\mathrm{stab}}(\tau,\Delta)
+
\delta_{\mathrm{disc}}(\tau)
+
2KC\exp\left(
-\frac{N_{\min}^{\mathrm{clust}}(\tau)\Delta^2}{8}
\right).
\]
If each cluster arrives with rate at least \(\lambda_{\min}>0\), then
\(\delta_{\mathrm{disc}}(\tau)\le C\exp(-\lambda_{\min}\tau)\). If additionally
\(N_{\min}^{\mathrm{clust}}(\tau)\ge \kappa \tau^\gamma\), then it suffices to
take
\[
\tau_{\mathrm{clust}}
=
\max\left\{
O\left(
\frac{1}{C_-(\Delta_0-\Delta)^2}\log\frac{KT}{\eta}
\right),
O\left(
\frac{1}{\lambda_{\min}}\log\frac{C}{\eta}
\right),
O\left(
\left(
\frac{1}{\kappa\Delta^2}\log\frac{KC}{\eta}
\right)^{1/\gamma}
\right)
\right\}
\]
to ensure \(\delta_G^{\mathrm{clust}}(\tau_{\mathrm{clust}},\Delta)\le \eta\).
Consequently,
\[
\bar R_T
\le
\tau_{\mathrm{clust}}+\eta(T-\tau_{\mathrm{clust}}),
\]
and if \(\eta\le \tau_{\mathrm{clust}}/T\), then
\(\bar R_T=O(\tau_{\mathrm{clust}})\).
If all clusters are represented initially, the discovery term can be removed.
\end{Corollary}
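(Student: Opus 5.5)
The plan is to reduce the statement to the general stable-arm bound of Corollary~\ref{cor:stable-arm-unconditional-general}, with the failure probability controlled by the clustered instantiation Corollary~\ref{cor:good-event-cluster}, augmented by the cluster-discovery failure term.

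\textbf{Step 1: the conditional decomposition.} On the good event $\mathcal{G}_{\tau,\Delta}$, intersected with the event that every cluster has appeared by $\tau$, Theorem~\ref{thm:stable-global-arm} gives zero post-burn-in regret. The normalized per-round regret $\bar V_t(i_t^\star)-\bar V_t(a_t)$ is at most one, so burn-in contributes at most $\tau$. On the complement, the post-burn-in normalized regret is at most $T-\tau$. Taking expectations yields
\[
\bar R_T\le \tau+\mathbb{P}(\text{failure})\,(T-\tau).
\]

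\textbf{Step 2: bounding the failure probability.} I will bound $\mathbb{P}(\text{failure})$ by a union bound over three pieces.
\begin{enumerate}
\item The stability failure $\delta_{\mathrm{stab}}(\tau,\Delta)$ is controlled exactly as in Lemma~\ref{lem:prob-good-event}.
\item The identification failure is obtained from Hoeffding over the $KC$ cluster-arm empirical means at accuracy $\Delta/4$, as in Corollary~\ref{cor:good-event-cluster}. This gives the term $2KC\exp(-N_{\min}^{\mathrm{clust}}(\tau)\Delta^2/8)$. Here identification of every $\theta_c^i$ to accuracy $\Delta/4$ must transfer to $\widehat{\bar V}_\tau(i)$, since the latter is a convex combination $\sum_c (M_\tau^{(c)}/M_\tau)\hat\theta_c^i$.
\item The discovery failure $\delta_{\mathrm{disc}}(\tau)$ is handled by the Poisson computation in Theorem~\ref{thm:cluster-unseen}: $\mathbb{P}(\tau_c>\tau)\le e^{-\lambda_{\min}\tau}$, so a union over clusters gives $C e^{-\lambda_{\min}\tau}$.
\end{enumerate}
This establishes the displayed form of $\delta_G^{\mathrm{clust}}$.

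\textbf{Step 3: choosing $\tau$.} I will make each of the three terms at most $\eta/3$.
\begin{enumerate}
\item For stability, use $M_\tau\ge C_-\tau$ with $\delta_{\mathrm{stab}}\le 2K(T-\tau+1)\exp(-M_\tau(\Delta_0-\Delta)^2/2)$. This requires $\tau\gtrsim \log(KT/\eta)/(C_-(\Delta_0-\Delta)^2)$.
\item For discovery, require $\tau\ge \lambda_{\min}^{-1}\log(3C/\eta)$.
\item For identification, use $N_{\min}^{\mathrm{clust}}(\tau)\ge\kappa\tau^\gamma$ and solve $\kappa\tau^\gamma\Delta^2/8\ge\log(6KC/\eta)$. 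This gives the $(\log(KC/\eta)/(\kappa\Delta^2))^{1/\gamma}$ term.
\end{enumerate}
Taking $\tau_{\mathrm{clust}}$ to be the maximum of the three requirements gives $\delta_G^{\mathrm{clust}}\le\eta$, since each requirement is monotone in $\tau$. If $\eta\le\tau_{\mathrm{clust}}/T$, then $\eta(T-\tau_{\mathrm{clust}})\le\tau_{\mathrm{clust}}$. If all clusters are present at $t=0$, then $\delta_{\mathrm{disc}}=0$.

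\textbf{Main obstacle.} The delicate step is the identification term. The sample counts $N_{c,i}(\tau)$ are random, and the cluster proportions $M_\tau^{(c)}/M_\tau$ are random too, so Hoeffding for an empirical mean with a data-dependent sample size is not directly valid. I would first try to handle this by conditioning on the population path and on the round-robin schedule, under which the counts are deterministic. If that fails, I would instead apply a union bound over possible count values, which costs only an extra $\log T$ inside the logarithm and is absorbed into the $O(\cdot)$.
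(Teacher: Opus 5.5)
Your proposal is correct and follows the paper's route. Like the paper, you reduce to Corollary~\ref{cor:stable-arm-unconditional-general}, union-bound the stability failure, the cluster-level Hoeffding identification failure over the $KC$ cluster-arm means, and the Poisson discovery failure, and then choose $\tau$ so that each term is $O(\eta)$. You also fill in two points the paper leaves implicit: that accuracy of every $\hat\theta_c^i$ carries over to the convex combination $\widehat{\bar V}_\tau(i)$, and that conditioning on the population path under round-robin burn-in makes the counts $N_{c,i}(\tau)$ deterministic so Hoeffding applies.
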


\subsubsection{Zero-Knowledge Case}

\begin{Corollary}[Stable-arm regret: zero-knowledge case]
\label{cor:stable-regret-zero}
Consider the zero-knowledge case, where arriving agents have no reliable entry
information. Suppose the algorithm uses burn-in exploration to estimate the
global values directly. Let \(N_i^{\mathrm{zero}}(\tau)\) be the effective
number of samples of arm \(i\) collected by time \(\tau\), and define
\(N_{\min}^{\mathrm{zero}}(\tau):=\min_{i\in[K]}N_i^{\mathrm{zero}}(\tau)\).
Then
\[
\bar R_T
\le
\tau+\delta_G^{\mathrm{zero}}(\tau,\Delta)(T-\tau),
\]
where
\[
\delta_G^{\mathrm{zero}}(\tau,\Delta)
=
\delta_{\mathrm{stab}}(\tau,\Delta)
+
2K\exp\left(
-\frac{N_{\min}^{\mathrm{zero}}(\tau)\Delta^2}{8}
\right).
\]
If burn-in uses round-robin exploration, then
\(N_{\min}^{\mathrm{zero}}(\tau)\ge K^{-1}\sum_{s=1}^{\tau}M_s-1\). Under
linear population growth \(M_s\ge C_-s\), it suffices to take
\[
\tau_{\mathrm{zero}}
=
\max\left\{
O\left(
\frac{1}{C_-(\Delta_0-\Delta)^2}\log\frac{KT}{\eta}
\right),
O\left(
\sqrt{
\frac{K}{C_-\Delta^2}\log\frac{K}{\eta}
}
\right)
\right\}
\]
to ensure \(\delta_G^{\mathrm{zero}}(\tau_{\mathrm{zero}},\Delta)\le \eta\).
Consequently,
\[
\bar R_T
\le
\tau_{\mathrm{zero}}+\eta(T-\tau_{\mathrm{zero}}),
\]
and if \(\eta\le \tau_{\mathrm{zero}}/T\), then
\(\bar R_T=O(\tau_{\mathrm{zero}})\).
\end{Corollary}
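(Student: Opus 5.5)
This corollary is a direct instantiation of Corollary~\ref{cor:stable-arm-unconditional-general}, with the failure probability supplied by the zero-knowledge good-event bound of Corollary~\ref{cor:good-event-zero}. The plan has three steps.

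\textbf{Step 1: decomposition.} Split $\bar R_T$ according to whether the good event $\mathcal{G}_{\tau,\Delta}=\mathcal{S}(\tau,\Delta)\cap\mathcal{I}(\tau,\Delta)$ holds.

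On $\mathcal{G}_{\tau,\Delta}$, Theorem~\ref{thm:stable-global-arm} applies: the committed arm $\widehat i_\tau$ equals $i^\dagger$, so no regret accrues after $\tau$. Since normalized per-round regret is at most one, this part costs at most $\tau$.

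On the complement, each post-burn-in round costs at most one. Hence
\[
\bar R_T\le \tau+\mathbb{P}(\mathcal{G}_{\tau,\Delta}^c)(T-\tau).
\]

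\textbf{Step 2: bounding the failure probability.} I bound $\mathbb{P}(\mathcal{G}_{\tau,\Delta}^c)$ by Lemma~\ref{lem:prob-good-event}, as specialized in Corollary~\ref{cor:good-event-zero}. Here $\delta_{\mathrm{stab}}(\tau,\Delta)=2K(T-\tau+1)\exp(-M_\tau(\Delta_0-\Delta)^2/2)$. Because there is no transfer, identification uses only direct burn-in samples, so $N_{\min}(\tau)=N^{\mathrm{zero}}_{\min}(\tau)$. This gives exactly $\delta_G^{\mathrm{zero}}$.

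The round-robin count is taken from Corollary~\ref{cor:tau-id-round-robin}: each arm is pulled once every $K$ rounds by all $M_s$ active agents, so
\[
N_{\min}^{\mathrm{zero}}(\tau)\ge K^{-1}\sum_{s\le\tau}M_s-1.
\]

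\textbf{Step 3: choosing $\tau_{\mathrm{zero}}$.} Make each failure term at most $\eta/2$, using linear growth $M_s\ge C_-s$.

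For the stability term, $M_\tau\ge C_-\tau$, so it suffices that
\[
C_-\tau(\Delta_0-\Delta)^2/2\ge\log(4KT/\eta),
\]
which gives the first entry of the max.

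For the identification term, $\sum_{s\le\tau}M_s\ge C_-\tau(\tau+1)/2$. It then suffices that
\[
\tfrac{C_-\tau^2}{2K}-1\ge \tfrac{8}{\Delta^2}\log(4K/\eta),
\]
which gives $\tau\gtrsim\sqrt{K\log(K/\eta)/(C_-\Delta^2)}$, the second entry.

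Substituting into Step 1 yields $\bar R_T\le\tau_{\mathrm{zero}}+\eta(T-\tau_{\mathrm{zero}})$. If $\eta\le\tau_{\mathrm{zero}}/T$, then $\eta(T-\tau_{\mathrm{zero}})\le\tau_{\mathrm{zero}}$, so $\bar R_T=O(\tau_{\mathrm{zero}})$.

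\textbf{Main obstacle.} The delicate point is Step 2's reliance on Hoeffding-type concentration for $\widehat{\bar V}_\tau(i)$ with $N_i$ effective samples. Under heterogeneous rewards, samples from different agents have different means. The estimator therefore has to be the agent-weighted average whose expectation is $\bar V_\tau(i)$, which only fits the lemma's assumed form if each agent contributes comparably many samples.

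I would handle this by invoking the concentration assumption of Lemma~\ref{lem:prob-good-event} directly rather than re-deriving it. I would note that under round-robin every active agent samples every arm, and bounded independent rewards with Hoeffding still apply to the sum. The constant $-1$ and the integer rounding of $\tau$ are absorbed into the $O(\cdot)$.
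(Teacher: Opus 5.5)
Your proposal follows the paper's proof step for step. You bound $\bar R_T\le\tau+\mathbb{P}(\mathcal{G}_{\tau,\Delta}^c)(T-\tau)$ via Corollary~\ref{cor:stable-arm-unconditional-general}, plug in the zero-knowledge failure probability from Lemma~\ref{lem:prob-good-event} with the round-robin sample count, and choose $\tau$ so that both failure terms are of order $\eta$.

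One caveat, which you share with the paper (the statement itself asserts it): the count $N_{\min}^{\mathrm{zero}}(\tau)\ge K^{-1}\sum_{s\le\tau}M_s-1$ fails for a growing population. The arm pulled first in each block of $K$ rounds can fall short of the average by order $M_\tau$, and some arm has no samples at all when $\tau<K$. Using that $M_s$ is nondecreasing when $\lambda_D=0$, the correct bound is $N_{\min}^{\mathrm{zero}}(\tau)\ge K^{-1}\sum_{s=1}^{\tau-K+1}M_s$. This adds an additive $O(K)$ term to $\tau_{\mathrm{zero}}$ that is not absorbed by the constant $-1$ you discuss.
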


\begin{proof}[Proof of Corollaries~\ref{cor:stable-regret-linear}--\ref{cor:stable-regret-zero}]
All four results follow from Corollary~\ref{cor:stable-arm-unconditional-general}.
The only difference across models is the bound on
\(\mathbb{P}(\mathcal{I}(\tau,\Delta)^c)\), which determines the second term in
\(\delta_G(\tau,\Delta)\). The stability term
\(\delta_{\mathrm{stab}}(\tau,\Delta)\) is common because it depends only on the
active population average \(\bar V_t\), the population gap \(\Delta_0\), and the
population size \(M_\tau\). 

For the linear and nonlinear parametric models, substituting
\(\varepsilon=\Delta/4\) into the corresponding estimator concentration
inequality gives the stated identification terms. For the clustered model, a
union bound over \(K\) arms and \(C\) clusters gives the factor \(KC\), and an
additional cluster-discovery term is included if some clusters are initially
unseen. For the zero-knowledge model, identification relies only on direct
burn-in samples; under round-robin exploration,
\(N_{\min}^{\mathrm{zero}}(\tau)\ge K^{-1}\sum_{s=1}^{\tau}M_s-1\). 

In each case, choosing \(\tau\) so that both the stability and identification
failure terms are at most order \(\eta\) yields
\(\delta_G(\tau,\Delta)\le\eta\). Applying
\(\bar R_T\le\tau+\delta_G(\tau,\Delta)(T-\tau)\) gives the displayed regret
bounds. If \(\eta\le\tau/T\), the failure contribution is \(O(\tau)\), so
\(\bar R_T=O(\tau)\).
\end{proof}

\newpage 

\end{document}